\theoremstyle{plain}
\newtheorem{theorem}{Theorem}[section]
\newtheorem{lemma}[theorem]{Lemma}
\theoremstyle{definition}
\newtheorem{definition}[theorem]{Definition}
\newtheorem{assumption}[theorem]{Assumption}
\theoremstyle{remark}
\newtheorem{remark}[theorem]{Remark}
\icmltitlerunning{Approximation and Estimation Ability of Transformers for Sequence-to-Sequence Functions with Infinite Dimensional Input}
\newcommand{\softmax}{\operatorname{Softmax}}
\newcommand{\clip}{\operatorname{clip}}
\newcommand{\enc}{\operatorname{Enc}}
\newcommand{\domain}{[0, 1]^{d\times \infty}}
\newcommand{\R}{\mathbb{R}}
\newcommand{\Z}{\mathbb{Z}}
\newcommand{\N}{\mathbb{N}}
\newcommand{\floor}[1]{\lfloor #1 \rfloor}
\newcommand{\supp}{\operatorname{supp}}
\DeclareMathAlphabet{\mymathbb}{U}{BOONDOX-ds}{m}{n}
\newcommand{\poly}{\operatorname{poly}}
\newcommand{\close}[1]{\overset{#1}{\eqsim}}
\begin{document}

\twocolumn[
	\icmltitle{Approximation and Estimation Ability of Transformers \\
		for Sequence-to-Sequence Functions with Infinite Dimensional Input}



	\icmlsetsymbol{equal}{*}

	\begin{icmlauthorlist}
		\icmlauthor{Shokichi Takakura}{todai,riken}
		\icmlauthor{Taiji Suzuki}{todai,riken}
	\end{icmlauthorlist}

	\icmlaffiliation{todai}{Department of Mathematical Informatics, the University of Tokyo, Tokyo, Japan}
	\icmlaffiliation{riken}{Center for Advanced Intelligence Project,
RIKEN, Tokyo, Japan}

	\icmlcorrespondingauthor{Shokichi Takakura}{masayoshi361@g.ecc.u-tokyo.ac.jp}

	\icmlkeywords{Machine Learning, Transformer, non-parametric regression}

	\vskip 0.3in
]



\printAffiliationsAndNotice{} 

\begin{abstract}
	Despite the great success of Transformer networks in various applications such as natural language processing and computer vision, their theoretical aspects are not well understood.
	In this paper, we study the approximation and estimation ability of Transformers as sequence-to-sequence functions with infinite dimensional inputs.
	Although inputs and outputs are both infinite dimensional, we show that when the target function has anisotropic smoothness,
	Transformers can avoid the curse of dimensionality due to their feature extraction ability and parameter sharing property.
	In addition, we show that even if the smoothness changes depending on each input,
	Transformers can estimate the importance of features for each input and extract important features dynamically.
	Then, we proved that Transformers achieve similar convergence rate as in the case of the fixed smoothness.
	Our theoretical results support the practical success of Transformers for high dimensional data.
\end{abstract}

\section{Introduction}
	Transformer networks, first proposed in~\citet{vaswani_attention_2017}, empirically show high performance in various fields including natural language processing~\citep{vaswani_attention_2017}, computer vision~\citep{dosovitskiy_image_2021} and audio processing~\citep{dong_speech-transformer_2018}, where the dimensionality of inputs is relatively high.
	However, despite the growing interest in Transformer models, their theoretical properties are still unclear.

	Aside from the Transformer architecture, there exists a line of work which studied the approximation and estimation ability of
	fully connected neural networks (FNN) for certain function spaces such as H\"{o}lder spaces~\citep{schmidt-hieber_nonparametric_2020} and Besov spaces~\citep{suzuki_adaptivity_2019}.
	For example,~\citet{schmidt-hieber_nonparametric_2020} showed that FNNs with ReLU activation can achieve
	the near minimax optimal rate of the estimation error for composite functions in H\"{o}lder spaces.
	Although deep learning can achieve the near optimal rate for several function classes,
	the convergence rate is often strongly affected by the dimensionality of inputs.
	Some researches \citep{nakada_adaptive_2022,chen_nonparametric_2022} considered the settings where the data are distributed on a low dimensional manifold
	and showed that deep neural networks can avoid the curse of dimensionality.
	However, this assumption is relatively strong
	since the low dimensionality of the data manifold is easily destroyed by noise injection.
	Then,~\citet{suzuki_adaptivity_2019} showed that even if the data manifold is not low dimensional,
	deep neural networks can avoid the curse of dimensionality
	under the assumption that the target function has anisotropic smoothness.
	Moreover,~\citet{okumoto_learnability_2022} showed that (dilated) convolutional neural networks (CNN) can avoid the curse of dimensionality
	even if inputs are infinite dimensional.

	Although the learnability of FNNs and CNNs has been intensively studied,
	that of Transformer networks is not well understood.
	There are some researches \citep{edelman_inductive_2022,gurevych_rate_2022} which studied the learning ability of Transformers.
	\citet{edelman_inductive_2022} evaluated the capacity of Transformer networks and derived the sample complexity to learn sparse Boolean functions.
	Since they investigated discrete inputs, the smoothness of the target function was not considered.
	\citet{gurevych_rate_2022} studied binary classification tasks and proved that Transformer networks can avoid the curse of dimensionality when a posteriori probability is represented by the hierarchical composition model with H\"{o}lder smoothness.
	However, these studies have some limitations.
	First, the previous works considered the fixed length input, although Transformer networks can be applied to sequences of any length due to the parameter sharing property, even if the length is infinite.
	Indeed, Transformers are often applied to very high dimensional data such as images and languages,
	and the learnability of Transformers for such extremely high dimensional data is still unclear.
	Second, their analysis is limited to the single output setting.
	In some applications such as question-answering, it is necessary to learn a function that maps an input sequence to an output sequence.
	Transformer networks can be applied to such situations and achieve great practical success as represented by BERT~\citep{devlin_bert_2019}
	although output is also high dimensional.
	Finally, in these studies, the intrinsic structure of target functions does not depend on each input and the pattern of attention weights does not change.
	This is in contrast to the dynamic nature of self-attention matrices observed in practice~\citep{likhosherstov_expressive_2021}.

	In this paper, we consider the non-parametric regression problems and study the approximation and estimation ability of Transformers for \textit{sequence-to-sequence} functions with \textit{infinite} dimensional inputs. 
        In the high dimensional setting, the dependence of the target function on inputs varies depending on the direction.
        For example, in image classification, the target function is more dependent on foreground features than background features.
        To deal with such situations, we consider \textit{direction-dependent} smoothness.
	Then, we derive the convergence rate of errors for mixed and anisotropic smooth functions
	and show that Transformer networks can avoid the curse of dimensionality.
	In addition, we consider the setting where the position of important features changes depending on each input
	and show that Transformer networks can avoid the curse of dimensionality, which reveals the \textit{dynamical} feature extraction ability of self-attention mechanism.
	Our contribution can be summarized as follows.
	\begin{itemize}
		\item We derive the convergence rate of approximation and estimation error for shift-equivariant functions with the mixed or anisotropic smoothness.
			We show that the errors are dependent only on the smoothness of the target function and independent of the input and output dimension.
			This means that Transformers can avoid the curse of dimensionality even if the dimensionality of inputs and outputs is infinite.
		\item We consider the situation where the smoothness of each coordinate, which corresponds to the importance of each feature, changes depending on inputs and derive the similar convergence rate to the case of the fixed smoothness.
	\end{itemize}
	\subsection{Other Related Works}
		\citet{yun_are_2020, zaheer_big_2020} proved that Transformers with learnable positional encodings are universal approximators of
		continuous sequence-to-sequence functions with compact support,
		but the results suffer from the curse of dimensionality. This is unavoidable as mentioned in~\citet{yun_are_2020}.
		To derive meaningful convergence results, it is necessary to restrict the function class.
		From this perspective,~\citet{edelman_inductive_2022} investigated sparse Boolean functions
		and~\citet{gurevych_rate_2022} studied the hierarchical composition model.
		However, our analysis imposes smoothness structure on target functions more directly compared to these studies.

		The mixed and anisotropic smooth function spaces which we consider in this study are extensions of the functions investigated in~\citet{okumoto_learnability_2022}.
		In addition, the function spaces can be seen as an infinite dimensional counterpart of the mixed Besov space~\citep{schmeisser_unconditional_1987} and anisotropic Besov space~\citep{nikol_approximation_1975}.
		From the deep learning perspective, the approximation and estimation error of FNNs for the mixed Besov space and anisotropic Besov space was analyzed in~\citet{suzuki_adaptivity_2019} and~\citet{suzuki_deep_2021}, respectively.
		However, it is not trivial to extend these results to Transformer architecture and multiple output setting.

		In this paper, we also consider the piecewise $\gamma$-smooth function class.
		This function class is inspired by the piecewise smooth functions, which was investigated in~\citet{petersen_optimal_2018, imaizumi_deep_2019},
		but these studies did not consider anisotropic smoothness and Transformer networks.

		There are some studies that investigated the theoretical properties of Transformer networks from different perspective than ours.
		\citet{jelassi_vision_2022} analyzed simplified Vision transformers and showed that they can learn the spatial structure via gradient descent.
		\citet{zhang_analysis_2022} studied the self-attention mechanism from the perspective of exchangeability and proved that Transformer networks can learn desirable representation of input tokens.
		\citet{perez_turing_2019} showed the Turing completeness of Transformers
		and~\citet{wei_statistically_2021} introduced the notion of \textit{statistically meaningful approximation} and gave the sample complexity to approximate Boolean circuits and Turing machines.
		\citet{likhosherstov_expressive_2021} showed that a self-attention module with fixed parameters can approximate any sparse matrix by designing an input appropriately.
	\subsection{Notations}
		Here, we prepare the notations. 
		For $l \in \N$, let $[l]$ be the set $\qty{1, \dots, l}$ and for $l, r \in \Z~(l \leq r)$, let $[l:r]$ be the set $\qty{l, \dots, r}$.
		For a set $\mathbb{S} \subset \R$ and $d \in \N$, let
		\begin{align*}
                \mathbb{S}^{d\times \infty}   & := \qty{[\dots, s_{-1}, s_0, s_1, \dots, s_i, \dots] \mid s_i \in \mathbb{S}^d}, \\
			\mathbb{S}_0^{d\times \infty} & := \qty{s \in (\mathbb{S}\cup \qty{0})^{d\times \infty} \mid \abs{\supp(s)} < \infty},
		\end{align*}
		where $\supp(s)$ is defined as $\qty{(i, j) \in [d] \times \Z \mid s_{i, j} \neq 0}$.
            Similary, $\mathbb{S}^{d\times [l:r]}$ denotes the set $\qty{[s_l, \dots s_r] \mid s_j \in \mathbb{S}}$.
            For $X = [\dots, x_0, x_1, \dots] \in \R^{d\times \infty}$, $X[l:r]$ denotes $[x_l, \dots, x_r] \in \R^{d\times [l:r]}$.
		For $s \in \R_0^{d\times \infty}$, let $2^s := 2^{\sum_{i \in [d], j \in \Z}s_{ij}}$.
		For $X \in \R^{d \times \infty}$, $\norm{X}_\infty$ denotes $\sup_{i \in [d], j \in \Z}\abs{X_{i, j}}$ and for $x \in \R^l$, $\norm{x}_1$ denotes $\sum_{i=1}^l \abs{x_i}$.
		For $F:\Omega \to \R^{l}$, let $\norm{F}_\infty := \sup_{X\in \Omega} \norm{F(X)}_\infty$.
		For the probability measure $P_X$ on $\Omega$ and $p > 0$, the norm $\norm{\cdot}_{p, P_X}$ is defined by
		\begin{align*}
			\norm{f}_{p, P_X} & = \qty(\int_{\Omega} \norm{f(X)}_p^p \dd{P_X})^{1/p}.
		\end{align*}
		For a matrix $A$, let $\norm{A}_0 = \abs{\qty{(i, j) \mid A_{ij} \neq 0}}$.
		For $j \in \Z$, we define the shift operator $\Sigma_j: \R^{d\times \infty} \to \R^{d\times \infty}$ by $(\Sigma_j(X))_i = x_{i + j}$ for $X = [\dots, x_0, \dots, x_i, \dots] \in \R^{d\times \infty}$.
		For a normed space $\mathcal{F}$, we define $U(\mathcal{F})$ by $U(\mathcal{F}) := \qty{f \in \mathcal{F} \mid \norm{f}_{\mathcal{F}} \leq 1}$, where $\norm{\cdot}_\mathcal{F}$ is the norm of $\mathcal{F}$.
\section{Problem Settings}
	\subsection{Non-parametric Regression Problems}
		In this paper, we consider non-parametric regression problems with infinite dimensional inputs.
		We regard an input $X \in \domain$ as a bidirectional sequence of tokens $\qty{x_i}_{i=-\infty}^\infty ~(x_i \in \R^d)$.
		For example, each token $x_i$ corresponds to a word vector in natural language processing and an image patch in image processing~\citep{dosovitskiy_image_2021}.
		Let $P_X$ be a probability measure on $(\domain, \mathcal{B}(\domain))$. We write $\Omega$ for the support of $P_X$.
		We assume that $P_X$ is shift-invariant. That is, for any $i \in \Z$ and $B \in \mathcal{B}(\domain)$, $P_X(B) = P_X(\qty{\Sigma_i(X) \mid X \in B})$.
		In the non-parametric regression,
		we observe $n$ i.i.d. pairs of inputs $X^{(i)} \sim P_X$ and outputs $Y^{(i)} \in \R^\infty$.
		We assume that there exists a true function $F^\circ : \Omega \to \R^\infty$, and outputs $Y^{(i)}$ is given by
		\begin{align*}
			Y^{(i)} & := F^\circ (X^{(i)}) + \xi^{(i)},
		\end{align*}
		where the noise $\xi^{(i)}_j$ follows the normal distribution $N(0, \sigma^2)~(\sigma > 0)$ independently.
		We also assume that $\qty{\xi^{(i)}}_{i=1}^n$ are independent of $\qty{X^{(i)}}_{i=1}^n$.
		Note that unlike~\citet{okumoto_learnability_2022}, we do \textit{not} assume that the Radon-Nikodym derivative $\frac{\dd{P_X}}{\dd{\lambda}}$ for the uniform distribution $\lambda$ on $(\domain, \mathcal{B}(\domain))$ satisfies $\norm{\frac{\dd{P_X}}{\dd{\lambda}}}_\infty < \infty$.

		Based on the observed data $\mathcal{D}^n:=\qty{(X^{(i)}, Y^{(i)})}_{i=1}^n$,
		we compute an estimator $\hat F$ which takes its value in the class of Transformer networks.
		To evaluate the statistical performance of an estimator $\hat F$, we consider the mean squared error
		\begin{align*}
			R_{l, r}(\hat F, F^\circ) = \frac{1}{r - l + 1}\sum_{i=l}^r \mathbb{E}\qty[\norm{\hat F_i - F^\circ_i}_{2, P_X}^2],
		\end{align*}
		where the expectation is taken with respect to the training data $\mathcal{D}^n$.
		Here, to avoid the convergence argument, we consider a finite number of outputs ($Y^{(i)}_{l}, \dots, Y^{(i)}_r$),
		but we show later that the convergence rate of estimation error does not depend on $l$ and $r$.

		In this paper, we consider an empirical risk minimization (ERM) estimator, which is defined as a minimizer of the following minimization problem:
		\begin{align*}
			\min_{F \in \mathcal{T}} \sum_{i=1}^n \sum_{j=l}^r \qty(F(X^{(i)})_j - Y^{(i)}_j)^2,
		\end{align*}
		where $\mathcal{T}$ is supposed to be the set of Transformer networks defined in Eq.~\eqref{eq:transformer-networks}.
		Note that an ERM estimator $\hat F$ is a random variable which depends on the training dataset $\mathcal{D}^n$.
		In practice, it is difficult to solve the problem due to the non-convexity of the objective function.
		Some studies~\citep{huang_improving_2020, jelassi_vision_2022} investigated the optimization aspect of Transformers,
		but we do not pursue this direction in this study.

	\subsection{Transformer Architecture}
		Transformer architecture has three main components:
		\vspace{-0.8\baselineskip}
		\begin{enumerate}
			\renewcommand{\labelenumi}{(\roman{enumi})}
			\setlength{\itemsep}{0mm} 
			\item (position-wise) FNN layer.
			\item Self-attention layer.
			\item Embedding layer.
		\end{enumerate}

		(i) First, we introduce FNN layers.
		An FNN with depth $L$ and width $W$ is defined as
		\begin{align*}
			f(x) := (A_L\eta(\cdot) + b_L) \circ \dots \circ (A_1x + b_1),
		\end{align*}
		where $A_i \in \R^{d_{i + 1} \times d_i}, b_i \in \R^{d_{i+1}}$, $\max_i d_i \leq W$, and ReLU activation function $\eta(x) = \max\qty{x, 0}$ is operated in an element-wise manner.
		Then, we define the class of FNN with depth $L$, width $W$, norm bound $B$ and sparsity $S$ by
		\begin{align*}
			\Psi(L, W, S, B) := & \left\{f \mid \max_i \qty{\norm{A_i}_\infty, \norm{b_i}_\infty} \leq B,\right. \\
			                    & \quad \left. \sum_{i=1}^L \norm{A_i}_0 + \norm{b_i}_0 \leq S \right\}.
		\end{align*}

		(ii) Next, we define the self-attention layer. In this paper, we consider the sliding window attention,
		which is used in some practical architectures such as Longformer~\citep{beltagy_longformer_2020} and Big Bird~\citep{zaheer_big_2020}.
		To focus on local context, the sliding window attention restricts the receptive field to the input around each token.
		Let $D$ be the embedding dimension, $H$ be the number of head, and $U$ be the window size.
		Then, self-attention layer $g$ with parameters $K_h \in \R^{D' \times D}, Q_h \in \R^{D' \times D}, V_h \in \R^{D\times D}~(D' \leq D, h=1, \dots, H)$ is defined by
		\begin{align*}
			g(X)_i & := x_i + \sum_{h=1}^H V_h X[i-U:i+U] A_h,
		\end{align*}
		where
		\begin{align*}
			A_h & := \softmax((K^{(h)}X[i-U:i+U])^\top(Q^{(h)}x_i)), \\
			    & \in \R^{[i-U:i+U]}.
		\end{align*}
		Here, $\softmax: \R^l \to \R^l$ is defined by
		\begin{align*}
			\softmax(x) = \qty[\frac{e^{x_{1}}}{\sum\limits_{j \in [l]} e^{x_{j}}}, \dots, \frac{e^{x_{l}}}{\sum\limits_{j \in [l]} e^{x_{j}}}]^\top.
		\end{align*}
		Then, we define the class of self-attention layers with the window size $U \in \N$, the embedding dimension $D$, the number of head $H$, and the norm bound $B$ by
		\begin{align*}
			 & \mathcal{A}(U, D, H, B)                                                                           \\
			 & \quad := \qty{g \mid \max_h \qty{\norm*{K_h}_\infty,\norm*{Q_h}_\infty, \norm*{V_h}_\infty} \leq B}.
		\end{align*}

		(iii) Finally, we define the embedding layer.
		For embedding dimension $D$, an embedding layer is defined as
		\begin{align*}
			\enc_P(X) & = EX + P,
		\end{align*}
		where $E \in \R^{D \times d},~P = [p_i]_{i=-\infty}^\infty \in \R^{D \times \infty}$.
		Here, $P$ is called a positional encoding.
		Since position-wise FNN and self-attention layers are permutation equivariant,
		a positional encoding is often added to break the equivariance when positional information is important.
		Sometimes, a learnable positional encoding is used, but we consider that $P$ is fixed since $P$ is infinite dimensional in our setting.
		Relative positional encoding~\citep{shaw_self-attention_2018} is another way to encode the positional information, but it requires extra trainable parameters.
		Therefore, we consider the absolute positional encoding in this paper.

		We define the class of transformers by
		\begin{align*}
			 & \mathcal{T}(M, U, D, H, L, W, S, B)                                                                          \\
			 &\quad := \left\{f_M \circ g_M \circ \dots \circ f_1 \circ g_1 \circ \enc_P \mid \norm{E}_\infty \leq B, \right.                    \\
			 &\quad \left. f_i \in \Psi(L, W, S, B), g_i \in \mathcal{A}(U_i, D, H, B)\right\},
		\end{align*}
		where FNN is applied column-wise.
            Thanks to the parameter sharing property, $F \in \mathcal{T}$ can represent a function from $\domain$ to $\R^\infty$ even though it has a finite number of parameters.
		In order to derive the estimation error for a model class $\mathcal{T}$,
		it is convenient to assume that there exists a constant $R > 0$ such that $\norm{f}_\infty \leq R$ for any $f \in \mathcal{F}$
		since this assumption ensure the sub-Gaussianity of $f(X^{(i)})$.
		To ensure this property, we define the class of (clipped) Transformer networks by
		\begin{align}
			\mathcal{T}_R := \qty{\tilde F = \clip_R \circ F \mid F \in \mathcal{T}},\label{eq:transformer-networks}
		\end{align}
		where $\clip_R(x) := R \wedge (x \vee -R)$ is applied element-wise. Note that $\clip_R$ can be realized by ReLU units.

        For simplicity, we consider a modified version of the original architecture in~\citet{vaswani_attention_2017}.
		That is, we consider the multilayer FNNs without skip connection instead of the single layer FNNs with skip connection.
		However, our argument can be applied to the original architecture with a slight modification.
		See Appendix~\ref{sec:modified} for details.

\section{Function Spaces}
	In this paper, we assume the true function $F^\circ$ is shift-equivariant.
	A function $F: \Omega \to \R^{d' \times \infty}$ is called shift-equivariant if $F$ satisfies
	\begin{align*}
		F(\Sigma_j(X)) = \Sigma_j(F(X)),
	\end{align*}
	for any $j \in \Z$ and $X \in \Omega$.
	Such equivariance appears in various applications such as natural language processing, audio processing, and time-series analysis.
	We also assume that $F_i(X) := (F(X))_i$ is included in a certain function class which is defined in this section.

	\subsection{Anisotropic and Mixed Smoothness}
		First, we introduce the $\gamma$-smooth function class.
		This is an extension of the function class in~\citet{okumoto_learnability_2022}
		to the situation where the inputs are bidirectional sequences of tokens.
		For $r \in \Z_0^{d\times \infty}$, we define $\psi_{r_{ij}}: [0, 1] \to \R$ by
		\begin{align*}
			\psi_{r_{ij}}(x) & := \begin{cases}
				\sqrt{2}\cos(2\pi\abs{r_{ij}}x) & \quad (r_{ij} < 0), \\
				1                               & \quad (r_{ij} = 0), \\
				\sqrt{2}\sin(2\pi\abs{r_{ij}}x) & \quad (r_{ij} > 0),
			\end{cases}
		\end{align*}
		and $\psi_r:\domain \to \R$ by $\psi_r(X) = \prod_{i=1}\prod_{j=1} \psi_{r_{ij}}(X_{ij})$.
		Since $\qty{\psi_r}_{r \in \Z_0^{d\times \infty}}$ is a complete orthonormal system of $L^2(\domain)$, any $f\in L^2(\domain)$ can be expanded as
		$f = \sum_{r\in \Z_0^{d\times \infty}} \ev{f, \psi_r}\psi_r$.
		For $s \in \N_0^{d\times \infty}$, define $\delta_s(f)$ as
		\begin{align*}
			\delta_s(f) = \sum_{r\in \Z_0^{d\times \infty}, \floor{2^{{s_{ij}-1}}}\leq r_{ij}<2^{s_{ij}}}\ev{f, \psi_r}\psi_r.
		\end{align*}
		This quantity represents the frequency component of $f$ with frequency $\abs{r_{ij}} \sim 2^{s_{ij}}$ for each coordinate.
		Then, we define the $\gamma$-smooth function class as follows.
		\begin{definition}[$\gamma$-smooth function class]
			For a given $\gamma:\N_0^{d\times \infty}\to \R$ which is monotonically non-decreasing with respect to each coordinate and $p\geq 2, \theta \geq 1$,
			we define the $\gamma$-smooth function space as follows:
			\begin{align*}
				\mathcal{F}_{p, \theta}^\gamma(\domain) & := \qty{f \in L^2(\domain) \mid \norm{f}_{\mathcal{F}_{p, \theta}^\gamma} < \infty},
			\end{align*}
			where the norm $\norm{f}_{\mathcal{F}_{p, \theta}^\gamma}$ is defined as
			\begin{align*}
				\norm{f}_{\mathcal{F}_{p, \theta}^\gamma} & := \qty(\sum_{s\in \N_0^{d\times \infty}} 2^{\theta \gamma(s)} \norm{\delta_s(f)}_{p, P_X}^\theta)^{1/\theta}.
			\end{align*}
			We also define the finite dimensional version of $\gamma$-smooth function space $\mathcal{F}_{p, \theta}^\gamma([0, 1]^{d\times l})$ for $l \in \N$ in the same way.
		\end{definition}
		Since $\delta_s(f)$ represents the frequency component of $f$ with frequency $\abs{r_{ij}} \sim 2^{s_{ij}}$ and weight $2^{\gamma(s)}$ is imposed on $\norm{\delta_s(f)}_p$,
		$\gamma$ controls the amplitude of each frequency component.

		As a special case of $\gamma$, we consider the mixed and anisotropic smoothness.
		\begin{definition}[Mixed and anisotropic smoothness]
			For $a \in \R_{>0}^{d\times \infty}$, mixed smoothness and anisotropic smoothness is defined as follows:
			\vspace{-1\baselineskip}
			{
				\setlength{\leftmargini}{10pt}
				\begin{itemize}
					\setlength{\itemsep}{0mm}
					\item mixed smoothness:
						\begin{align*}
							\gamma(s) & = \ev{a, s}.
						\end{align*}
					\item anisotropic smoothness:
						\begin{align*}
							\gamma(s) & = \max\qty{a_{ij}s_{ij}\mid i \in [d], j \in \Z}.
						\end{align*}
				\end{itemize}
			}
		\end{definition}
		The parameter $a$ represents the smoothness for the coordinate $X_{i, j}$.
		That is, if $a_{ij}$ is large, the function is smooth with respect to the variable $X_{i, j}$.
		In other words, small $a_{ij}$ implies that the function is not smooth towards the coordinate $(i, j)$ and $X_{ij}$ is an \textit{important feature}.

        When $d = 1$, $p=\theta=2$, and $P_X$ is the uniform distribution on $[0, 1]^l$, as shown in~\citet{okumoto_learnability_2022}, the anisotropic smooth function space $\mathcal{F}_{p, \theta}^\gamma([0, 1]^{l})$ includes the anisotropic Sobolev space:
        $
            \mathcal{W}_2^a := \qty{f \in L^2([0, 1]^l) \mid \sum_{i=1}^l \norm{\pdv[a_i]{f}{x_i}}_2^2 < \infty}
        $
        .
        Isotropic Sobolev spaces are a special case of anisotropic Sobolev spaces with $a_i = a_1~(\forall i\in [l])$.
        In that sense, $\mathcal{F}_{p, \theta}^\gamma([0, 1]^{d\times \infty})$ is an extension of the finite dimensional Sobolev space.
        
		Here, we define some quantities regarding the smoothness parameter $a$.
		Let $\bar a = \qty{\bar a_i}_{i=1}^\infty$ be the sorted sequence in the ascending order.
		That is, $\bar a = [a_{i_1, j_1}, \dots, a_{i_k, j_k}, \dots]$ satisfies $a_{i_k, j_k} \leq a_{i_{k+1}, j_{k+1}}$ for any $k \in \N$.
		Then, weak $l^\alpha$-norm for $\alpha > 0$ is defined by
		$
			\norm{a}_{wl^\alpha} := \sup_j j^\alpha \bar a_j^{-1},
		$
		and $\tilde a$ is defined by
		$
			\tilde a := \qty(\sum_{i=1}^\infty \bar a_i^{-1})^{-1}
		$.
  To simplify the notation, we define $a^\dagger = \bar a_1$ for the mixed smoothness and $a^\dagger = \tilde a$ for the anisotropic smoothness.

	\subsection{Piecewise Anisotropic and Mixed Smoothness}
		The mixed and anisotropic smooth functions represent the situations where the smoothness depends on the direction.
		However, the smoothness does not depend on each input. That is, the position of important tokens is fixed for \textit{any input}.
		This is not the case in practical situations.
		For example, in natural language processing, the positions of important words should change if a meaningless word is inserted in the input sequence.
		Therefore, it is natural to assume that the smoothness for each coordinate changes depending on each input.
		To consider such situations, we define a novel function class called \textit{piecewise} $\gamma$-smooth function class.
		\begin{definition}[Piecewise $\gamma$-smooth function class]
			For an index set $\Lambda$, let $\qty{\Omega_\lambda}_{\lambda\in \Lambda}$ be a disjoint partition of $\Omega$. That is, $\qty{\Omega_\lambda}_{\lambda\in \Lambda}$ satisfies
			\begin{align*}
				\Omega = \bigcup_{\lambda \in \Lambda} \Omega_{\lambda},~\Omega_\lambda \cap \Omega_{\lambda'} = \emptyset\quad (\lambda\neq \lambda').
			\end{align*}
			For $V \in \N$ and a set of bijections $\qty{\pi_\lambda}_{\lambda \in \Lambda}$ between $[2V+1]$ and $[-V:V]$,
			define $\Pi_\lambda: \R^{d\times [-V:V]}\to \R^{d\times (2V+1)} $ and $\Pi: \Omega \to \R^{d \times (2V+1)}$ by
			\begin{align*}
				\Pi_\lambda([x_{-V}, \dots, x_{V}]) & := [x_{\pi_\lambda(1)}, \dots, x_{\pi_\lambda(2V+1)}],      \\
				\Pi(X)                              & := \Pi_\lambda(X[-V:V]) \text{ if } X \in \Omega_\lambda.
			\end{align*}
			Then, for $p \geq 2, \theta \geq 1$ and $\gamma:\N_0^{d\times \infty} \to \R$, the function class with piecewise $\gamma$-smoothness is defined as follows:
			\begin{align*}
				\mathcal{P}_{p, \theta}^\gamma(\Omega) & := \left\{g = f \circ \Pi \mid \right.                                                                                             \\
				                                       & \left. f \in \mathcal{F}_{p, \theta}^\gamma([0, 1]^{d\times (2V+1)}), \norm{g}_{\mathcal{P}_{p, \theta}^\gamma} < \infty \right\},
			\end{align*}
			where the norm $\norm{g}_{\mathcal{P}_{p, \theta}^\gamma}$ is defined by
			\begin{align*}
				\norm{g}_{\mathcal{P}_{p, \theta}^\gamma} & := \qty(\sum_{s \in \N_0^{d\times [-V:V]}} 2^{\theta\gamma(s)}\norm{\delta_s(f) \circ \Pi}_{p, P_X}^\theta)^{1/\theta}.
			\end{align*}
		\end{definition}
		On each domain $\Omega_\lambda$, a piecewise $\gamma$-smooth function $g$ can be seen as the restriction of a certain $\gamma$-smooth function $g_\lambda$.
		In addition, considering the mixed or anisotropic smoothness, the smoothness parameter of $g_\lambda$ is a permutation of the original smoothness parameter $a$.
		Therefore, the relatively smooth directions of $g$ change depending on each input. This situation is shown in Fig.~\ref{fig:piecewise}.

		In this paper, we assume that there exists an \textit{importance function}, defined as follows.
        \begin{definition}[importance function]
            A function $\mu: \Omega \to \R^{\infty}$ is called an \textit{importance function} for $\qty{\Omega_\lambda}_{\lambda \in \Lambda}$ if $\mu$ satisfies
            \begin{align*}
                \Omega_\lambda & =\{X \in \Omega \mid \mu(X)_{\pi_{\lambda}(1)} > \cdots > \mu(X)_{\pi_{\lambda}(2V+1)}\}.
            \end{align*}
	\end{definition}
        Here, we briefly explain the intuition behind the definition. 
        For $X\in \Omega$, let $X' = \Pi(X)$. Assume that $x'_i$ is more important than $x'_{i+1}$.
        From the definition of $\Pi$, we have $x'_{i} = x_{\pi_{\lambda}(i)}$ when $X \in \Omega_\lambda$.
        Therefore, the token $x_{\pi_{\lambda}(i)}$ is more important than $x_{\pi_{\lambda}(i+1)}$.
        The definition of the importance function reflects this relationship.
	We also assume that an importance function $\mu$ is \textit{well-separated}. That is, $\mu$ satisfies
		\begin{align}
			\mu(X)_{\pi_{\lambda}(i)} \geq \mu(X)_{\pi_{\lambda}(i+1)} + c i^{-\beta}, \label{eq:importance}
		\end{align}
		for any $X \in \Omega_\lambda$, where $c, \beta > 0$ is a constant. This implies that the probability that $X$ satisfies $\mu(X)_i \simeq \mu(X)_j~(i\neq j)$ is zero.
            Similar assumption can be found in the analysis of infinite dimensional PCA~\citep{hall_methodology_2007}.
            We will assume later that $\mu$ has mixed or anisotropic smoothness.

\section{Approximation Error Analysis}\label{sec:approximation}
	In this section, we study the approximation ability of Transformers in the case that the target function has (piecewise) anisotropic or mixed smoothness.
	Our analysis shows that Transformer networks can approximate shift-equivariant functions under appropriate assumptions even if inputs and outputs are infinite dimensional.
	This is in contrast to the analysis for any continuous functions~\citep{yun_are_2020}, where the number of parameters increases exponentially with respect to the input dimensionality.

        \subsection{Mixed and Anisotropic Smoothness}
		First, we derive the approximation error of Transformer networks for the mixed and anisotropic smoothness.
		In our analysis, we assume the following. Similar assumption can be found in~\citet{okumoto_learnability_2022}.
		\begin{assumption}\label{assumption:anisotropic}
			The true function $F^\circ$ is shift-equivariant and satisfies
			\begin{align*}
				F^\circ_0 & \in U(\mathcal{F}_{p, \theta}^\gamma),~\norm{F_0}_{\infty} \leq R,
			\end{align*}
			where $R > 0$ is a constant and $\gamma$ is mixed or anisotropic smoothness.
			In addition, the smoothness parameter $a$ satisfies $\norm{a}_{wl^\alpha} \leq 1$ for some $0 < \alpha < \infty$
			and $a_{ij} = \Omega(\log(\abs{j}+1))$.
			For the mixed smoothness, we also assume $\bar a_1 < \bar a_2$.
		\end{assumption}
		Note that the assumption implies that $F^\circ_i~(i\neq 0)$ also have mixed or anisotropic smoothness due to the shift-equivariance.
		The weak $l^\alpha$ norm condition implies the sparsity.
		Since the smoothness should increase in polynomial order if $\norm{a}_{wl^\alpha} \leq 1$,
		most of $a_{ij}$s should be large, which means there exist few important features in an input.
		This assumption is partially supported by the fact that attention weights are often sparse in practice~\citep{likhosherstov_expressive_2021}.
		On the other hand, the condition $a_{ij} = \Omega(\log(\abs{j} + 1))$ implies the locality.
		That is, $a_{ij}$ should be large if $\abs{j} \gg 1$, and thus $x_j$ is not important.
		This is introduced due to the importance of local context in some applications such as natural language processing.

		Under this assumption, the approximation error of Transformer networks is evaluated as follows.
		\begin{theorem}\label{thm:approximation}
			Suppose that the target function $F^\circ$ satisfies Assumption~\ref{assumption:anisotropic}.
			Then, for any $T > 0$, there exists a transformer network $\hat F \in \mathcal{T}(M, U, D, H, L, W, S, B)$ such that
			\begin{align*}
				\norm{\hat F_i - F^\circ_i}_{2, P_X} & \lesssim 2^{-T}
			\end{align*}
			for any $i \in \Z$, where $\phi = \frac{1}{2 U_1 + 1}$, and
			\begin{align}
				\begin{split}
					M      & = 1,~\log U_1 \sim T,~D \sim T^{1/\alpha},~H \sim T^{1/\alpha},       \\
					L      & \sim \max\qty{T^{2/\alpha}, T^2},~W \sim T^{1/\alpha}2^{T / a^\dagger}, \\
					\log B & \sim \max\qty{T^{1/\alpha}, T},                                       \\
					S      & \sim T^{2/\alpha}\max\qty{T^{2/\alpha}, T^2}2^{T / a^\dagger}            \\
					p_i    & =\qty[0, \dots, 0, \cos(i\phi), \sin(i\phi)]^\top.
				\end{split}\label{eq:hyperparameters}
			\end{align}
		\end{theorem}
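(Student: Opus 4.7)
The plan has four pieces. First, by the shift-equivariance of $F^\circ$ and the parameter sharing of the Transformer, it suffices to construct a single network whose $0$-th output approximates $F^\circ_0$; applying the same network at any other position then handles all $F^\circ_i$ simultaneously, and the shift-invariance of $P_X$ yields the same bound in $\norm{\cdot}_{2, P_X}$ at every $i$. Using the locality assumption $a_{ij} = \Omega(\log(\abs{j}+1))$, any frequency mode whose support touches coordinates $(i,j)$ with $\abs{j} > U_1$ carries smoothness weight $2^{\gamma(s)} \gtrsim 2^{\Omega(\log U_1)}$, so restricting $F^\circ_0$ to depend only on the window $X[-U_1, U_1]$ and discarding the tail contributes an $L^2(P_X)$ error $\lesssim 2^{-T}$ once $\log U_1 \sim T$.

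Second, on the restricted window I truncate the expansion $F^\circ_0 = \sum_s \delta_s(F^\circ_0)$ to a hyperbolic-cross-type sparse index set $\qty{s : \gamma(s) \le T}$. The weak $l^\alpha$ condition on $a$ ensures that only about $T^{1/\alpha}$ coordinates are ``active'' at resolution $T$, and that the retained index set has cardinality $\poly(T)\cdot 2^{T/a^\dagger}$; the residual is bounded by $2^{-T}$ via the $\mathcal{F}^\gamma_{p,\theta}$-norm bound, essentially an infinite-dimensional analogue of the mixed/anisotropic Besov approximation theorems used in \citet{suzuki_adaptivity_2019} and \citet{okumoto_learnability_2022}.

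Third, I realize this truncated expansion inside the Transformer. The positional encoding $p_i = [0,\dots,0,\cos(i\phi),\sin(i\phi)]^\top$ with $\phi = 1/(2U_1+1)$ makes distinct positions inside the window linearly separable through inner products of $K_h$ and $Q_h$, so each of $H \sim T^{1/\alpha}$ heads can be tuned (with $\log B \sim T$) to implement a near-hard softmax selection of one prescribed important position in the window and copy the corresponding token to position $0$ via $V_h$. After this single attention layer ($M=1$) the position-wise FNN receives a $D \sim T^{1/\alpha}$-dimensional vector containing the important coordinates; standard ReLU approximations of sines, cosines and products of $O(T^{1/\alpha})$ factors (via balanced multiplier trees contributing depth $\sim T^{2/\alpha}$ and per-factor trigonometric approximators contributing depth $\sim T^2$) combine them into the sum of $\sim 2^{T/a^\dagger}$ basis terms with accuracy $2^{-T}$. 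The stated $W \sim T^{1/\alpha}2^{T/a^\dagger}$, $L \sim \max\qty{T^{2/\alpha}, T^2}$ and $S$ bounds then fall out by summing head and frequency budgets, and $\clip_R$ introduces no additional error because $\norm{F^\circ_0}_\infty \le R$.

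The main obstacle will be the interface between the continuous softmax and the discrete position selection: pushing the softmax within $2^{-T}$ of a hard indicator requires logit magnitudes of order $T$, which is precisely what forces $\log B \gtrsim T$, and all $H$ heads must be simultaneously sharp uniformly over $\Omega$. The choice $\phi = 1/(2U_1+1)$ is what makes the pairwise inner products $p_i^\top p_j$ separate the $2U_1+1$ window positions without inflating $\norm{P}_\infty$. Balancing this sharpness against the sparsity of important coordinates ($T^{1/\alpha}$), the number of retained frequencies ($2^{T/a^\dagger}$), and the pointwise accuracy ($2^{-T}$) is what produces the precise hyperparameter scaling in Eq.~\eqref{eq:hyperparameters}.
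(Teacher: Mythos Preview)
Your proposal is correct and follows essentially the same route as the paper: reduce to $i=0$ via shift-equivariance and shift-invariance of $P_X$, use locality to confine the active coordinates $I(T,\gamma)$ to a window of logarithmic size $\log U_1\sim T$, have each of $H\sim T^{1/\alpha}$ heads rotate the sinusoidal positional encoding so that $k_j^\top q_0=\chi\cos((j_h-j)\phi)$ picks out the prescribed relative position $j_h$, and then feed the extracted features into the FNN approximator of Theorem~\ref{thm:approximation-fnn}. One small correction on the point you flag as the main obstacle: the scaling $\log B\sim T$ is not driven by needing a logit \emph{gap} of order $T$ per se, but by the fact that the cosine separation between adjacent window positions is only $\sim \phi^2\sim U_1^{-2}$, so the sharpening factor must satisfy $\chi\gtrsim U_1^2\cdot\mathrm{poly}(T)$, and since $\log U_1\sim T$ this gives $\log\chi\sim T$.
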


		The proof can be found in Appendix~\ref{sec:proof-approximation}.
		The results show that even though inputs and outputs are infinite dimensional, the approximation error can be bounded by $N^{-a^{\dagger}}$ ignoring poly-log factor, where $N$ denotes the number of parameters,
		since the total number of parameters is bounded by $N \lesssim M(S+HD^2) \sim 2^{T/a^\dagger}$ ignoring poly-log factor.
		This is in contrast to FNNs, where the number of parameters should increase at least linearly with the input and output length.
		The \textit{parameter sharing} property and \textit{feature extraction} ability of Transformers play an essential role in the proof.
		For anisotropic smoothnesss, the result can be seen as an extention of the result of FNN for anisotropic Besov space~\cite{suzuki_deep_2021} to infinite dimensional input and sequence-to-sequence setting.

		In addition, positional encoding is an important factor
		in extracting local context with limited interaction among tokens compared to FNN and CNN.
		Due to the shift-equivariance, Transformer networks should extract important tokens for each output by relative position.
		Since we use absolute positional encoding, it is not trivial to show that Transformers have such capability.
		Indeed, existing works~\citep{edelman_inductive_2022,gurevych_rate_2022} used absolute position to focus tokens and their analysis cannot be applied to multiple output setting with shift-equivariance.
		To overcome this issue, we adopt the \textit{sinusoidal positional encoding}
		since the shift operation $p_i \to p_{i + j}$ can be represented by linear transformation, as mentioned in~\citet{vaswani_attention_2017}.
		This allows the self-attention mechanism to attend by relative position as shown in Fig.~\ref{fig:approximation}.
		In addition, the size of the positional encoding in~\citep{edelman_inductive_2022,gurevych_rate_2022} depends on the size of the receptive field. For example, \citet{gurevych_rate_2022} used the standard basis as positional encoding and the size of the positoinal encoding grows linearly with respect to the input length.
		On the other hand, we show that fixed length positional encoding is enough for feature extraction by adjusting the scale appropriately inside the self-attention mechanism.

		\begin{figure}
			\centering
			\includegraphics[width=0.9\columnwidth]{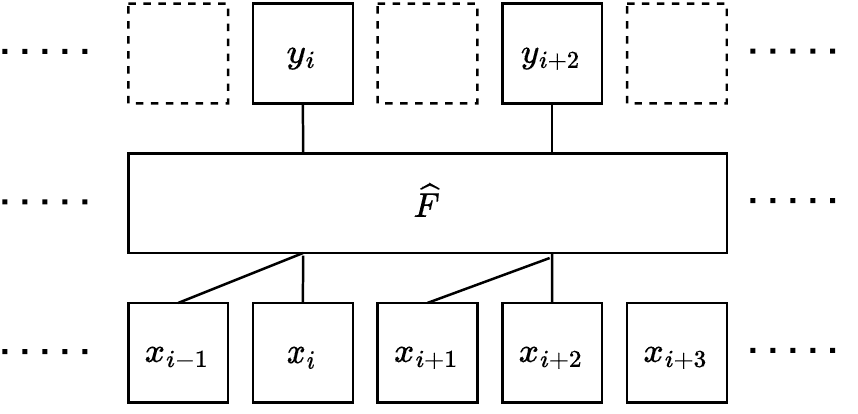}
			\caption{The self-attention mechanism can attend by relative position.
				In this diagram, each token attend to the previous token and itself.}
			\label{fig:approximation}
		\end{figure}

		\begin{remark}
			The results in Theorem~\ref{thm:approximation} can be extended to the 2D input setting like image processing
			by modifying the positional encoding as $p_{ij} = [0, \dots, 0, \cos(i\phi), \sin(i\phi), \cos(j\phi), \sin(j\phi)]$,
			where $i, j$ represent the row index and column index of the token $x_{ij}$, respectively.
			That is, Transformer can approximate both vertically and horizontally shift-equivariant functions with 2D inputs under appropriate assumptions.
			Similarly, the other results in this paper can be extended to the 2D input setting.
			Therefore, to some extent, our analysis explains the practical success of Transformers in the field of image processing~\citep{dosovitskiy_image_2021}.
		\end{remark}
	\subsection{Piecewise Smoothness}
		Next,  we derive the approximation error for the piecewise mixed and anisotropic smoothness, where the smoothness depends on each input.
            For the piecewise smoothness, we assume the following.
		\begin{assumption}\label{assumption:piecewise}
			The true function $F^\circ$ is shift-equivariant and satisfies
			\begin{align*}
				F^\circ_0 & \in U(\mathcal{P}_{p, \theta}^\gamma),~\norm{F^\circ_0}_\infty \leq R,
			\end{align*}
			where $R$ is a constant, $\gamma$ is  mixed or anisotropic smoothness, and the smoothness parameters $a$ satisfies $a_{ij} = \Omega(j^{\alpha})$ and $\norm{a}_{wl^\alpha} \leq 1$ for some $0 < \alpha < \infty$.
			For the mixed smoothness, we also assume $\bar a_1 < \bar a_2$.
			In addition, we assume the importance function $\mu$ satisfies Assumption~\ref{assumption:anisotropic} with $p = \infty, R=1$.
		\end{assumption}

		For piecewise $\gamma$-smooth functions, it is necessary to extract important features depending on each input as shown in Fig.~\ref{fig:piecewise}.
		Since $\Pi$ is not a linear operator,
		linear dimension reduction methods such as PCA and linear convolutional layers cannot approximate $\Pi$.
		However, due to the dynamical feature extraction ability of self-attention mechanism, Transformers can approximate $\Pi$ and
		achieve similar approximation error results as in the fixed smoothness setting.
		\begin{theorem}\label{thm:approximation-piecewise}
			For $d' = O(T^{2(\beta + 1)/\alpha}\log V)$, let $\{u_i\}_{i=1}^{2V + 1} \subset \R^{d'}$ be approximately orthonormal vectors which satisfies 
                \begin{align*}
        		\abs{\ev{u_i, u_j}} & \leq \varepsilon,~\ev{u_i, u_i} = 1.
        	\end{align*}
			In addition, We define $u_i$ for $i \notin [2V + 1]$ by $u_i := u_{i \mod (2V + 1)}$.
			Suppose that the target function $F^\circ$ satisfies Assumption~\ref{assumption:piecewise}.

                Then, for any $T > 0$, there exists a transformer $\hat F \in \mathcal{T}(M, U, D, H, L, W, S, B)$ such that
			\begin{align*}
				\norm{\hat F_i - F^\circ_i}_{2, P_X} \lesssim 2^{-T}
			\end{align*}
			for any $i \in \Z$, where
			\begin{align}
				\begin{split}
					M      & \sim T^{1/\alpha},~D \sim T^{2(\beta + 1)/\alpha}\log V,\\
					\log U_1 &\sim \log T,~U_i = V~(i\geq 2),~H \sim (\log T)^{1/\alpha},\\
					L &\sim \max\qty{T^{2/\alpha}, T^2},~W \sim T^{1/\alpha}2^{T/a^\dagger},\\
					\log B &\sim \max\qty{T^{1/\alpha}, T, \log \log V}\\
					S &\sim T^{2/\alpha}\max\qty{T^{2/\alpha}, T^2}2^{T / a^\dagger},\\
                    p_i &= [0, \dots, 0, 1, \cos(i\phi), \sin(i\phi), u_i^\top]^\top.
				\end{split}\label{eq:hyperparameters-piecewise}
			\end{align}
		\end{theorem}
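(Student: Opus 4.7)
The plan is to build the Transformer in three stages: first approximate the importance function $\mu$, then use self-attention layers to approximately realize the permutation operator $\Pi$ at every position, and finally apply the FNN construction of Theorem~\ref{thm:approximation} to approximate the $\gamma$-smooth function $f$ on the permuted window. Since $F^\circ$ is shift-equivariant and $P_X$ is shift-invariant, it suffices to control the error at output position $0$; all other positions inherit the same bound.

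\textbf{Stage 1 (importance).} Because $\mu$ satisfies Assumption~\ref{assumption:anisotropic} and is shift-equivariant, I would invoke Theorem~\ref{thm:approximation} with an inflated accuracy $T'\gg T$ (overhead $\poly(\beta,1/\alpha)$ absorbed into $\log B$) to obtain a Transformer sub-block with window $U_1\sim\log T$ producing $\hat\mu(X)_i\approx\mu(X)_i$ uniformly. The inflation guarantees that the ranking induced by $\hat\mu$ on each window $[i-V,i+V]$ coincides with the true ranking except on a set of negligible $P_X$-measure, by the well-separated gap $ck^{-\beta}$ in~\eqref{eq:importance}.

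\textbf{Stage 2 (dynamic permutation).} Here I would use $M\sim T^{1/\alpha}$ attention layers of window $V$ to permute the tokens around each query position by importance rank. The key device is the approximately orthonormal positional tag $u_k$: in an attention head indexed by rank $k$, the query at local offset $k$ carries $u_k$ and the keys are built from $\hat\mu$ (via the position-wise FNN) so that the resulting logit is sharply peaked at the position whose importance rank is $k$. With softmax sharpness driven by $\log B\sim\max\{T^{1/\alpha},T,\log\log V\}$ and the near-orthogonality $\abs{\ev{u_i,u_j}}\leq\varepsilon$, the attention weight concentrates on the correct index with error $2^{-T}$, and the value map writes the selected token into offset $k$, reproducing $\Pi(\Sigma_i X)$. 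Only the top $\sim T^{1/\alpha}$ ranks carry nonnegligible contribution, by the weak $l^\alpha$ sparsity in Assumption~\ref{assumption:piecewise}; this is what pins down $M\sim T^{1/\alpha}$. The embedding dimension $D\sim T^{2(\beta+1)/\alpha}\log V$ arises from needing $\log V$ bits to address the $2V+1$ tags and polynomial-in-$k^\beta$ precision to resolve the shrinking rank gaps on these top coordinates.

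\textbf{Stage 3 (smooth function).} Once the tokens of $\Pi(\Sigma_i X)$ sit in the local window at position $i$, $f$ can be approximated position-wise by the FNN construction in the proof of Theorem~\ref{thm:approximation}, reusing the bounds on $L,W,S,\log B$ because $f\in\mathcal{F}_{p,\theta}^\gamma([0,1]^{d\times(2V+1)})$ shares the smoothness parameter $a$. Combining the three stages via the Lipschitz continuity of each and choosing $T'$ large enough to absorb the Stage~2 softmax tail delivers the advertised $2^{-T}$ bound in $L^2(P_X)$ uniformly in $i$. The \emph{main obstacle} is Stage~2: simulating the hard permutation $\Pi$ by soft attention acting on an \emph{approximation} $\hat\mu$ requires simultaneously tuning the softmax sharpness, the orthogonality level $\varepsilon$, and the accuracy of $\hat\mu$ so that their combined effect is $2^{-T}$; the polynomially shrinking rank gap $ck^{-\beta}$ and the need to resolve $\sim T^{1/\alpha}$ such gaps is precisely what drives the explicit scaling of $D$, $\log B$, and $M$ in~\eqref{eq:hyperparameters-piecewise}.
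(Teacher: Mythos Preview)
Your three-stage outline matches the paper's architecture, and Stages~1 and~3 are essentially right. The gap is in Stage~2, and it is the central one.

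You write that ``in an attention head indexed by rank $k$, the query at local offset $k$ carries $u_k$ and the keys are built from $\hat\mu$ \dots\ so that the resulting logit is sharply peaked at the position whose importance rank is $k$.'' But a softmax over the scores $\hat\mu_j$ can only pick the \emph{maximum}; there is no mechanism here by which inserting the positional tag $u_k$ into the query makes the head retrieve the token of importance \emph{rank} $k$. The tags $u_j$ index positions in the window, not ranks, so $\langle u_k,\cdot\rangle$ tells you nothing about which position is $k$-th most important. As written, every one of your heads would collapse onto the single most important token.

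The paper's solution is an \emph{iterative memory} mechanism that you do not describe. In layer $m{+}1$ (for $m=1,\dots,r_{\max}$), the hidden state at position $i$ carries an accumulator $w_i^{(m)}=\sum_{m'<m} u_{r_i(m')}$, the sum of positional tags of the tokens already extracted in previous layers. The key at position $j$ is $[\mu_j,u_j^\top]^\top$ and the query is $\chi[1,-(2+cr_{\max}^{-\beta})(w_i^{(m)})^\top]^\top$, so the logit at $j$ is approximately $\chi(\mu_j-(2+cr_{\max}^{-\beta})\mathbf{1}[j\text{ already extracted}])$, using the near-orthogonality $\langle u_j,w_i^{(m)}\rangle\approx\mathbf{1}[j\in\{r_i(1),\dots,r_i(m-1)\}]$. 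This large penalty removes the previously selected tokens from contention, and since the remaining scores are ordered by $\mu$, the softmax now peaks at the $m$-th most important token; the value map then writes both the token's features and its tag $u_{r_i(m)}$ into the state, updating the accumulator for the next layer. This is why the layers must be \emph{sequential} (hence $M\sim T^{1/\alpha}$) rather than parallel heads, and it is the actual role of the approximately orthonormal system: not to label ranks, but to serve as a low-dimensional memory of which positions have already been selected. Without this mechanism your Stage~2 does not produce $\Pi$.
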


		See Appendix~\ref{sec:proof-approximation-piecewise} for the proof.
            This theorem implies that multilayer Transformer networks can approximate mixed and anisotropic smooth functions even if inputs and outputs are infinite dimensional, and the smoothness structure depends on each input.
		In addition, the convergence rate is $N^{-a^\dagger}$, which is the same as in Theorem~\ref{thm:approximation}. This can be realized by the \textit{dynamical} feature extraction ability of the self-attention mechanism.
		See Fig.~\ref{fig:piecewise} for an illustration of the feature extraction mechanism.
		The construction consists of two phases.
		First, the first layer of the Transformer network approximates the importance function $\mu$.
		Next, the Transformer network selects important tokens by the self-attention mechanism based on the estimated importance.
		Thanks to the softmax operation in the self-attention mechanism, Transformers can extract the \textit{most} important token.
		However, it is difficult to extract the second (and subsequent) most important tokens.
		To overcome this issue, we developed a novel approximately orthonormal basis coding technique for the positional encoding to memorize already extracted tokens.
		Approximately orthonormal vectors can be constructed via random sampling, as shown in Lemma~\ref{lem:approximately}.

		\begin{figure}
			\centering
			\includegraphics[width=\columnwidth]{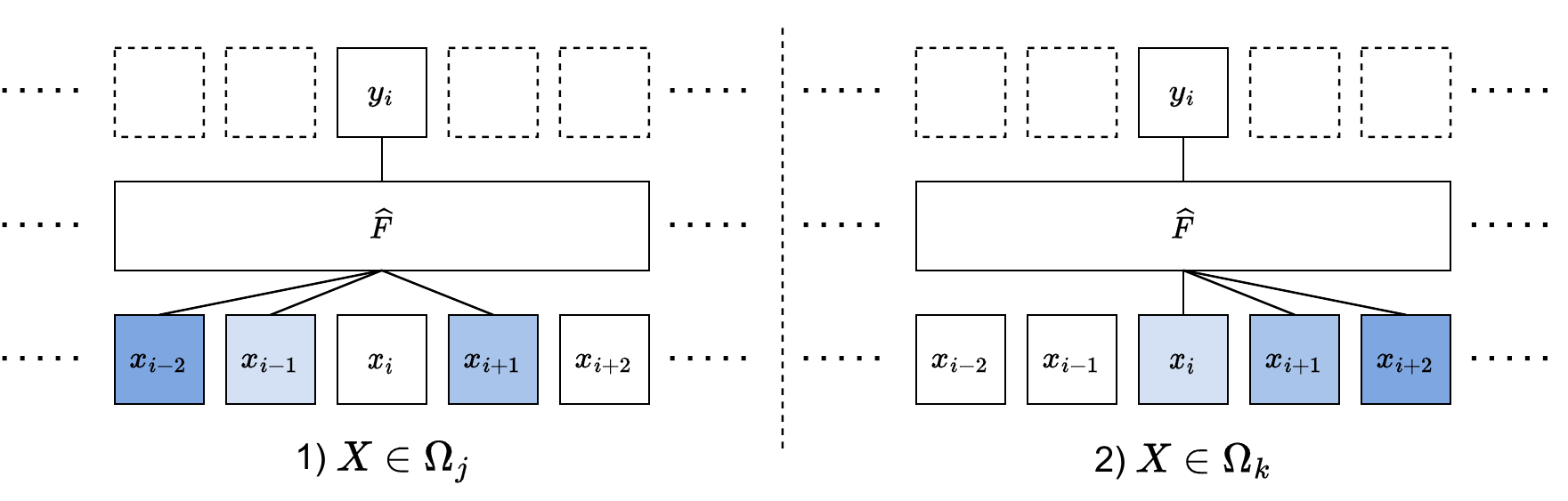}
                \vspace*{-0.5cm}
			\caption{For piecewise $\gamma$-smoothness, the position of important tokens depends on each input.
				We show important tokens in darker color.
				In the case of $X \in \Omega_j$, the most important token to $y_i$ is $x_{i-2}$
				and in the case of $X \in \Omega_k$, $x_{i+2}$ is the most important.
				The self-attention mechanism can switch its attention (represented by black lines) depending on the importance of tokens.}
			\label{fig:piecewise}
		\end{figure}
  
		In this setting, the attention weights \textit{dynamically} change depending on each input.
		This is in contrast to the existing works~\citep{edelman_inductive_2022, okumoto_learnability_2022}, which studied the feature extraction ability of Transformer networks and CNNs, respectively.
		Our result matches the empirical findings~\citep{likhosherstov_expressive_2021} and
		Theorem~\ref{thm:approximation-piecewise} theoretically supports the \textit{dynamical} feature extraction ability of the self-attention mechanism by considering the novel function class.

\section{Estimation Error Analysis}\label{sec:estimation}
	In this section, we show that Transformer networks can achieve polynomial estimation error rate and avoid the curse of dimensionality.

	To evaluate the variance of estimators, the covering number is often used to capture the complexity of model classes.
	\begin{definition}[Covering Number]
		For a normed space $\mathcal{F}$ with a norm $\norm{\cdot}$, the $\delta$-covering number is defined as
		\begin{align*}
			\mathcal{N}(\mathcal{F}, \delta, \norm{\cdot}) & := \inf\left\{n \in \N\mid \exists (f_1, \dots, f_n) \in \mathcal{F},\right.              \\
			                                               & \left. \forall f \in \mathcal{F}, \exists i \in [n], \norm{f_i - f} \leq \delta \right\}.
		\end{align*}
	\end{definition}

	For non-parametric regression problems with infite dimensional inputs and outputs, the estimation error of an ERM estimator is evaluated as follows.
	\begin{theorem}\label{thm:excess-risk}
		For a given class $\mathcal{F}$ of functions from $\domain$ to $\R^\infty$, let $\hat F \in \mathcal{F}$ be an ERM estimator which minimizes the empirical cost.
		Suppose that there exists a constant $R > 0$ such that $\norm{F^\circ}_\infty \leq R$, $\norm{F}_\infty \leq R$ for any $F \in \mathcal{F}$, and $\mathcal{N}(\mathcal{F}, \delta, \norm{\cdot}_\infty) \geq 3$.
		Then, for any $0 < \delta < 1$, it holds that
		\begin{align*}
			 & R_{l, r}(\hat F, F^\circ) \leq 4\inf_{F\in \mathcal{F}} \frac{1}{r - l + 1}\sum_{i=l}^r \norm{F_i - F^\circ_i}_{2, P_X}^2 \\
			 & \quad + C((R^2+\sigma^2) \frac{\log \mathcal{N}(\mathcal{F}, \delta, \norm{\cdot}_\infty)}{n} + (R + \sigma)\delta),
		\end{align*}
		where $C > 0$ is a global constant.
	\end{theorem}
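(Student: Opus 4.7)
The plan is to follow the classical oracle-inequality route for ERM in non-parametric regression, adapted to the vector-valued setting. For notational convenience I would set $\tilde\norm{G}_n^2 := \frac{1}{n(r-l+1)}\sum_{i=1}^n\sum_{j=l}^r G_j(X^{(i)})^2$ and $\tilde T(G) := \frac{1}{n(r-l+1)}\sum_{i,j}\xi^{(i)}_j G_j(X^{(i)})$, and similarly define a normalized population norm $\tilde\norm{G}_2^2 := \frac{1}{r-l+1}\sum_{j=l}^r\norm{G_j}_{2,P_X}^2$. The first step is the \textbf{basic inequality}: for any $F^* \in \mathcal{F}$, the ERM property $\hat L_n(\hat F) \leq \hat L_n(F^*)$, combined with the substitution $Y^{(i)}_j = F^\circ_j(X^{(i)}) + \xi^{(i)}_j$ and cancellation of the pure-noise terms, yields
\begin{equation*}
  \tilde\norm{\hat F - F^\circ}_n^2 \;\leq\; \tilde\norm{F^* - F^\circ}_n^2 \;+\; 2\,\tilde T(\hat F - F^*).
\end{equation*}

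Next I would perform \textbf{discretization and concentration on a cover}. Let $\mathcal{G}$ be a minimal $\delta$-cover of $\mathcal{F}$ in $\norm{\cdot}_\infty$, with $N := \mathcal{N}(\mathcal{F},\delta,\norm{\cdot}_\infty)$ elements, and let $\tilde{\hat F} \in \mathcal{G}$ satisfy $\norm{\hat F - \tilde{\hat F}}_\infty \leq \delta$. Replacing $\hat F$ by $\tilde{\hat F}$ in the basic inequality costs two types of correction: a square-difference term $|\tilde\norm{\hat F - F^\circ}_n^2 - \tilde\norm{\tilde{\hat F} - F^\circ}_n^2| \leq 4R\delta$ (since both are uniformly bounded by $2R$), and a noise cross-term $|\tilde T(\hat F - \tilde{\hat F})| \leq \delta \cdot \frac{1}{n(r-l+1)}\sum_{i,j}|\xi^{(i)}_j|$, which concentrates at $O(\sigma\delta)$ by Gaussian tail bounds. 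Over the cover, conditional on $\{X^{(i)}\}$, $\tilde T(G - F^*)$ is Gaussian with variance $\leq \frac{4R^2\sigma^2}{n(r-l+1)}$, so a union bound gives $\sup_{G\in\mathcal{G}}|\tilde T(G - F^*)| \lesssim \sigma\,\tilde\norm{G - F^*}_n\sqrt{\log N /(n(r-l+1))}$ with high probability; simultaneously a Bernstein-type union bound controls $|\tilde\norm{G - F^\circ}_n^2 - \tilde\norm{G - F^\circ}_2^2|$ uniformly over $\mathcal{G}$ by $C R^2\sqrt{\log N /n}$ plus a $\log N/n$ term, using that $(G_j - F^\circ_j)^2 \leq 4R^2$.

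Plugging these uniform bounds into the basic inequality, I would then apply Young's inequality $2ab \leq \tfrac12 a^2 + 2b^2$ to the noise cross-term in order to absorb $\tfrac12\tilde\norm{\hat F - F^\circ}_n^2$ into the left-hand side, producing the quadratic term $\frac{C\sigma^2 \log N}{n(r-l+1)} \leq \frac{C\sigma^2 \log N}{n}$, while the bounded-regression term contributes the $R^2 \log N / n$ part. Finally I would convert from the empirical to the population norm on the left via the same Bernstein-type union bound (applied once more to $\tilde{\hat F}$ chosen from the cover), take expectations using the deterministic bound $\tilde\norm{\hat F - F^\circ}_\infty \leq 2R$ to handle the low-probability bad event, and let $F^* \to$ the infimum to arrive at the stated oracle inequality.

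\textbf{The main obstacle} I anticipate is book-keeping rather than any single deep step: one must juggle three uniform-in-cover concentration events (the noise linear functional, the empirical-vs-population comparison for $\hat F$, and the same for $F^*$), choose Young's constants so that the approximation term comes out with a factor of exactly $4$, and verify that the $(R+\sigma)\delta$ correction absorbs the discretization errors of both the squared loss (scaling like $R\delta$) and the Gaussian noise inner product (scaling like $\sigma\delta$). The hypothesis $\mathcal{N}(\mathcal{F},\delta,\norm{\cdot}_\infty)\geq 3$ is used only to ensure $\log N$ is bounded below so that $\log N/n$ dominates terms like $1/n$ arising in the Bernstein bound.
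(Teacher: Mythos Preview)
Your high-level route (basic inequality, discretization to a cover, concentration uniformly over the cover, Young's inequality to absorb) is valid, but as written it does not reach the fast rate $\log N/n$ on the $R^2$ term. The problem is your bound on the generalization gap $|\tilde\norm{G - F^\circ}_n^2 - \tilde\norm{G - F^\circ}_2^2|$: invoking only the uniform bound $(G_j - F^\circ_j)^2 \leq 4R^2$ in Bernstein's inequality gives a variance proxy of order $R^4$ and hence a uniform deviation of order $R^2\sqrt{\log N/n}$. That is the slow rate and is strictly weaker than the $R^2\log N/n$ term in the statement. To recover the fast rate you must exploit the variance structure: the variance of $(G_j - F^\circ_j)^2$ is at most $4R^2\,\tilde\norm{G - F^\circ}_2^2$, so Bernstein yields a deviation of order $R\,\tilde\norm{G - F^\circ}_2\sqrt{\log N/n} + R^2\log N/n$, and a second application of Young's inequality then lets you absorb a fraction of $\tilde\norm{G - F^\circ}_2^2$ into the left-hand side, leaving only $R^2\log N/n$. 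You already do exactly this kind of self-normalization for the noise term $\tilde T$; you need to do it again for the squared-loss concentration step.

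For comparison, the paper achieves the same localization by a different device. It writes $R_{l,r}(\hat F, F^\circ) \leq \hat R + \mathcal{D}$, where $\hat R$ is the in-sample risk and $\mathcal{D}$ the generalization gap, and bounds $\hat R$ essentially as in your basic-inequality argument. For $\mathcal{D}$ the paper uses a ghost-sample symmetrization together with a ratio-type empirical process: it normalizes the contrast $g_j(X,\tilde X)$ by $r_j = \max\{A,\, l^{-1/2}\norm{F_j - F^\circ}_{2,P_X}\}$ so that the Bernstein variance of $\sum_i g_j/r_j$ is $O(nR^2)$ uniformly in $j$, bounds $T = \max_j \sum_i g_j/r_j$ by integrating the tail, and finally optimizes over the truncation level $A$. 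The normalization by $r_j$ is precisely the mechanism that converts the $\sqrt{\log N/n}$ deviation into $\log N/n$ after Cauchy--Schwarz and AM--GM; it plays the same role as your proposed Young's-inequality absorption, but packaged as a peeling-type normalization rather than a post-hoc inequality.
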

	This theorem is a direct extension of Lemma 4 in~\citet{schmidt-hieber_nonparametric_2020} and Theorem 2.6 in~\citet{hayakawa_minimax_2020} to the multiple output setting.
	The proof can be found in Appendix~\ref{sec:peoof-excess-risk}.

	By carefully evaluating the complexity of the class of Transformer networks, we have the following bound on the log covering number of Transformer networks.
	\begin{theorem}\label{thm:covering-number}
		For given hyperparameters $M, U, D, H, L, W, S, B$, assume that $B \geq 1$ and $\norm{P}_\infty \leq B$. Then, we have the following log covering number bound:
		\begin{align*}
			 & \log \mathcal{N}(\mathcal{T}(M, U, D, H, L, W, S, B), \delta, \norm{\cdot}_\infty) \\
			 & \quad \lesssim  M^3L(S + HD^2)\log\qty(\frac{DHLWB}{\delta}).
		\end{align*}
	\end{theorem}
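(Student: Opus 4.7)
The plan is to bound the Lipschitz constant of the map $\theta \mapsto F_\theta$ from the parameter vector to the transformer output in the $\norm{\cdot}_\infty$ norm, then form a uniform grid on the parameter space. A function $F \in \mathcal{T}(M,U,D,H,L,W,S,B)$ is specified by the embedding $E$, the attention matrices $K_h, Q_h, V_h$ in each of the $M$ blocks (at most $O(HD^2)$ entries each), and the per-block depth-$L$ FNN weights with at most $S$ nonzero entries, each bounded by $B$. The number of scalar parameters to discretize is therefore $O(M(S+HD^2))$.

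First I would establish uniform bounds on intermediate activations. Starting from $\norm{EX+P}_\infty \leq (d+1)B$, I would show inductively that after the $m$-th block the output is bounded in $\norm{\cdot}_\infty$ by some $R_m$, where the attention contribution is bounded by $HBD\, R_{m-1}$ (because softmax weights sum to $1$) and the depth-$L$ sparse FNN scales norms by at most $(WB)^L$. Taking logarithms, $\log R_M$ grows at most linearly in $ML\log(BWDH)$, which is the central quantity controlling how parameter perturbations propagate.

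Second, I would prove Lipschitzness of each layer separately with respect to both its own parameters and its own input. For the FNN, a telescoping argument in the spirit of \citet{schmidt-hieber_nonparametric_2020} yields a parameter-Lipschitz constant of order $L R (WB)^{L-1}$, and a similar bound for the input-Lipschitz constant. For the attention layer, perturbing $V_h$ by $\eta$ gives an output change of at most $\eta D R$ per head; perturbing $K_h$ or $Q_h$ by $\eta$ shifts the softmax logits by $O(BDR\eta)$, and composing the universal Lipschitzness of softmax with the subsequent multiplication by $V_h X$ produces output change $O(HB^2 D^2 R^2 \eta)$. The input-Lipschitz constant of attention is handled analogously, using that $V_h X A_h$ is linear in $X$ with the softmax weights bounded in $\ell^1$.

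Third, I would telescope across all $M$ blocks. If two parameter vectors differ entrywise by $\eta$, the outputs differ in $\norm{\cdot}_\infty$ by at most $\eta \cdot \Lambda$, where $\log\Lambda$ is polynomial in $M, L$ and $\log(BDHW)$. Setting $\eta = \delta/\Lambda$ and discretizing each of the $O(M(S+HD^2))$ parameters on a grid of spacing $\eta$ in $[-B,B]$ produces a cover of cardinality $(B\Lambda/\delta)^{O(M(S+HD^2))}$; taking logarithms and absorbing the polynomial-in-$M,L$ pieces of $\log\Lambda$ inside the outer logarithm yields the stated bound, together with an extra polynomial factor of $M$ arising because the per-block error is amplified through all subsequent blocks and because the Lipschitz constant of attention depends quadratically on $R_m$. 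The main obstacle is precisely this interaction: the softmax is benignly Lipschitz, but its logits scale like $B^2 D R_m$ with $R_m$ growing exponentially in $m$, so the delicate step is to show that the resulting growth is fully absorbed into $\log(DHLWB/\delta)$ and that only the stated polynomial powers of $M$ and $L$ survive outside the logarithm.
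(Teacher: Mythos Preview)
Your proposal is essentially the paper's own argument: bound the intermediate activations (the paper's Lemma~\ref{lem:norm}), establish input- and parameter-Lipschitz constants for FNN and attention layers separately (Lemmas~\ref{lem:lipschitz} and~\ref{lem:diff}), telescope across the $2M$ layers to obtain $\norm{F-\tilde F}_\infty \le (6HDBW)^{O(M^2L)}\norm{\theta_F-\theta_{\tilde F}}_\infty$, and then grid the parameters. The one omission is that with the sparsity constraint $S$ on the FNN weights you must also count the $\binom{O(M(LW^2+HD^2))}{O(M(S+HD^2))}$ possible sparsity patterns before gridding the nonzero entries; this contributes only an additive $O(M(S+HD^2)\log(LWDH))$ to the log-covering number and is absorbed into the stated bound, and note that the $M^2L$ factor in $\log\Lambda$ is not ``absorbed inside the outer logarithm'' but rather multiplies the $M(S+HD^2)$ parameter count to produce the $M^3L(S+HD^2)$ prefactor.
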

        See Appendix~\ref{sec:proof-covering-number} for the proof.
	Interestingly, the covering number bound does not depend on the dimensionality of inputs and outpus, and the width of the sliding window.
	This is because the number of parameters does not depend on these quantities due to the parameter sharing property and
	the magnitude of the hidden states are independent of the window size since the attention weights $A$ are normalized as $\norm{A}_1 = 1$.
	The parameter sharing property, on the other hand, leads to the limited interaction among tokens.
	This makes approximation analysis difficult, and thus it is necessary to design positional encoding carefully as mentioned in Section~\ref{sec:approximation}.

	Combining above results, we have the following estimation error bound for the mixed and anisotropic smoothness.
	\begin{theorem}\label{thm:estimation-error}
		Suppose that Assumption~\ref{assumption:anisotropic} holds.
		Let $\hat F$ be an ERM estimator in $\mathcal{T}_R(M, U, D, H, L, W, S, B)$, where $M, U, D, H, L, W, S, B$ is defined as~\eqref{eq:hyperparameters} and $T = \frac{a^\dagger}{2a^\dagger + 1} \log n$.
		Then, for any $l, r \in \Z$, we have
		\begin{align*}
			R_{l, r}(\hat F, F) & \lesssim n^{-\frac{2a^\dagger}{2 a^\dagger + 1}} (\log n)^{2/\alpha + 2 + \max\qty{4/\alpha, 4}}.
		\end{align*}
	\end{theorem}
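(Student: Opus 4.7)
The plan is to combine the three building blocks already in place: the oracle inequality for ERM in a sup-bounded function class (Theorem~\ref{thm:excess-risk}), the constructive approximation guarantee for Transformers on anisotropic/mixed smooth functions (Theorem~\ref{thm:approximation}), and the covering-number estimate of the Transformer class (Theorem~\ref{thm:covering-number}). Since $\hat F$ is the ERM over $\mathcal{T}_R$, both $\|\hat F\|_\infty \leq R$ (by construction) and $\|F^\circ\|_\infty \leq R$ (by Assumption~\ref{assumption:anisotropic}), so Theorem~\ref{thm:excess-risk} yields, for any $\delta\in(0,1)$,
\begin{align*}
R_{l, r}(\hat F, F^\circ) &\leq 4\inf_{F\in\mathcal{T}_R}\frac{1}{r-l+1}\sum_{i=l}^r\|F_i-F^\circ_i\|_{2,P_X}^2 \\
&\quad + C\Big((R^2+\sigma^2)\tfrac{\log\mathcal{N}(\mathcal{T}_R,\delta,\|\cdot\|_\infty)}{n} + (R+\sigma)\delta\Big).
\end{align*}
The task then reduces to controlling the three summands under $T=\tfrac{a^\dagger}{2a^\dagger+1}\log n$ and the hyperparameters prescribed in~\eqref{eq:hyperparameters}.

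For the approximation term I would apply Theorem~\ref{thm:approximation} to obtain, for these hyperparameters, a single $F\in\mathcal{T}$ (the same one for every coordinate, thanks to shift-equivariance of the construction) with $\|F_i-F^\circ_i\|_{2,P_X}\lesssim 2^{-T}$ uniformly in $i\in\mathbb{Z}$. Because $\|F^\circ_i\|_\infty\leq R$, composing with $\mathrm{clip}_R$ does not worsen this error, so the same bound holds for the clipped version lying in $\mathcal{T}_R$. Squaring and averaging over $i\in[l:r]$ gives $\inf\leq C\,2^{-2T}$, a quantity independent of $l$ and $r$, which is exactly why the final rate does not depend on the number of outputs considered.

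For the stochastic term I would substitute~\eqref{eq:hyperparameters} into Theorem~\ref{thm:covering-number}. With $M=1$, $L\sim\max\{T^{2/\alpha},T^2\}$, $S\sim T^{2/\alpha}\max\{T^{2/\alpha},T^2\}2^{T/a^\dagger}$, and $HD^2\sim T^{3/\alpha}$, the prefactor obeys $M^3L(S+HD^2)\lesssim T^{2/\alpha+\max\{4/\alpha,4\}}\,2^{T/a^\dagger}$; taking $\delta=1/n$ makes $\log(DHLWB/\delta)\lesssim\max\{T^{1/\alpha},T\}$ since $\log W\sim T/a^\dagger$ and $\log B\sim\max\{T^{1/\alpha},T\}$ already dominate $\log D$, $\log H$, $\log L$ and $\log(1/\delta)$. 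Multiplying yields $\log\mathcal{N}(\mathcal{T}_R,1/n,\|\cdot\|_\infty)\lesssim T^{\,2/\alpha+2+\max\{4/\alpha,4\}}\,2^{T/a^\dagger}$, while the discretisation remainder $(R+\sigma)\delta=O(n^{-1})$ is strictly subdominant.

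The final step is to plug $T=\tfrac{a^\dagger}{2a^\dagger+1}\log n$ into the two surviving terms: the approximation part becomes $2^{-2T}=n^{-2a^\dagger/(2a^\dagger+1)}$, and the variance part becomes a polylog multiple of $2^{T/a^\dagger}/n = n^{1/(2a^\dagger+1)}/n = n^{-2a^\dagger/(2a^\dagger+1)}$; this is the canonical bias-variance trade-off, and it is exactly this balance that dictates the choice of $T$. Collecting the $T^{2/\alpha+2+\max\{4/\alpha,4\}}$ factor with $T\sim\log n$ produces the claimed $(\log n)^{2/\alpha+2+\max\{4/\alpha,4\}}$ correction. The main obstacle I expect is the careful bookkeeping of polylog exponents, since several of the hyperparameters in~\eqref{eq:hyperparameters} have $\max\{\cdot\}$ forms that behave differently for $\alpha\geq 1$ and $\alpha<1$; once this accounting is done, however, the proof is a direct composition of Theorems~\ref{thm:excess-risk}, \ref{thm:approximation}, and~\ref{thm:covering-number}.
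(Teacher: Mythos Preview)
Your proposal is correct and follows essentially the same approach as the paper: invoke Theorem~\ref{thm:excess-risk}, bound the approximation term via Theorem~\ref{thm:approximation} (noting, as you do, that clipping does not hurt since $\|F^\circ\|_\infty\le R$), bound the covering number via Theorem~\ref{thm:covering-number} with the hyperparameters from~\eqref{eq:hyperparameters}, then set $\delta=1/n$ and $T=\tfrac{a^\dagger}{2a^\dagger+1}\log n$ to balance the two terms. The only difference is cosmetic bookkeeping of the polylog factors; the paper writes the covering bound as $2^{T/a^\dagger}T^{2/\alpha+1}\max\{T^{4/\alpha},T^4\}\log(T/\delta)$ and picks up the final power of $\log n$ from $\log(T/\delta)\sim T$, whereas you absorb $\log(1/\delta)$ directly into $\max\{T^{1/\alpha},T\}$.
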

	This can be shown by letting $\delta = 1/n$ in Theorem~\ref{thm:excess-risk}.
        See Appendix~\ref{sec:proof-estimation} for the proof.
	The results show that the convergence rate of the estimation error does not depend on the input dimensionality and the output size if the smoothness of the target function has sparse structure.
	This implies that Transformers can avoid the curse of dimensionality.
	When $d = 1$, this convergence rate matches that for CNNs~\citep{okumoto_learnability_2022} for single output setting.
	For anisotropic smoothness, this rate also matches, up to poly-log order, that of FNNs in the finite dimensional setting, which is known to be minimax optimal~\citep{suzuki_deep_2021}.
	That is, Transformers can achieve near-optimal rate in a minimax sense.

	In addition, the Transformer architecture including the positional encoding $P$ does not depend directly on the smoothness structure $a$.
	This implies that Transformer networks can find important features and select them \textit{adaptively} to the smoothness of the target function by learning the intrinsic structure of the target function.

	For the piecewise smoothness, we have the following estimation error bound.
	\begin{theorem}\label{thm:estimation-piecewise}
		Suppose that Assumption~\ref{assumption:piecewise} holds.
		Let $\hat F$ be an ERM estimator in $\mathcal{T}_R(M, U, D, H, L, W, S, B)$, where $M, U, D, H, L, W, S, B$ is defined as~\eqref{eq:hyperparameters-piecewise} and $T = \frac{a^\dagger}{2a^\dagger + 1} \log n$.
		Then, for any $l, r \in \Z$, we have
		\begin{align*}
			R_{l, r}(\hat F, F) & \lesssim n^{-\frac{2a^\dagger}{2 a^\dagger + 1}} (\log n)^{5/\alpha + 2 + \max\qty{4/\alpha, 4}}(\log V)^3.
		\end{align*}
	\end{theorem}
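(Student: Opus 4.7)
The plan is to combine the oracle inequality of Theorem~\ref{thm:excess-risk} with the approximation bound of Theorem~\ref{thm:approximation-piecewise} and the covering-number bound of Theorem~\ref{thm:covering-number}, following the same template as the proof of Theorem~\ref{thm:estimation-error}, but now with the enlarged hyperparameter scheme~\eqref{eq:hyperparameters-piecewise} that handles the piecewise structure.

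First I would apply Theorem~\ref{thm:excess-risk} to $\mathcal{F} = \mathcal{T}_R(M,U,D,H,L,W,S,B)$ with $\delta = 1/n$. Since every $F\in\mathcal{T}_R$ satisfies $\norm{F}_\infty\le R$ and $\norm{F^\circ}_\infty\le R$ by Assumption~\ref{assumption:piecewise}, the oracle inequality yields
\[
R_{l,r}(\hat F, F^\circ) \;\lesssim\; \inf_{F\in\mathcal{T}_R}\frac{1}{r-l+1}\sum_{i=l}^r\norm{F_i - F^\circ_i}_{2,P_X}^2 + \frac{\log\mathcal{N}(\mathcal{T}_R, 1/n, \norm{\cdot}_\infty)}{n} + \frac{1}{n},
\]
with constants depending only on $R$ and $\sigma$. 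For the approximation term I would invoke Theorem~\ref{thm:approximation-piecewise} with $T = \frac{a^\dagger}{2a^\dagger+1}\log n$: this produces an $F\in\mathcal{T}$ with $\norm{F_i - F^\circ_i}_{2,P_X}\lesssim 2^{-T}$ uniformly in $i$, and because $\norm{F^\circ_i}_\infty\le R$, clipping only decreases the $L^2$-error, so the bound transfers to $\mathcal{T}_R$. Squaring then yields an approximation contribution of order $n^{-2a^\dagger/(2a^\dagger+1)}$, uniformly in $l,r$.

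Next I would substitute the piecewise hyperparameters~\eqref{eq:hyperparameters-piecewise} into Theorem~\ref{thm:covering-number}, which bounds the log covering number by a constant multiple of $M^3 L (S + HD^2)\log(DHLWB/\delta)$. The key observation is that $S\sim T^{2/\alpha}\max\{T^{2/\alpha},T^2\}\,2^{T/a^\dagger}$ grows exponentially in $T$ while $HD^2 \sim (\log T)^{1/\alpha} T^{4(\beta+1)/\alpha}(\log V)^2$ grows only polynomially, so the $S$-term dominates the complexity and
\[
M^3 L S \;\sim\; T^{5/\alpha}\max\{T^{4/\alpha}, T^4\}\, 2^{T/a^\dagger}.
\]
With $T = \frac{a^\dagger}{2a^\dagger + 1}\log n$ one has $2^{T/a^\dagger} = n^{1/(2a^\dagger+1)}$. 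The remaining $\log(DHLWB/\delta)$ factor is of order $\log n$ through $\log W$, $\log B$, and $\log(1/\delta)$, with additional $\log\log V$ contributions from $\log D$ and $\log B$. Dividing by $n$ turns the $2^{T/a^\dagger}$ into $n^{-2a^\dagger/(2a^\dagger+1)}$, and collecting the polynomial powers of $T\sim\log n$ together with the $V$-dependent factors delivers the target $(\log n)^{5/\alpha + 2 + \max\{4/\alpha, 4\}}(\log V)^3$ envelope.

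Combining the approximation and variance pieces and absorbing the $1/n$ term into the leading rate completes the argument. The main obstacle is purely book-keeping: tracking how the $\log V$ factor propagates through the embedding dimension $D\sim T^{2(\beta+1)/\alpha}\log V$ (entering $HD^2$), through $\log B \lesssim \max\{T^{1/\alpha}, T, \log\log V\}$, and through the $\log(DHLWB/\delta)$ factor of Theorem~\ref{thm:covering-number}, so that the combined exponent in $\log V$ is no worse than $3$; one also needs to verify that the sub-dominant $M^3 L HD^2$ contribution indeed becomes $o(n^{-2a^\dagger/(2a^\dagger+1)})$ after division by $n$ and is therefore absorbed into the leading rate.
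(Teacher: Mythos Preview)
Your approach is exactly the paper's: invoke Theorem~\ref{thm:excess-risk} with $\delta=1/n$, bound the bias by Theorem~\ref{thm:approximation-piecewise}, bound the variance by Theorem~\ref{thm:covering-number} with the hyperparameters~\eqref{eq:hyperparameters-piecewise}, and balance by setting $T=\frac{a^\dagger}{2a^\dagger+1}\log n$.

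One correction in the book-keeping: your claim that the $M^3LHD^2$ contribution is $o(n^{-2a^\dagger/(2a^\dagger+1)})$ after division by $n$ is not valid for arbitrary $V$, because $HD^2\sim(\log T)^{1/\alpha}T^{4(\beta+1)/\alpha}(\log V)^2$ carries a $(\log V)^2$ factor that need not be dominated by $n^{1/(2a^\dagger+1)}$. In fact the $(\log V)^2$ in the paper's intermediate bound
\[
\log\mathcal{N}(\mathcal{T}_R,\delta,\norm{\cdot}_\infty)\lesssim T^{5/\alpha+1}\max\{T^{4/\alpha},T^4\}\,2^{T/a^\dagger}(\log V)^2\log\!\bigl(\tfrac{T\log V}{\delta}\bigr)
\]
comes precisely from retaining the $HD^2$ summand, and the third $\log V$ comes from the $\log(T\log V/\delta)$ factor. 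So rather than discarding $HD^2$ as sub-dominant, you should either keep the crude joint bound on $S+HD^2$ (as the paper does) or observe that $M^3LHD^2\log(\cdots)/n$ is of order $(\log n)^{c}(\log V)^3/n$, which is itself $\lesssim n^{-2a^\dagger/(2a^\dagger+1)}(\log n)^{5/\alpha+2+\max\{4/\alpha,4\}}(\log V)^3$ because $1/n$ beats $n^{-2a^\dagger/(2a^\dagger+1)}$ by the polynomial margin $n^{-1/(2a^\dagger+1)}$.
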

	See Appendix~\ref{sec:proof-estimation-piecewise} for the proof.
	This convergence rate is the same as in Theorem~\ref{thm:estimation-error} up to poly-log order if $V = \poly(n)$.
	This means that Transformers can avoid the curse of dimensionality even if the smoothness architecture depends on each input.

	In addition, the Transformer architecture does not depend on the partition $\qty{\Omega_\lambda}_{\lambda \in \Lambda}$ and the importance function $\mu$.
	This means that Transformers can adapt to the intrinsic structure of the target function and realize the \textit{dynamical} feature extraction according to the importance of tokens.
	This fact supports the practical success of Transformer networks in a wide range of applications with various structures.    

    \section{Numerical Experiments}
    \begin{figure}
        \centering
            \centering
            \includegraphics[width=0.9\columnwidth]{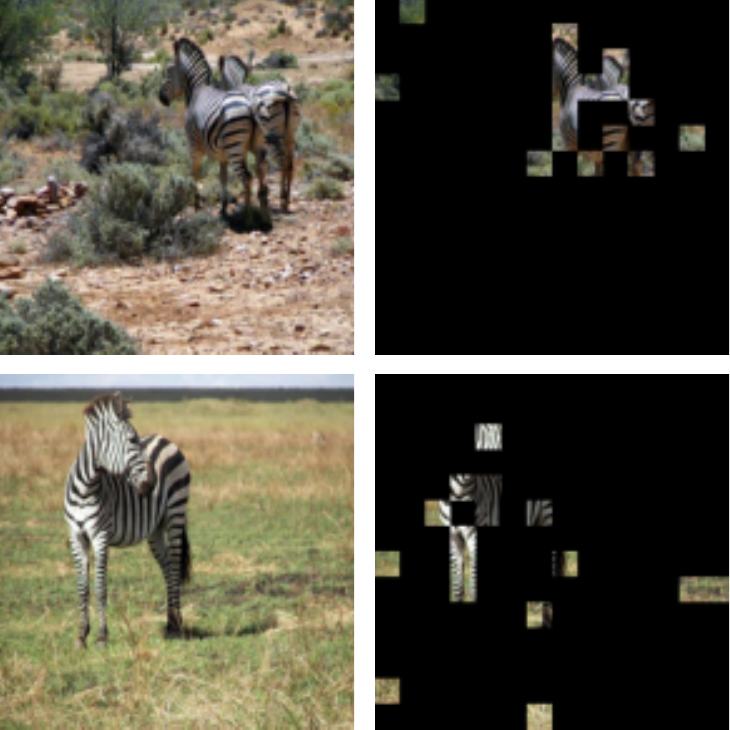}
            \caption{Two zebra images (left) and the corresponding images with 180 / 196 patches masked (right). }
        \label{fig:images}
    \end{figure}

    The assumptions in this paper essentially impose the sparsity of important features. To verify this, we conducted some numerical experiments using masked images as inputs.
    In this experiment, we prepared a pre-trained model (ViT-Base model~\cite{dosovitskiy_image_2021}) and two images of zebras in Fig.~\ref{fig:images} from the validation set of ImageNet-1k~\cite{imagenet15russakovsky}. We divided each image into $14 \times 14$ tokens and masked each token in turn. At each step, a token to be masked is selected using a greedy algorithm to maximize the predicted probability of the correct class by the pre-trained model. Since masking informative features strongly affects the predicted probability, important features will remain unmasked near the end of the procedure.

    As shown in Fig.~\ref{fig:loss_hist}, the predicted probability remains high and the model can classify the image correctly even if about 90\% of the input is masked. This means that a small fragment of the image is important for prediction.
    In addition, Fig.~\ref{fig:images} demonstrates that the patterns of unmasked, i.e., important features in the two images differ significantly.
    This implies important features change depending on each input and the piecewise smoothness describes more practical situations than the fixed smoothness.

    \begin{figure}
            \centering
            \includegraphics[width=0.9\columnwidth]{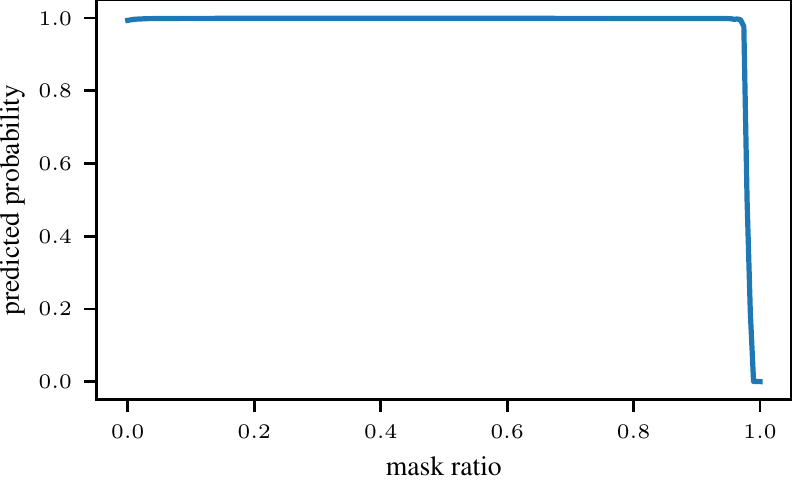}
            \caption{The predicted probability of the correct class for the top left image in Fig.~\ref{fig:images}. The predicted probability remains high even when most of the images are masked.}
            \label{fig:loss_hist}
    \end{figure}
    
        \section{Conclusion}
	In this study, we have investigated the learnability of Transformer networks for sequence-to-sequence functions with infinite dimensional inputs.
	We have shown that Transformer networks can achieve a polynomial order convergence rate of estimation error when the smoothness of the target function has sparse structure.
	In addition, we have considered the situations where the smoothness depends on each input.
	Then, we have shown that Transformer networks can avoid the curse of dimensionality by switching the focus of attention based on input.
	We believe that our theoretical analysis provides a new insight into the nature of Transformer architecture.



	\section*{Acknowledgements}
        ST was partially supported by Fujitsu Ltd. TS was
        partially supported by JSPS KAKENHI (20H00576) and
        JST CREST.




	\bibliography{main}
	\bibliographystyle{icml2023}

	\newpage
	\appendix
	\onecolumn

	\section{Notation List}\label{sec:notation-table}
\begin{table}[h]
	\caption{Notation list}
	\label{table:notation}
	\vskip 0.15in
	\begin{center}
		\begin{tabular}{|l|l|}
			\toprule
			Notation                                                                       & Definition                                                                           \\
			\midrule
			$n$                                                                            & sample size                                                                          \\
			$d$                                                                            & token dimension                                                                      \\
			$(X^{(i)}, y^{(i)})$                                                           & $i$-th observation                                                                   \\
			$\mathcal{D}^n = \qty{(X^{(i)}, y^{(i)})}_{i=1}^n$                             & training data                                                                        \\
			$P_X$                                                                          & data distribution                                                                    \\
			$\Omega$                                                                       & support of $P_X$                                                                     \\
			$\qty{\Omega_{\lambda}}_{\lambda \in \Lambda}$                                 & disjoint partition of $\Omega$                                                       \\
			$\mathcal{F}_{p, \theta}^\gamma$                                               & $\gamma$-smooth function class                                                       \\
			$\mathcal{G}_{p, \theta}^\gamma$                                               & piecewise $\gamma$-smooth function class                                             \\
			$\sigma$                                                                       & noise variance                                                                       \\
			$F^\circ$                                                                      & true function                                                                        \\
			$a$                                                                            & smoothness parameter                                                                 \\
			$\bar a = [a_{i_1, j_1}, \dots, a_{i_k, j_k}, \dots]$                     & smoothness parameter sorted in the ascending order                                   \\
			$\tilde a$                                                                     & $\qty(\sum_{i=1}^\infty \bar a_i^{-1})^{-1}$                                         \\
			$a^\dagger$                                                                    & $\bar a_1$ for the mixed smoothness and $\tilde a$ for the anisotropic smoothness    \\
			$\mu$                                                                          & importance function                                                                  \\
			$\eta$                                                                         & ReLU activation function                                                             \\
			$I(T, \gamma)$                                                                 & $\qty{(i, j)\mid \exists s \in \N_0^{d\times \infty}, s_{ij} \neq 0, \gamma(s) < T}$ \\
			$I_j(T, \gamma)$                                                               & $\qty{i \mid (i, j) \in I(T, \gamma)}$                                               \\
			$e_i \in \R^{[l:r]}~(i \in [l:r])$                                           & standard basis of $\R^{[l:r]}$                                                      \\
			$\delta_{i, j} \in \R^{[l_1: r_1] \times [l_2: r_2]}~(i \in [l_1, r_1],~j \in [l_2: r_2])$ & $\delta_{i, j}:=e_i e_j^\top$                                                        \\
			$x \close{\epsilon} y~(x,y\in \R^d)$                                           & $\norm{x-y}_2 \lesssim \epsilon$                                                     \\
			$1_A$                                           & 1 if $A$ is true and 0 if $A$ is false                                                     \\
			\bottomrule
		\end{tabular}
	\end{center}
\end{table}

\section{Extension to the Original Architecture}\label{sec:modified}
In this paper, we consider the multilayer FNN without skip connection while the FNN in the original architecture~\cite{vaswani_attention_2017} uses single hidden layer with skip connection.
However, our argument can be applied to the original structure with slight modifications as follows.
\begin{itemize}
	\item (Skip connection)
	      Since there exists an FNN $f:\R^d \to \R^d$ with one hidden layer of width $2d$
	      which works as an identity map, we can cancel out the skip connection using $f$.
	      This modification does not change the order of hyperparameters, and we can obtain the same convergence rate as the original one.
	\item (One hidden layer)
	      By letting $V_h = O$ in a self-attention layer, the self-attention layer behaves as an identity map due to the skip connection in the self-attention layer. Therefore,
	      $L$-layer Transformer blocks (one block consists of an FNN layer with one hidden layer and a self-attention layer) can represent an
	      $L$-layer FNN. Then, we can obtain the same convergence rate as the original one up to poly-log order.
\end{itemize}

\section{Auxiliary Lemmas}
\begin{lemma}\label{lem:softmax}
	For $\theta \in \R^d$, assume that there exist an index $i^* \in [d]$ and $\delta > 0$ such that $\theta_{i^*} \geq \theta_{i} + \delta$ for any $i\neq i^*$.
	Then, we have
	\begin{align*}
		\norm{\softmax(\theta)-e_{i^*}}_1 & \leq 2de^{-\delta}.
	\end{align*}
\end{lemma}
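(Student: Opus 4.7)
The plan is to reduce the $\ell_1$ distance to a bound on the non-maximal softmax entries, then control those entries by the denominator. Writing $p := \softmax(\theta)$, the key observation is that $\sum_i p_i = 1$, so in particular $1 - p_{i^*} = \sum_{i \neq i^*} p_i$. Splitting the $\ell_1$ norm as
\begin{align*}
\norm{p - e_{i^*}}_1 = (1 - p_{i^*}) + \sum_{i \neq i^*} p_i = 2 \sum_{i \neq i^*} p_i,
\end{align*}
the problem is immediately reduced to bounding $\sum_{i \neq i^*} p_i$ by $d e^{-\delta}$.

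Next I would bound each individual term $p_i$ for $i \neq i^*$. Since the denominator $\sum_j e^{\theta_j}$ contains at least the term $e^{\theta_{i^*}}$, we have
\begin{align*}
p_i = \frac{e^{\theta_i}}{\sum_j e^{\theta_j}} \leq \frac{e^{\theta_i}}{e^{\theta_{i^*}}} = e^{\theta_i - \theta_{i^*}} \leq e^{-\delta},
\end{align*}
where the last inequality uses the hypothesis $\theta_{i^*} \geq \theta_i + \delta$. Summing over the at most $d-1$ indices $i \neq i^*$ gives $\sum_{i \neq i^*} p_i \leq (d-1) e^{-\delta} \leq d e^{-\delta}$, which combined with the previous display yields the claimed inequality $\norm{p - e_{i^*}}_1 \leq 2d e^{-\delta}$.

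There is essentially no obstacle here; the lemma is a one-line computation once one notices the symmetry trick that collapses the positive and negative parts of the $\ell_1$ norm into twice the off-$i^*$ mass. The factor of $2$ in the final bound comes precisely from this splitting, and the factor of $d$ is the crude union bound over the $d-1$ competing indices. Since the statement only requires a polynomial-in-$d$, exponentially small-in-$\delta$ bound, no tightening (e.g., via the actual ratio $e^{-\delta}/(1+(d-1)e^{-\delta})$) is needed.
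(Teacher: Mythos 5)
Your proof is correct and follows essentially the same approach as the paper's: both bound each off-$i^*$ softmax entry by $e^{\theta_i - \theta_{i^*}} \leq e^{-\delta}$ using the denominator lower bound $\sum_j e^{\theta_j} \geq e^{\theta_{i^*}}$, and both use the identity $1 - p_{i^*} = \sum_{i\neq i^*} p_i$ to close the argument. Your packaging of the $\ell_1$ norm as exactly twice the off-$i^*$ mass is a mildly cleaner presentation, but the underlying computation is identical.
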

\begin{proof}
	For $i\neq i^*$, the assumption $\theta_{i^*} \geq \theta_{i} + \delta$ yields that
	\begin{align*}
		0 & \leq \softmax(\theta)_i =\frac{e^{\theta_i}}{\sum_{j=1}^d e^{\theta_j}} \leq e^{\theta_i - \theta_{i^*}} \leq e^{-\delta}.
	\end{align*}
	For $i^*$, we have
	\begin{align*}
		0 & \leq 1 -  \softmax(\theta)_{i^*} = \sum_{i\neq i^*} \softmax(\theta)_{i} \leq (d-1)e^{-\delta}.
	\end{align*}
	Therefore,
	\begin{align*}
		\norm{\softmax(\theta)-e_{i^*}}_1 & = \abs{1 - \softmax(\theta)_{i^*}} + \sum_{i\neq i^*} \abs{\softmax(\theta)_i} \leq 2de^{-\delta},
	\end{align*}
	which completes the proof.
\end{proof}

\begin{lemma}\label{lem:softmax-lipschitz}
	For any $\theta, \theta' \in \R^d$, we have
	\begin{align*}
		\norm{\softmax(\theta) - \softmax(\theta')}_1 & \leq 2\norm{\theta - \theta'}_\infty.
	\end{align*}
\end{lemma}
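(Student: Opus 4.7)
The plan is to prove the bound by integrating the Jacobian of softmax along the line segment from $\theta$ to $\theta'$. Set $v := \theta' - \theta$, $\theta(t) := \theta + tv$ for $t \in [0, 1]$, and $p(t) := \softmax(\theta(t))$. Since $\softmax$ is smooth, the path $t \mapsto p(t)$ is differentiable and the fundamental theorem of calculus gives
\begin{align*}
    \norm{\softmax(\theta') - \softmax(\theta)}_1 \;\leq\; \int_0^1 \norm*{\tfrac{d}{dt}p(t)}_1 \dd t.
\end{align*}

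The key computation is the derivative. A direct calculation using $\partial p_i / \partial \theta_j = p_i(\delta_{ij} - p_j)$ yields
\begin{align*}
    \tfrac{d}{dt} p_i(t) \;=\; p_i(t)\bigl(v_i - \langle p(t), v\rangle\bigr).
\end{align*}
Because $p(t)$ is a probability vector, the inner product $\langle p(t), v\rangle$ is a convex combination of the entries of $v$ and therefore lies in $[\min_j v_j,\ \max_j v_j]$. Consequently $|v_i - \langle p(t), v\rangle| \leq \max_j v_j - \min_j v_j \leq 2\norm{v}_\infty$ for every $i$, and using $\sum_i p_i(t) = 1$ gives
\begin{align*}
    \norm*{\tfrac{d}{dt}p(t)}_1 \;=\; \sum_i p_i(t)\,|v_i - \langle p(t), v\rangle| \;\leq\; 2\norm{v}_\infty.
\end{align*}

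Plugging this uniform bound into the integral yields $\norm{\softmax(\theta') - \softmax(\theta)}_1 \leq 2\norm{\theta' - \theta}_\infty$, which is the claim. There is no genuine obstacle here; the only point that requires slight care is recognizing that $\langle p(t), v \rangle$ is a convex combination of the $v_i$, which is what lets the bound be expressed in $\ell_\infty$ rather than in $\ell_1$ on the $\theta$-side.
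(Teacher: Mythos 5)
Your proof is correct. Note that the paper does not supply its own argument for this lemma: it simply cites Corollary~A.7 of Edelman et al.\ (2022), so there is no internal proof to compare against. Your fundamental-theorem-of-calculus argument is a clean, self-contained derivation. The computation $\frac{d}{dt}p_i(t) = p_i(t)\bigl(v_i - \langle p(t), v\rangle\bigr)$ from the softmax Jacobian is right, and the key observation that $\langle p(t), v\rangle$ is a convex combination of the entries of $v$ (hence lies in $[\min_j v_j, \max_j v_j]$) is exactly what converts the bound into an $\ell_\infty$ bound on $\theta-\theta'$. Combined with $\sum_i p_i(t)=1$, the estimate $\norm{\tfrac{d}{dt}p(t)}_1 \leq 2\norm{v}_\infty$ follows, and integrating over $[0,1]$ gives the stated bound. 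This is a complete proof of the lemma.
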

\begin{proof}
	See Corollary A.7 in~\citet{edelman_inductive_2022}.
\end{proof}

\begin{lemma}\label{lem:approximately}
	For any $l \in \N$ and $\varepsilon > 0$, there exists $\varepsilon$-approximately orthonormal vectors $\qty{u_i}_{i=1}^l \subset \R^d~(d=O\qty(\frac{\log l}{\varepsilon^2}))$ which satisfies
	\begin{align}
		\begin{split}
			\abs{\ev{u_i, u_j}} & \leq \varepsilon,\\
			\ev{u_i, u_i}       & = 1,
		\end{split}\label{eq:approximately-orthonormal}
	\end{align}
	for any $i\neq j$.
\end{lemma}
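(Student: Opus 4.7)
The plan is to use the probabilistic method. I would sample $u_1, \dots, u_l$ independently and uniformly from the unit sphere $S^{d-1} \subset \R^d$ (so the condition $\ev{u_i, u_i} = 1$ is automatic), and then show that with positive probability every pairwise inner product is at most $\varepsilon$ in absolute value. Equivalently, and more convenient computationally, one may sample $\tilde u_i \sim N(0, I_d)$ and set $u_i = \tilde u_i / \norm{\tilde u_i}_2$; the rotational invariance of the Gaussian gives the same distribution as uniform on the sphere.

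The core quantitative step is a concentration bound for the inner product of two independent uniform unit vectors. Conditioning on $u_i$, the random variable $\ev{u_i, u_j}$ is distributed as the first coordinate of a uniform vector on $S^{d-1}$, which is subgaussian with variance proxy of order $1/d$. This gives
\begin{align*}
	\Pr\qty[\abs{\ev{u_i, u_j}} > \varepsilon] & \leq 2\exp(-c\varepsilon^2 d)
\end{align*}
for an absolute constant $c > 0$. (Alternatively, one can derive this directly from $\ev{\tilde u_i, \tilde u_j}$ being a subexponential random variable of scale $\sqrt{d}$ and $\norm{\tilde u_i}_2^2$ concentrating near $d$, combined with a union bound on the normalization.)

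Then I would apply a union bound over the $\binom{l}{2} < l^2$ unordered pairs $i \neq j$: the probability that some pair violates the near-orthogonality condition is at most $2 l^2 \exp(-c\varepsilon^2 d)$. Choosing $d = \lceil C \log l / \varepsilon^2 \rceil$ with $C > 2/c$ makes this quantity strictly less than $1$, so a realization of $(u_1, \dots, u_l)$ satisfying \eqref{eq:approximately-orthonormal} must exist, which yields $d = O(\log l / \varepsilon^2)$ as claimed.

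This argument is standard and I do not anticipate a genuine obstacle; the only delicate point is invoking the right concentration inequality for $\ev{u_i, u_j}$, but the subgaussian bound on projections of uniform vectors on the sphere (or equivalently the Johnson--Lindenstrauss-type Gaussian concentration after normalization) is off-the-shelf. Note that the result could also be phrased as a direct corollary of the Johnson--Lindenstrauss lemma applied to the standard basis in $\R^l$, with the randomized projection producing the desired vectors after rescaling.
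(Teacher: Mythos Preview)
Your proposal is correct and follows the same overall scheme as the paper: probabilistic construction plus a concentration bound on pairwise inner products plus a union bound over the $\binom{l}{2}$ pairs. The one difference is in the random model. The paper samples each coordinate of $u_i$ as an independent Rademacher variable scaled by $1/\sqrt{d}$, i.e.\ $\Pr((u_i)_j = \pm 1/\sqrt{d}) = 1/2$, so that $\norm{u_i}_2 = 1$ holds exactly and $\ev{u_i,u_j}$ is literally a sum of $d$ independent bounded random variables; a direct Chernoff/Hoeffding bound then gives $\Pr(\abs{\ev{u_i,u_j}}>\varepsilon)\le 2\exp(-\varepsilon^2 d/2)$ with an explicit constant, and the choice $d = \lceil 4\log(2l)/\varepsilon^2\rceil$ finishes the argument. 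Your uniform-on-the-sphere (or normalized Gaussian) model leads to the same conclusion via the subgaussianity of spherical marginals, but requires citing that spherical concentration fact and, in the Gaussian formulation, handling the normalization. The Rademacher construction is marginally more elementary and self-contained; your construction is equally standard and arguably more natural given the Johnson--Lindenstrauss framing you mention at the end. Either route proves the lemma with $d = O(\log l/\varepsilon^2)$.
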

\begin{proof}
	Let each component of $u_i \in \R^d$ independently follows Bernoulli distribution $\Probability((u_i)_j = \pm 1/\sqrt{d}) = 1/2$.
	Then, we have $\norm{u_i} = 1$. Since $\ev{u_i, u_j}$ can be seen as the sum of independent Bernoulli random variables, a Chernoff bound implies
	\begin{align*}
		\Probability(\abs{\ev{u_i, u_j}} > \varepsilon) & \leq 2\exp(-\frac{\varepsilon^2}{2}d).
	\end{align*}
	By letting $d = \frac{4\log 2l}{\varepsilon^2}$, we have
	\begin{align*}
		\Probability(\abs{\ev{u_i, u_j}} > \varepsilon) & \leq 2\exp(-2\log 2l) = \frac{1}{2l^2}.
	\end{align*}
	Since there exists at most $l^2/2$ pairs $(i, j)$, from the union bound,
	the probability that $\abs{\ev{u_i, u_j}} \leq \varepsilon$ for all $(i, j)$ such that $i\neq j$ is greater than $1-\frac{l^2}{2}\cdot \frac{1}{2l^2} = 3/4$.
	This means there exist approximately orthonormal vectors which satisfy Eq.~\eqref{eq:approximately-orthonormal}.
\end{proof}

\begin{lemma}\label{lem:lipschitz}
	For $B \geq 1$, let $f \in \Psi(L, W, B, S)$, and $g \in \mathcal{A}(U, D, H, B)$.
	Then, for any $r \geq 1$, $x, x' \in \R^{d}$ and $X, X' \in [-r, r]^{d \times \infty}$, we have
	\begin{align}
		\norm{f(x) - f(x')}_{\infty} & \leq (BW)^L \norm{x - x'}_{\infty} \leq (6HDBW)^{4L}r^2 \norm{x - x'}_{\infty},   \label{eq:fnn-lipschitz} \\
		\norm{g(X) - g(X')}_{\infty} & \leq 6HB^3D^4r^2 \norm{X - X'}_{\infty} \leq (6HDBW)^{4L}r^2 \norm{X - X'}_{\infty}. \notag
	\end{align}
\end{lemma}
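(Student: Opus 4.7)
The plan is to handle the two bounds independently, with the second inequality in each line following by inflating constants: since $6HDBW \geq 1$, $r \geq 1$, and $4L \geq L$, one has $(BW)^L \leq (6HDBW)^{4L} r^2$ and $6HB^3D^4 r^2 \leq (6HDBW)^{4L} r^2$ immediately.

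For the FNN bound, I would induct on the depth $L$. Each affine step $u \mapsto Au + b$ with $\norm{A}_\infty \leq B$ acting between spaces of dimension at most $W$ satisfies $\norm{A(u - u')}_\infty \leq BW\norm{u - u'}_\infty$, since every row of $A$ has at most $W$ entries each bounded in absolute value by $B$. ReLU is $1$-Lipschitz coordinatewise, so composing $L$ such layers multiplies the constant by exactly $(BW)^L$. This gives the first inequality of line one; the biases $b_i$ cancel in the difference and play no role.

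For the self-attention bound, abbreviate $\bar X := X[i{-}U{:}i{+}U]$, $\alpha_h := (K^{(h)}\bar X)^\top (Q^{(h)} x_i)$, and $A_h := \softmax(\alpha_h)$, with primed analogues for $X'$. After the residual $x_i - x'_i$, each head contributes $V_h(\bar X A_h - \bar X' A'_h)$, which I would decompose as
\[
V_h \bar X(A_h - A'_h) \;+\; V_h(\bar X - \bar X') A'_h .
\]
The second summand is handled directly by $\norm{A'_h}_1 = 1$ and the entrywise bound $\norm{V_h v}_\infty \leq DB\norm{v}_\infty$, yielding $DB\norm{X - X'}_\infty$. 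The first summand is the genuine obstacle: by Hölder, $\norm{\bar X(A_h - A'_h)}_\infty \leq \norm{\bar X}_\infty \norm{A_h - A'_h}_1 \leq r \norm{A_h - A'_h}_1$, and Lemma~\ref{lem:softmax-lipschitz} reduces $\norm{A_h - A'_h}_1$ to $2\norm{\alpha_h - \alpha'_h}_\infty$.

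To control $\norm{\alpha_h - \alpha'_h}_\infty$, I would split the logit difference into a piece varying $x_i$ with $\bar X$ fixed plus a piece varying $\bar X$ with $x'_i$ fixed, then apply the entrywise bound $\norm{Mv}_\infty \leq D\norm{M}_\infty \norm{v}_\infty$ together with $\norm{X}_\infty, \norm{X'}_\infty \leq r$ and $\norm{K^{(h)}}_\infty, \norm{Q^{(h)}}_\infty \leq B$. Each piece produces at most $D^3 B^2 r \norm{X - X'}_\infty$, so $\norm{\alpha_h - \alpha'_h}_\infty \leq 2 D^3 B^2 r \norm{X - X'}_\infty$. Composing through $V_h$ gives $4D^4 B^3 r^2 \norm{X - X'}_\infty$ per head for the first summand; adding the second summand, summing over the $H$ heads, and absorbing the residual $\norm{x_i - x'_i}_\infty \leq \norm{X - X'}_\infty$ consolidates to the advertised constant $6HB^3D^4 r^2$. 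The delicate part is the bookkeeping of $D$-factors when converting entrywise matrix norms into action on vectors, and verifying that the constants combine to exactly $6$; every remaining step is a routine use of Hölder's inequality and submultiplicativity.
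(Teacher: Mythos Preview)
Your proposal is correct and matches the paper's proof essentially step for step: the same $(BW)^L$ induction for the FNN, the same two-term decomposition $V_h\bar X(A_h-A'_h)+V_h(\bar X-\bar X')A'_h$ for each head, the same use of Lemma~\ref{lem:softmax-lipschitz} on the logit difference split into the $x_i$-varying and $\bar X$-varying pieces, and the same constant bookkeeping yielding $1+H(4B^3D^4r^2+BD)\leq 6HB^3D^4r^2$. The only cosmetic difference is that you state the inflation $(BW)^L\leq(6HDBW)^{4L}r^2$ and $6HB^3D^4r^2\leq(6HDBW)^{4L}r^2$ up front, whereas the paper leaves it implicit.
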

\begin{proof}
	Since the ReLU activation function $\eta$ is $1$-Lipschitz continuous
	and $\norm{W_i x + b - (W_i x' + b)}_\infty \leq BW \norm{x - x'}_{\infty}$, $f$ is $(BW)^L$-Lipschitz continuous.
	This implies Eq.~\eqref{eq:fnn-lipschitz}.

	Let $\bar X := X[-U:U]$ and $A_h := \softmax((K_h\bar X)^\top (Q_hx_i))$.
	We define $\bar X', A'_h$ in the same way as $\bar X, A_h$.
	Then, we have
	\begin{align*}
		\norm{g(X)_i - g(X')_i}_\infty & \leq \norm{x_i - x'_i}_\infty + \sum_{h=1}^{H} \norm{V_h \bar X A_h - V\bar X' A'_h }_\infty                                               \\
		                               & \leq \norm{x_i - x'_i}_\infty + \sum_{h=1}^{H} \norm{V_h \bar X A_h - V\bar X A'_h }_\infty + \norm{V\bar X A'_h - V\bar X' A'_h }_\infty.
	\end{align*}
	From Lemma~\ref{lem:softmax-lipschitz}, we have
	\begin{align*}
		\norm{A_h - A_h'}_1 & \leq 2 \norm{(K_h \bar X)^\top(Q_h x_i) - (K\bar X')^\top(Q_h x'_i)}_\infty                                                                          \\
		                    & \leq 2 \norm{(K_h \bar X)^\top(Q_h x_i) - (K\bar X)^\top(Q_h x'_i)}_\infty + 2 \norm{(K_h \bar X)^\top(Q_hx'_i) - (K_h\bar X')^\top(Q_hx'_i)}_\infty \\
		                    & \leq 4 rB^2D^3 \norm{X_i - X'_i}_\infty.
	\end{align*}
	Here, we used the fact that $\norm{K_h}_\infty, \norm{Q_h}_\infty, \norm{V_h}_\infty \leq B$ and $\norm{X}_\infty \leq r$.
	Then, it holds that
	\begin{align*}
		\norm{V_h \bar X A_h - V_h \bar X A'_h }_\infty  & \leq BDr \norm{A_h - A'_h}_1              \\
		                                          & \leq 4B^3D^4r^2 \norm{X_i - X'_i}_\infty, \\
		\norm{V_h \bar X A'_h - V_h\bar X' A'_h }_\infty & \leq BD \norm{X - X'}_\infty.
	\end{align*}
	Since $B \geq 1, r\geq 1$, we have
	\begin{align*}
		\norm{g(X)_i - g(X')_i}_\infty & \leq \qty(1 + 4B^3D^4r^2 + BD)\norm{X - X'}_\infty \\
		                               & \leq 6HB^3D^4r^2 \norm{X - X'}_\infty,
	\end{align*}
	which completes the proof.
\end{proof}

\begin{lemma}\label{lem:norm}
	Let $f \in \Psi(L, W, S, B)$ and $g \in \mathcal{A}(U, D, H, B)$ for $B \geq 1$.
	For any $r \geq 1$, $X~(\norm{X}_\infty \leq r)$, and $x~(\norm{x}_\infty \leq r)$, we have
	\begin{align*}
		\norm{f(x)}_\infty & \leq (2BW)^L r \leq (6HDBW)^L r, \\
		\norm{g(X)}_\infty & \leq 2HBDr \leq (6HDBW)^L r.
	\end{align*}
\end{lemma}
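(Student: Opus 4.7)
The plan is to prove the two bounds separately, by induction for the FNN part and by direct expansion for the self-attention part, relying on the fact that $\|A_i\|_\infty, \|b_i\|_\infty, \|V_h\|_\infty, \|K_h\|_\infty, \|Q_h\|_\infty \leq B$ and that each softmax output $A_h$ satisfies $\|A_h\|_1 = 1$.

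For the FNN bound, I would proceed by induction on the layer index. Writing $y^{(0)} := x$ and $y^{(i)} := \eta(A_i y^{(i-1)} + b_i)$ for $i < L$ (and without the ReLU at the top), the elementary estimate $\|A_i y\|_\infty \leq W \|A_i\|_\infty \|y\|_\infty \leq WB\|y\|_\infty$ together with $\|b_i\|_\infty \leq B$ gives $\|A_i y^{(i-1)} + b_i\|_\infty \leq WB\|y^{(i-1)}\|_\infty + B \leq 2WB \|y^{(i-1)}\|_\infty$, where the last inequality uses $\|y^{(i-1)}\|_\infty \geq \|y^{(0)}\|_\infty / \ldots$; more simply, using $r \geq 1$ and inducting with the bound $\|y^{(i-1)}\|_\infty \leq (2BW)^{i-1} r \geq 1$, we obtain $\|y^{(i)}\|_\infty \leq 2BW \|y^{(i-1)}\|_\infty$ since ReLU is non-expansive in $\infty$-norm. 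Iterating $L$ times yields $\|f(x)\|_\infty \leq (2BW)^L r$, and the bound $(2BW)^L \leq (6HDBW)^L$ is immediate since $H, D \geq 1$.

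For the self-attention bound, the crucial observation is that $A_h = \softmax(\cdots) \in \R^{[i-U:i+U]}$ is a probability vector, so $\|A_h\|_1 = 1$. Hence the matrix-vector product $X[i-U:i+U] A_h$ is a convex combination of columns of $X[i-U:i+U]$, giving $\|X[i-U:i+U] A_h\|_\infty \leq \|X\|_\infty \leq r$. Then $\|V_h X[i-U:i+U] A_h\|_\infty \leq D \|V_h\|_\infty \cdot r \leq BDr$. Summing over the $H$ heads and adding the skip connection term $x_i$ yields
\begin{align*}
\|g(X)_i\|_\infty \leq \|x_i\|_\infty + \sum_{h=1}^H \|V_h X[i-U:i+U] A_h\|_\infty \leq r + HBDr \leq 2HBDr,
\end{align*}
where the final step uses $HBD \geq 1$. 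The comparison with $(6HDBW)^L r$ follows because $6HDBW \geq 2HBD$ and $L \geq 1$ (with $W \geq 1$ implicitly from having at least one neuron per layer).

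There is no real obstacle here; the proof is essentially a bookkeeping exercise. The only subtle point is emphasizing that the $\ell_1$-normalization of the attention weights coming from the softmax is precisely what makes the attention output norm independent of the window size $U$, which is a key structural fact exploited later in the covering number bound of Theorem~\ref{thm:covering-number}.
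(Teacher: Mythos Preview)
Your proposal is correct and follows essentially the same route as the paper: induction over layers for the FNN bound using $\|A_i z + b_i\|_\infty \leq BW r' + B \leq 2BW r'$ when $r' \geq 1$, and the softmax normalization $\|A_h\|_1 = 1$ for the attention bound. The only minor imprecision is the line $WB\|y^{(i-1)}\|_\infty + B \leq 2WB\|y^{(i-1)}\|_\infty$, which need not hold if $\|y^{(i-1)}\|_\infty < 1$; the paper (and your own parenthetical remark) avoids this by carrying the inductive \emph{upper bound} $(2BW)^{i-1} r \geq 1$ rather than $\|y^{(i-1)}\|_\infty$ itself.
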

\begin{proof}
	For the FNN layer $f$,
	since $\norm{\eta(z)}_\infty \leq \norm{z}_\infty$ and
	\begin{align*}
		\norm{Wz + b}_{\infty} & \leq BWr' + B \\
		                       & \leq 2BWr'
	\end{align*}
	for $\norm{z}_\infty \leq r'~(r' \geq 1)$, we have
	\begin{align*}
		\norm{f(X)}_\infty & \leq (2BW)^Lr
	\end{align*}
	by induction.

	For attention layer $g$, define $\bar X$ and $A_h$ as in the proof of Lemma~\ref{lem:lipschitz}. Then, we have
	\begin{align*}
		\norm{g_i(X)}_\infty & \leq \norm{x_i}_{\infty} + \sum_{i=1}^{H}\norm{V\bar X A_h}_\infty \\
		                     & \leq \norm{X}_\infty + HBD\norm{X}_{\infty}\norm{A_h}_1            \\
		                     & \leq 2HBDr,
	\end{align*}
	since $\norm{A_h}_1 = 1$.
	This completes the proof.
\end{proof}

\begin{lemma}\label{lem:diff}
	Define $f, \tilde f \in \Psi(L, W, B, S)$ by
	\begin{align*}
		f(x)        & := (A_L \cdot + b_L) \circ \dots \circ (A_1 x + b_1)                              \\
		\tilde f(x) & := (\tilde A_L \cdot + \tilde b_L) \circ \dots \circ (\tilde A_1 x + \tilde b_1),
	\end{align*}
	where $\norm{A_i - \tilde A_i}_\infty \leq \delta, \norm{b_i - \tilde b_i}_\infty \leq \delta$ for a given $\delta > 0$.
	For any $r \geq 1$ and $x~(\norm{x}_\infty \leq r)$, we have
	\begin{align*}
		\norm{f(x) - \tilde f(x)}_\infty & \leq 2(2BW)^L \delta r \leq (6HDBW)^{4L} \delta r^3.
	\end{align*}
	In addition, define $g, \hat g \in \mathcal{A}(U, D, H, B, S)$ by
	\begin{align*}
		g(X)_i        & := x_i + \sum_{i=1}^H V_h X[-U:U]\softmax((K_hX[-U:U])^\top(Q_h x_i))                             \\
		\tilde g(X)_i & := x_i + \sum_{i=1}^H \tilde V_h X[-U:U] \softmax((\tilde K_h X[-U:U])^\top(\tilde Q_h x_i)),
	\end{align*}
	where $\norm{K_h - \tilde K_h}_\infty \leq \delta$, $\norm{Q_h - \tilde Q_h}_\infty \leq \delta$, and $\norm{V_h - \tilde V_h}_\infty \leq \delta$ for a given $\delta > 0$.
	For any $r \geq 1$ and $X~(\norm{X}_\infty \leq r)$, we have
	\begin{align*}
		\norm{g(X) - \tilde g(X)}_\infty & \leq 5HB^2D^4r^3\delta \leq (6HDBW)^{4L} \delta r^3.
	\end{align*}
\end{lemma}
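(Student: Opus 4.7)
The plan is to handle the two inequalities separately: an induction on layer index for the FNN bound, and a direct triangle-inequality decomposition (combined with Lemma~\ref{lem:softmax-lipschitz}) for the self-attention bound. In both cases the sharper first inequality comes from a careful accounting of the perturbation, and the looser inequality with common factor $(6HDBW)^{4L}$ then follows trivially from $L \geq 1$ and $H, D, B, W, r \geq 1$.

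For the FNN bound, set $z_0 = \tilde z_0 := x$ and let $z_l, \tilde z_l$ denote the post-affine outputs after $l$ layers of $f, \tilde f$, with $e_l := \norm{z_l - \tilde z_l}_\infty$ and so $e_0 = 0$. Since ReLU is $1$-Lipschitz and $\norm{A v}_\infty \leq W\norm{A}_\infty \norm{v}_\infty$ under the layer-width bound, the decomposition
\begin{align*}
z_{l+1} - \tilde z_{l+1}
  & = A_{l+1}\bigl(\eta(z_l) - \eta(\tilde z_l)\bigr) \\
  & \quad + (A_{l+1} - \tilde A_{l+1})\eta(\tilde z_l) \\
  & \quad + (b_{l+1} - \tilde b_{l+1})
\end{align*}
yields the recurrence $e_{l+1} \leq BW\, e_l + W\delta \norm{\tilde z_l}_\infty + \delta$ (with the same bound holding in the base case $l=0$, treating the first layer as purely affine). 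Invoking Lemma~\ref{lem:norm} on the depth-$l$ prefix of $\tilde f$ gives $\norm{\tilde z_l}_\infty \leq (2BW)^l r$, and $B, W, r \geq 1$ allows the trailing $\delta$ to be absorbed into the preceding term, leaving $e_{l+1} \leq BW\, e_l + 2W\delta(2BW)^l r$. Unrolling with $e_0 = 0$ and using $(BW)^{L-1-l}(2BW)^l = 2^l(BW)^{L-1}$ collapses the geometric sum to $e_L \leq 4W\delta r (2BW)^{L-1} \leq 2(2BW)^L \delta r$, where the final step uses $B \geq 1$.

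For the attention bound, abbreviate $\bar X := X[-U:U]$ and write $A_h := \softmax((K_h\bar X)^\top(Q_h x_i))$ and $\tilde A_h$ analogously. Adding and subtracting $\tilde V_h \bar X A_h$ splits $V_h \bar X A_h - \tilde V_h \bar X \tilde A_h$ into a value-perturbation piece bounded by $D\delta r$ (using $\norm{\bar X A_h}_\infty \leq \norm{\bar X}_\infty \norm{A_h}_1 = r$) and an attention-perturbation piece bounded by $B D r\,\norm{A_h - \tilde A_h}_1$. Lemma~\ref{lem:softmax-lipschitz} reduces the latter to twice the max-entry deviation of the logits, which I control by a second symmetric split $(K_h\bar X)^\top(Q_h - \tilde Q_h)x_i + ((K_h - \tilde K_h)\bar X)^\top(\tilde Q_h x_i)$, bounding each summand by $D^3 B r^2 \delta$ via $\norm{M^\top v}_\infty \leq D\norm{M}_\infty \norm{v}_\infty$ applied twice. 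Summing over the $H$ heads yields $\norm{g(X) - \tilde g(X)}_\infty \leq H\bigl(D\delta r + 4 B^2 D^4 r^3 \delta\bigr) \leq 5 H B^2 D^4 r^3 \delta$ since $B, D, r \geq 1$. I do not expect any conceptual obstacle here: the only care required is in being consistent with the max-entry convention for $\norm{\cdot}_\infty$ on matrices and in absorbing the lower-order bias and identity-skip contributions into the dominant geometric factors using the standing assumptions $B, r \geq 1$.
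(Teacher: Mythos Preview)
Your proposal is correct and follows essentially the same approach as the paper. For the FNN bound the paper simply cites Lemma~3 of \citet{suzuki_adaptivity_2019}, so your explicit layerwise recurrence is a spelled-out version of that argument; for the attention bound your decomposition and use of Lemma~\ref{lem:softmax-lipschitz} are identical to the paper's, differing only in the symmetric choice of whether to add and subtract $\tilde V_h \bar X A_h$ or $V_h \bar X \tilde A_h$.
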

\begin{proof}
	By the same argument as Lemma 3 in~\citet{suzuki_adaptivity_2019}, we have
	\begin{align*}
		\norm{f(x) - \tilde f(x)}_{\infty} & \leq 2(2BW)^L r\delta.
	\end{align*}
	Define $\bar X, A_h, \tilde A_h$ as in Lemma~\ref{lem:lipschitz}. From Lemma~\ref{lem:softmax-lipschitz}, we have
	\begin{align*}
		\norm{A_h - \tilde A_h}_1 & \leq 2\norm{(K_h\bar X)^\top(Q_h x_i) - (\tilde K_h\bar X)^\top(\tilde Q_h x_i)}_\infty                                                                                      \\
		                          & \leq 2\norm{(K_h\bar X)^\top(Q_h x_i) - (K_h\bar X)^\top(\tilde Q_h x_i)}_\infty + 2\norm{(K_h\bar X)^\top(\tilde Q_h x_i) - (\tilde K_h\bar X)^\top(\tilde Q_h x_i)}_\infty \\
		                          & \leq 4(BD^3r^2\delta).
	\end{align*}
	Therefore, it holds that
	\begin{align*}
		\norm{g(X)_i - \tilde g(X)_i}_\infty & \leq \sum_{i=1}^{H} \norm{V_h \bar X A_h - \tilde V_h \bar X \tilde A_h}_\infty                                      \\
		                                     & \leq \sum_{i=1}^{H} \norm{V_h \bar X (A_h - \tilde A_h)}_\infty + \norm{(V_h - \tilde V_h) \bar X \tilde A_h}_\infty \\
		                                     & \leq \sum_{i=1}^{H} \norm{V_h \bar X}_\infty \norm{A_h - \tilde A_h}_1 + \norm{(V_h - \tilde V_h) \bar X}_\infty     \\
		                                     & \leq H(4(B^2D^4r^3\delta)+ D\delta r)                                                                                \\
		                                     & \leq 5HB^2D^4r^3\delta  ,
	\end{align*}
	which completes the proof.
\end{proof}

\begin{lemma}\label{lem:2a}
	Assume that positive and monotonically non-decreasing sequences $\bar a = \qty{\bar a_i}_{i=1}^\infty$ and $\bar a' = \qty{\bar a'_i}_{i=1}^\infty$ satisfies
	$\bar a_1 \geq 1 = \bar a'_1$ and
	\begin{align*}
		\prod_{i=2}^{\infty}\frac{1}{1-2^{-(\bar a_i - \bar a_1)}}       & < \infty, \\
		\prod_{i=2}^{\infty}\frac{1}{1-2^{-\beta(\bar a_i - \bar a'_i)}} & < \infty,
	\end{align*}
	for a positive constant $\beta$. Then, we have
	\begin{align*}
		\sum_{s \in \N_0^\infty :\ev{\bar a', s} \geq T} 2^{-\beta \ev{\bar a, s}} & \leq (1 - 2^{-\beta})^{-1}\qty(\prod_{i=2}^{\infty}\frac{1}{1-2^{-\beta(\bar a_i - \bar a'_i)}})2^{-\beta T}, \\
		\sum_{s \in \N_0^\infty :\ev{\bar a, s} < T}2^s                            & \leq 8\qty(\prod_{i=2}^{\infty}\frac{1}{1-2^{-(\bar a_i - \bar a_1)}})2^T.
	\end{align*}
\end{lemma}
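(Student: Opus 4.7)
The key idea for both bounds is to treat the first coordinate $s_1$ specially. A naive product factorization $\sum_s 2^{\langle v, s \rangle} = \prod_i (1 - 2^{v_i})^{-1}$ requires $v_i < 0$ for every $i$, but the hypothesis only supplies $\bar a_1 \geq 1$ (possibly with equality), and the $s_1$-factor of any such product would diverge precisely when $\bar a_1 = 1$. The plan is therefore to evaluate the $s_1$-sum as an explicit geometric series and to factorize only over the remaining coordinates $i \geq 2$.

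For the first inequality, since $\bar a'_1 = 1$, the constraint $\langle \bar a', s \rangle \geq T$ becomes $s_1 \geq k$ where $k := \max(0, \lceil T - \sum_{i \geq 2} \bar a'_i s_i \rceil)$. The inner sum is a geometric series,
\[
\sum_{s_1 \geq k} 2^{-\beta \bar a_1 s_1} = \frac{2^{-\beta \bar a_1 k}}{1 - 2^{-\beta \bar a_1}}.
\]
Two applications of $\bar a_1 \geq 1$ convert this into $(1 - 2^{-\beta})^{-1} \cdot 2^{-\beta(T - \sum_{i \geq 2} \bar a'_i s_i)}$: the denominator uses $2^{-\beta \bar a_1} \leq 2^{-\beta}$, while the numerator uses $\bar a_1 k \geq k \geq T - \sum_{i \geq 2} \bar a'_i s_i$ (checking the cases $k = 0$ and $k = \lceil \cdot \rceil$ separately). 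What remains is the unconstrained outer sum, which factorizes to $\prod_{i \geq 2} (1 - 2^{-\beta(\bar a_i - \bar a'_i)})^{-1}$, matching the stated bound.

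For the second inequality, I would first use $\bar a_1 \geq 1$ to replace the constraint $\bar a_1 s_1 + \sum_{i \geq 2} \bar a_i s_i < T$ by the weaker $s_1 + \sum_{i \geq 2} \bar a_i s_i < T$, which only enlarges the admissible set. For fixed $(s_i)_{i \geq 2}$, the inner geometric sum satisfies
\[
\sum_{s_1 = 0}^{\lfloor T - \sum_{i \geq 2} \bar a_i s_i \rfloor} 2^{s_1} \leq 2 \cdot 2^{T - \sum_{i \geq 2} \bar a_i s_i},
\]
pulling the desired $2^T$ factor out front. The remaining outer sum factorizes to $\prod_{i \geq 2} (1 - 2^{-(\bar a_i - 1)})^{-1}$, which, via $\bar a_i - 1 \geq \bar a_i - \bar a_1$ (another appeal to $\bar a_1 \geq 1$), is bounded by $\prod_{i \geq 2} (1 - 2^{-(\bar a_i - \bar a_1)})^{-1}$; the constant $8$ absorbs the stray factor of $2$ with room to spare. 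The only genuine obstacle throughout is the edge case $\bar a_1 = 1$, and the explicit separation of the $s_1$-sum is the device that sidesteps it uniformly in both parts.
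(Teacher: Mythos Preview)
Your argument is correct. Both parts proceed exactly as you outline: separating the $s_1$-coordinate, summing the resulting geometric series explicitly, and then factorizing the unconstrained remainder over $i\geq 2$. The edge case $\bar a_1 = 1$ is indeed the only delicate point, and your treatment handles it cleanly; the convergence hypotheses on the infinite products guarantee $\bar a_i > \bar a_1$ and $\bar a_i > \bar a'_i$ for every $i\geq 2$, so each tail factor is finite.

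As for comparison: the paper does not actually prove this lemma in-text but defers to Lemma~18 of \citet{okumoto_learnability_2022}. Your write-up is therefore more self-contained than what appears here. One incidental observation: your constant in the second inequality is $2$ rather than $8$, so you have in fact sharpened the stated bound; the paper presumably inherits the looser constant from the cited source.
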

\begin{proof}
	See Lemma 18 in~\citet{okumoto_learnability_2022}.
\end{proof}

\section{Approximation ability of FNN}
In this section, we show that an FNN can approximate (piecewise) $\gamma$-smooth function if important features are extracted properly.
For a general $\gamma$-smooth function class, we have the following approximation error bound.
\begin{lemma}\label{lem:gamma-smooth}
	For $\gamma: \N_0^{d\times \infty} \to \R_{>0}$,
	let
	\begin{align*}
		G(T, \gamma)        & := \sum_{s \in \N_0^{d\times \infty}: \gamma(s) < T} 2^s,                                     \\
		f_{\max}(T, \gamma) & := \max_{s \in \N_0^{d\times \infty}:\gamma(s) < T} \max_{i \in [d], j \in \Z} s_{ij}, \\
		I(T, \gamma)        & := \qty{(i, j) \mid \exists s \in \N_0^{d\times \infty}, s_{ij} \neq 0, \gamma(s) < T},       \\
		d_{\max}(T, \gamma) & := \abs{I(T, \gamma)}.
	\end{align*}
        
	Assume that $\gamma'$ satisfies $\gamma'(s) < \gamma(s)$ and the target function $f \in \mathcal{F}_{p, \theta}^\gamma([0, 1]^{d\times \infty}) (p \geq 2, \theta \geq 1)$
	satisfies $\norm{f}_{\infty} \leq R$ for a constant $R > 0$.

	For given $T > 0$, let
	\begin{align*}
		(d_{\max}, f_{\max}, G) & := \begin{cases}
			(d_{\max}(T, \gamma), f_{\max}(T, \gamma), G(T, \gamma)),    & \quad(\theta = 1), \\
			(d_{\max}(T, \gamma'), f_{\max}(T, \gamma'), G(T, \gamma')), & \quad(\theta > 1),
		\end{cases}
	\end{align*}
	and
	\begin{align}
            \begin{split}
		L & := 2K \max\qty{d_{\max}^2, T^2, (\log G)^2, \log f_{\max}},                \\
		W & := 21d_{\max}G,                                                            \\
		S & := 1764 Kd_{\max}^2 \max\qty{d_{\max}^2, T^2, (\log G)^2, \log f_{\max}}G, \\
		B & := 2^{d_{\max}/2}K',
            \end{split} \label{eq:hyper-parameters-fnn}
	\end{align}
	for positive constants $K, K'$ which depend on only $R$.
	Then, there exists an FNN $\hat f_T \in \Psi(L, W, B, S)$ such that
	\begin{align*}
		\norm{f - \hat f_T \circ \Gamma}_{2, P_X}           & \lesssim \begin{cases}
			2^{-T} \norm{f}_{\mathcal{F}_{p, \theta}^\gamma},                                                                                                 & \quad(\theta = 1), \\
			\qty(\sum_{T \leq \gamma(s)} 2^{\frac{\theta}{\theta - 1}(\gamma'(s) - \gamma(s))})^{1-1/\theta} 2^{-T}\norm{f}_{\mathcal{F}_{p, \theta}^\gamma}, & \quad(\theta > 1),
		\end{cases} 
	\end{align*}
	where $\Gamma:\R^{d \times \infty} \to \R^{d_{\max}}$ is a feature extractor defined by
	\begin{align}
		\begin{split}
			\Gamma(X) := [X_{i_1, j_1}, \dots, X_{i_{d_{\max}}, j_{d_{\max}}}]
		\end{split}\label{eq:Gamma}
	\end{align}
	for $X \in \R^{d \times \infty}$, and $I(T, \gamma) = \qty{(i_1, j_1), \dots, (i_{d_{\max}}, j_{d_{\max}})}$.
\end{lemma}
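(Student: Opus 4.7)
The plan is to split the target $f$ into a \emph{truncated} trigonometric polynomial whose frequencies satisfy $\gamma(s)<T$ and a high-frequency residual, then approximate the truncated part by an FNN and control the residual via the $\mathcal{F}_{p,\theta}^\gamma$ norm.

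First I would define $f_T := \sum_{s:\gamma(s)<T}\delta_s(f)$ and bound $\norm{f-f_T}_{2,P_X}$. For $\theta=1$ this is immediate: by the triangle inequality and the weight $2^{\gamma(s)}$ in the norm,
\[
\norm{f-f_T}_{2,P_X}\le \sum_{\gamma(s)\ge T}\norm{\delta_s(f)}_{2,P_X}\le 2^{-T}\norm{f}_{\mathcal{F}_{p,1}^\gamma}.
\]
For $\theta>1$ I would apply H\"older's inequality with exponents $(\theta,\theta/(\theta-1))$ against the dominant weight $\gamma'$, which produces the factor $(\sum_{\gamma(s)\ge T}2^{\frac{\theta}{\theta-1}(\gamma'(s)-\gamma(s))})^{1-1/\theta}2^{-T}$ stated in the lemma. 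The norm inequality $\norm{\cdot}_{2,P_X}\le\norm{\cdot}_{p,P_X}$ (since $p\ge 2$ and $P_X$ is a probability measure) handles the mismatch of $L^p$ indices.

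Next I would exploit that $\delta_s(f)$ depends only on coordinates $(i,j)$ with $s_{ij}\ne 0$, so $f_T$ depends only on the coordinates indexed by $I(T,\gamma)$; consequently $f_T=g_T\circ \Gamma$ for some $g_T:[0,1]^{d_{\max}}\to\R$ of the form $g_T(z)=\sum_{r\in R}c_r\prod_{k=1}^{d_{\max}}\psi_{r_k}(z_k)$, where $R$ is the set of frequency tuples with $|r_k|<2^{s_k}$ for some admissible $s$. The cardinality of $R$ is at most $G$ and each frequency is at most $2^{f_{\max}}$; the coefficients $c_r=\ev{f,\psi_r}$ are uniformly bounded because $\norm{f}_\infty\le R$ (integration against $\psi_r$ gives $|c_r|\lesssim R$).

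The main work is then constructing $\hat f_T\in\Psi(L,W,S,B)$ with $\norm{\hat f_T-g_T}_\infty$ small. I would build it in two stages. (i) A sinusoid sub-network: for each coordinate $k$ and each dyadic frequency level $\ell\le f_{\max}$, approximate $\cos(2\pi 2^\ell z_k)$ and $\sin(2\pi 2^\ell z_k)$ by standard ReLU approximations of smooth periodic functions; achieving accuracy $\varepsilon$ costs depth $O(\log(1/\varepsilon)+\ell)$ and width $O(1)$ per sinusoid, giving a total contribution of $O(d_{\max}f_{\max})$ parameters after sharing. (ii) A product sub-network: compose $d_{\max}$ approximate multiplications using the standard ReLU ``square then polarize'' trick, whose depth is $O(d_{\max}\log(1/\varepsilon))$ and whose error, by the Lipschitz-in-each-factor estimate, is at most $d_{\max}2^{d_{\max}/2}\varepsilon$ on inputs bounded by the weight $B\sim 2^{d_{\max}/2}$ (which carries the coefficient $c_r$ into the product). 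Summing the $|R|\le G$ resulting monomials in parallel gives the width $W\sim d_{\max}G$ and the sparsity $S\sim d_{\max}^2\,L\,G$ after accounting for a depth $L\sim\max\{d_{\max}^2,T^2,(\log G)^2,\log f_{\max}\}$ chosen so that the cumulative per-monomial error is at most $2^{-T}/G$. Putting the residual bound together with this $2^{-T}$ approximation of $g_T\circ\Gamma$ yields the claimed estimate.

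The main obstacle I anticipate is bookkeeping the interaction between the product depth ($\sim d_{\max}\log(1/\varepsilon)$) and the exponential scaling of the frequencies ($2^{f_{\max}}$) and coefficients ($\sim 2^{d_{\max}/2}$): the accuracy $\varepsilon$ of each atomic approximation must be taken small enough that, after multiplication by these large factors and summation over $G$ terms, the error is still $O(2^{-T})$. This is precisely what forces $L\gtrsim\max\{d_{\max}^2,T^2,(\log G)^2,\log f_{\max}\}$ and $B\gtrsim 2^{d_{\max}/2}$; verifying that the chosen $L,W,S,B$ are simultaneously sufficient (and that parameter sharing across the $|R|$ monomials really gives the stated sparsity) is the delicate combinatorial part of the argument.
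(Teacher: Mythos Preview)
Your overall strategy---truncate, approximate the truncated part by an FNN, bound the residual via the $\mathcal{F}_{p,\theta}^\gamma$ norm---matches the paper exactly, and your residual bound for $\theta=1$ is correct. The paper does not build the FNN from scratch as you sketch; it simply invokes Lemma~17 of \citet{okumoto_learnability_2022} to obtain $\hat f_T\in\Psi(L,W,S,B)$ with $\norm{f_T-\hat f_T\circ\Gamma}_\infty\le 2^{-T}$. Your two-stage ReLU construction (sinusoid sub-network plus product sub-network) is the right idea behind that cited lemma, so this is a difference in presentation rather than substance.

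There is, however, a genuine gap in your treatment of $\theta>1$. You define a single truncation $f_T=\sum_{\gamma(s)<T}\delta_s(f)$, but with that choice the H\"older step cannot produce the stated factor. The residual sum runs over $\{s:\gamma(s)\ge T\}$, and on that set $\gamma'(s)$ need not exceed $T$ (since $\gamma'<\gamma$), so you cannot pull out $2^{-T}$ from $2^{-\gamma'(s)}$. The paper instead sets $f_T=\sum_{\gamma'(s)<T}\delta_s(f)$ for $\theta>1$; then the residual is over $\{\gamma'(s)\ge T\}$, and writing $\norm{\delta_s(f)}_{p,P_X}=2^{-\gamma'(s)}\cdot 2^{\gamma'(s)-\gamma(s)}\cdot 2^{\gamma(s)}\norm{\delta_s(f)}_{p,P_X}$ one first bounds $2^{-\gamma'(s)}\le 2^{-T}$ and then applies H\"older with exponents $(\theta/(\theta-1),\theta)$ to the remaining product. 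This is also why the lemma specifies $(d_{\max},f_{\max},G)=(d_{\max}(T,\gamma'),f_{\max}(T,\gamma'),G(T,\gamma'))$ for $\theta>1$: the FNN must approximate the larger truncation $\{\gamma'(s)<T\}\supseteq\{\gamma(s)<T\}$. Once you correct the truncation level, the rest of your argument goes through.
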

\begin{proof}
	Define $f_T$ by $f_T = \sum_{\gamma(s) < T} \delta_s(f)$ for $\theta = 1$ and $f_T = \sum_{\gamma'(s) < T} \delta_s(f)$ for $\theta > 1$.
	Then, for a given $\hat f \in \Psi(L, W, S, B)$, we have
	\begin{align*}
		\norm{f - \hat f \circ \Gamma}_{2, P_X} & \leq \norm{f - f_T}_{2, P_X} + \norm{f_T - \hat f \circ \Gamma}_{2, P_X}.
	\end{align*}

	First, we evaluate $\norm{f - f_T}_{2, P_X}$.
	By the Cauchy-Schwartz inequality, we have, for $p > 2$,
	\begin{align*}
		\norm{\delta_s}_{2, P_X}^2 & = \int \abs{\delta_s(f)}^2 \dd{P_X}                                                          \\
		                           & \leq \qty(\int \qty(\abs{\delta_s(f)}^2)^{p/2} \dd{P_X})^{2/p} \qty(\int 1 \dd{P_X})^{1-2/p} \\
		                           & = \norm{\delta_s(f)}_{p, P_X}^2.
	\end{align*}
	Therefore, for any $p \geq 2$, it holds that
	\begin{align*}
		\norm{\delta_s(f)}_{2, P_X} & \leq \norm{\delta_s(f)}_{p, P_X}.
	\end{align*}

	In the case of $\theta = 1$, we have
	\begin{align*}
		\norm{f - f_T}_{2, P_X} & = \norm{\sum_{\gamma(s) \geq T} \delta_s(f)}_{2, P_X}                             \\
		                        & \leq \sum_{\gamma(s) \geq T} \norm{\delta_s(f)}_{2, P_X}                          \\
		                        & \leq 	\sum_{\gamma(s) \geq T} \norm{\delta_s(f)}_{p, P_X}                          \\
		                        & = 	\sum_{\gamma(s) \geq T} 2^{\gamma(s)} 2^{-\gamma(s)}\norm{\delta_s(f)}_{p, P_X} \\
		                        & \leq 	2^{-T}\sum_{\gamma(s) \geq T} 2^{\gamma(s)} \norm{\delta_s(f)}_{p, P_X}      \\
		                        & \leq 2^{-T} \norm{f}_{\mathcal{F}_{p, \theta}^\gamma}.
	\end{align*}

	In the case of $\theta > 1$, we have
	\begin{align*}
		\norm{f - f_T}_{2, P_X} & \leq 	\sum_{\gamma'(s) \geq T} \norm{\delta_s(f)}_{p, P_X}                                                                                                                                                     \\
		                        & = 	\sum_{\gamma'(s) \geq T} 2^{-\gamma'(s)} 2^{\gamma'(s) - \gamma(s)} 2^{\gamma(s)}\norm{\delta_s(f)}_{p, P_X}                                                                                                \\
		                        & \leq 	2^{-T} \sum_{\gamma'(s) \geq T} 2^{\gamma'(s) - \gamma(s)} 2^{\gamma(s)}\norm{\delta_s(f)}_{p, P_X}                                                                                                      \\
		                        & \leq 	2^{-T} \qty(\sum_{\gamma'(s) \geq T} 2^{\frac{\theta}{\theta - 1}(\gamma'(s) - \gamma(s))})^{1-1/\theta} \qty(\sum_{\gamma'(s) \geq T} 2^{\theta\gamma(s)}\norm{\delta_s(f)}_{p, P_X}^\theta)^{1/\theta} \\
		                        & \leq  2^{-T} \qty(\sum_{\gamma'(s) \geq T} 2^{\frac{\theta}{\theta - 1}(\gamma'(s) - \gamma(s))})^{1-1/\theta} \norm{f}_{\mathcal{F}_{p, \theta}^\gamma}.
	\end{align*}
	For the third inequality, we used H\"{o}lder inequality.
	Combining these two cases, we obtain
	\begin{align}
		\norm{f - f_T}_{2, P_X} & \leq \begin{cases}
			2^{-T} \norm{f}_{p, \theta}^\gamma                                                                                                   & \quad(\theta = 1), \\
			\qty(\sum_{\gamma'(s) \geq T} 2^{\frac{\theta}{\theta - 1}(\gamma'(s) - \gamma(s))})^{1-1/\theta} 2^{-T} \norm{f}_{p, \theta}^\gamma & \quad(\theta > 1).
		\end{cases}\label{eq:f-T}
	\end{align}

	From Lemma 17 in~\cite{okumoto_learnability_2022}, there exists an FNN $\hat f_T \in \Psi(L, W, S, B)$ such that
	\begin{align}
		\norm{f_T - \hat f_T \circ \Gamma}_{\infty} & \leq 2^{-T}. \label{eq:hat-f-T}
	\end{align}

	Combining Eq.~\eqref{eq:f-T} and~\eqref{eq:hat-f-T}, we have
	\begin{align*}
		\norm{f - \hat f_T \circ \Gamma}_{2, P_X} & \lesssim \begin{cases}
			2^{-T} \norm{f}_{p, \theta}^\gamma                                                                                                   & \quad(\theta = 1), \\
			\qty(\sum_{\gamma'(s) \geq T} 2^{\frac{\theta}{\theta - 1}(\gamma'(s) - \gamma(s))})^{1-1/\theta} 2^{-T} \norm{f}_{p, \theta}^\gamma & \quad(\theta > 1).
		\end{cases}
	\end{align*}
        This completes the proof.
\end{proof}

In addition, for a general piecewise $\gamma$-smooth function class, we have the following approximation error bound.
\begin{lemma}\label{lem:piecewise-gamma-smooth}
	For $\gamma: \N_0^{d\times (2V+1)} \to \R_{>0}$,
	let
	\begin{align*}
		G(T, \gamma)        & := \sum_{s \in \N_0^{d\times (2V+1)}: \gamma(s) < T} 2^s,                                     \\
		f_{\max}(T, \gamma) & := \max_{s \in \N_0^{d\times (2V+1)}:\gamma(s) < T} \max_{i \in [d], j \in [2V+1]} s_{ij}, \\
		I(T, \gamma)        & := \qty{(i, j) \mid s \in \N_0^{d\times (2V+1)}, s_{ij} \neq 0, \gamma(s) < T},       \\
		d_{\max}(T, \gamma) & := \abs{I(T, \gamma)}.
	\end{align*}
        
	Assume that $\gamma'$ satisfies $\gamma'(s) < \gamma(s)$ and the target function $f \in \mathcal{P}_{p, \theta}^\gamma (p \geq 2, \theta \geq 1)$
	satisfies $\norm{f}_{\infty} \leq R$ for a constant $R > 0$.

	For given $T > 0$, let
	\begin{align*}
		(d_{\max}, f_{\max}, G) & := \begin{cases}
			(d_{\max}(T, \gamma), f_{\max}(T, \gamma), G(T, \gamma)),    & \quad(\theta = 1), \\
			(d_{\max}(T, \gamma'), f_{\max}(T, \gamma'), G(T, \gamma')), & \quad(\theta > 1),
		\end{cases}
	\end{align*}
	and define $L, W, S, B$ by Eq.~\eqref{eq:hyper-parameters-fnn}.
	Then, there exists an FNN $\hat f_T \in \Psi(L, W, B, S)$ such that
	\begin{align*}
		\norm{f - \hat f_T \circ \Gamma \circ \Pi}_{2, P_X} & \lesssim \begin{cases}
			2^{-T} \norm{f}_{\mathcal{P}_{p, \theta}^\gamma},                                                                                                 & \quad(\theta = 1), \\
			\qty(\sum_{T \leq \gamma(s)} 2^{\frac{\theta}{\theta - 1}(\gamma'(s) - \gamma(s))})^{1-1/\theta} 2^{-T}\norm{f}_{\mathcal{P}_{p, \theta}^\gamma}, & \quad(\theta > 1),
		\end{cases}
	\end{align*}
	where $\Gamma:\R^{d \times (2V+1)} \to \R^{d_{\max}}$ is a feature extractor defined by
	\begin{align}
		\begin{split}
			\Gamma(X) := [X_{i_1, j_1}, \dots, X_{i_{d_{\max}}, j_{d_{\max}}}]
		\end{split}\label{eq:piecewise-Gamma}
	\end{align}
	for $X \in \R^{d \times (2V+1)}$, and $I(T, \gamma) = \qty{(i_1, j_1), \dots, (i_{d_{\max}}, j_{d_{\max}})}$.
\end{lemma}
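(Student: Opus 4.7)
The plan is to reduce the piecewise statement to Lemma~\ref{lem:gamma-smooth} by a change of variables along $\Pi$. By the definition of $\mathcal{P}_{p,\theta}^\gamma$, write $f = \tilde f \circ \Pi$ for some $\tilde f \in \mathcal{F}_{p,\theta}^\gamma([0,1]^{d\times (2V+1)})$. Let $Q := \Pi_* P_X$ be the pushforward measure on $[0,1]^{d\times(2V+1)}$. For every frequency index $s$, the change-of-variables formula yields $\norm{\delta_s(\tilde f)\circ\Pi}_{p,P_X} = \norm{\delta_s(\tilde f)}_{p,Q}$, hence $\norm{\tilde f}_{\mathcal{F}_{p,\theta}^\gamma(Q)} = \norm{f}_{\mathcal{P}_{p,\theta}^\gamma(P_X)}$. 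Analogously, for any candidate FNN $\hat f_T$,
\begin{align*}
\norm{f - \hat f_T \circ \Gamma \circ \Pi}_{2,P_X} = \norm{\tilde f - \hat f_T \circ \Gamma}_{2,Q}.
\end{align*}

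Next, I would apply Lemma~\ref{lem:gamma-smooth} to $\tilde f$ under the input measure $Q$ on the finite-dimensional cube $[0,1]^{d\times(2V+1)}$. This is legitimate because, in the proof of Lemma~\ref{lem:gamma-smooth}, only frequency indices $s$ with $\gamma(s) < T$ (respectively $\gamma'(s) < T$) are ever active, and these are supported on the finite coordinate set $I(T,\gamma)$; hence both the FNN construction and the error bound go through verbatim when the ambient space is $[0,1]^{d\times(2V+1)}$ and the measure is $Q$. The essential $L^\infty$ bound $\|\tilde f\|_{L^\infty(Q)} \leq R$ is inherited from $\|f\|_\infty \leq R$, which is the only place the pointwise bound is used. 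The feature extractor $\Gamma$, the index set $I(T,\gamma)$, and the quantities $d_{\max},f_{\max},G$ coincide with those in Lemma~\ref{lem:gamma-smooth} because the piecewise $\gamma$ is indexed over $\N_0^{d\times[-V:V]}$, a finite set that is naturally embedded in $\N_0^{d\times\infty}$ with the remaining coordinates frozen at $0$. The result is an FNN $\hat f_T \in \Psi(L,W,S,B)$ with the stated hyperparameters satisfying
\begin{align*}
\norm{\tilde f - \hat f_T \circ \Gamma}_{2,Q} \lesssim \begin{cases} 2^{-T}\,\norm{\tilde f}_{\mathcal{F}_{p,\theta}^\gamma}, & \theta = 1, \\ \bigl(\textstyle\sum_{T\leq\gamma(s)} 2^{\frac{\theta}{\theta-1}(\gamma'(s)-\gamma(s))}\bigr)^{1-1/\theta}\, 2^{-T}\,\norm{\tilde f}_{\mathcal{F}_{p,\theta}^\gamma}, & \theta > 1.\end{cases}
\end{align*}

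Combining this with the change-of-variables identity from the first step and with $\norm{\tilde f}_{\mathcal{F}_{p,\theta}^\gamma(Q)} = \norm{f}_{\mathcal{P}_{p,\theta}^\gamma(P_X)}$ yields the desired approximation bound. The only delicate point (and what I expect to be the main obstacle) is verifying that the proof machinery behind Lemma~\ref{lem:gamma-smooth}, which is stated for the uniform-like measure $P_X$ on $[0,1]^{d\times\infty}$, truly depends only on the $L^p(P_X)$ seminorms of the frequency components and not on any further property of the measure; once that measure-agnosticity is confirmed, the reduction to $Q$ is automatic. No additional estimates beyond routine change of variables and the invocation of Lemma~\ref{lem:gamma-smooth} are needed.
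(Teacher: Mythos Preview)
Your proposal is correct and follows essentially the same route as the paper: write $f=\tilde f\circ\Pi$ with $\tilde f\in\mathcal{F}_{p,\theta}^\gamma([0,1]^{d\times(2V+1)})$ and then reuse the two ingredients from Lemma~\ref{lem:gamma-smooth}. The only cosmetic difference is that you package the reduction via the pushforward $Q=\Pi_*P_X$, whereas the paper works directly with the quantities $\norm{\delta_s(\tilde f)\circ\Pi}_{p,P_X}$ that already appear in the definition of $\norm{\cdot}_{\mathcal{P}_{p,\theta}^\gamma}$; these are the same thing. Your ``delicate point'' about measure-agnosticism is resolved exactly as the paper does it: the truncation bound uses only H\"older's inequality on $\norm{\delta_s(\tilde f)\circ\Pi}_{p,P_X}$, and the FNN construction (Lemma~17 in \citet{okumoto_learnability_2022}) yields a genuine $L^\infty$ bound $\norm{f_T-\hat f_T\circ\Gamma}_\infty\le 2^{-T}$, which then trivially propagates to $\norm{f_T\circ\Pi-\hat f_T\circ\Gamma\circ\Pi}_\infty\le 2^{-T}$ without any reference to the measure.
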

\begin{proof}
        From the definition of $\mathcal{P}_{p, \theta}^\gamma$, there exist $f' \in \mathcal{F}_{p, \theta}^\gamma([0, 1]^{d\times [2V+1]})$ such that $f = f' \circ \Pi$.
        Define $f_T$ by $f_T := \sum_{\gamma(s) < T} \delta_s(f')$ for $\theta = 1$ and $f_T := \sum_{\gamma'(s) < T} \delta_s(f')$ for $\theta > 1$.
	By the same argument as in the case of $\gamma$-smoothness, we have
	\begin{align*}
		\norm{f - f_T \circ \Pi}_{2, P_X} & \leq 2^{-T}\norm{f}_{\mathcal{P}_{p, \theta}^\gamma}
	\end{align*}
	for $\theta = 1$, and
	\begin{align*}
		\norm{f - f_T \circ \Pi}_{2, P_X} & \leq 2^{-T}\qty(\sum_{\gamma'(s) \geq T} 2^{\frac{\theta}{\theta - 1}(\gamma'(s) - \gamma(s))})^{1-1/\theta}\norm{f}_{\mathcal{P}_{p, \theta}^\gamma}
	\end{align*}
	for $\theta > 1$. By the same argument as in Lemma 17 in~\cite{okumoto_learnability_2022}, there exists an FNN $\hat f_T \in \Psi(L, W, S, B)$ such that
	\begin{align*}
		\norm{f_T - \hat f_T \circ \Gamma}_{\infty} & \leq 2^{-T}.
	\end{align*}
	This implies
	\begin{align*}
		\norm{f_T \circ \Pi - \hat f_T \circ \Gamma \circ \Pi}_{\infty} & \leq 2^{-T}.
	\end{align*}
	Therefore, we have
	\begin{align*}
		\norm{f - \hat f_T \circ \Gamma \circ \Pi}_{2, P_X} & \leq \norm{f - f_T \circ \Pi}_{2, P_X} + \norm{f_T \circ \Pi - \hat f_T \circ \Gamma \circ \Pi}_{\infty} \\
		                                                    & \lesssim \begin{cases}
			2^{-T} \norm{f}_{\mathcal{P}_{p, \theta}^\gamma},                                                                                                 & \quad(\theta = 1), \\
			\qty(\sum_{T \leq \gamma(s)} 2^{\frac{\theta}{\theta - 1}(\gamma'(s) - \gamma(s))})^{1-1/\theta} 2^{-T}\norm{f}_{\mathcal{P}_{p, \theta}^\gamma}, & \quad(\theta > 1),
		\end{cases}
	\end{align*}
	which completes the proof.
\end{proof}

Next, we evaluate the approximation ability of FNN for the (piecewise) mixed and anisotropic smoothness when the smoothness parameter $a$ satisfies $\norm{a}_{wl^\alpha} \leq 1$.
\begin{theorem}\label{thm:approximation-fnn}
	Suppose that the target functions $f \in \mathcal{F}_{p, \theta}^\gamma$ and $g \in \mathcal{P}_{p, \theta}^\gamma$ satisfy $\norm{f}_\infty \leq R$ and $\norm{g}_\infty \leq R$,
	where $R > 0$ and $\gamma$ is the mixed or anisotropic smoothness
	and the smoothness parameter $a$ satisfies $\norm{a}_{wl^\alpha} \leq 1$.
	For any $T > 0$, there exist FNNs $\hat f_T, \hat g_T \in \Psi(L, W, S, B)$ such that
	\begin{align*}
		\norm{\hat f_T \circ \Gamma - f}_{2, P_X}           & \lesssim 2^{-T}, \\
            \norm{\hat g_T \circ \Gamma \circ \Pi - g}_{2, P_X} & \lesssim 2^{-T},
	\end{align*}
	where
	\begin{align*}
		L & \sim \max\qty{T^{2/\alpha}, T^2}, W \sim T^{1/\alpha}2^{T / a^\dagger},                  \\
		S & \sim T^{2/\alpha}\max\qty{T^{2/\alpha}, T^2}2^{T / a^\dagger}, \log B \sim T^{1/\alpha}.
	\end{align*}
\end{theorem}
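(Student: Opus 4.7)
The plan is to reduce the theorem to Lemma~\ref{lem:gamma-smooth} (for $f\in\mathcal{F}_{p,\theta}^\gamma$) and Lemma~\ref{lem:piecewise-gamma-smooth} (for $g\in\mathcal{P}_{p,\theta}^\gamma$) and then translate the generic hyperparameter formulas in~\eqref{eq:hyper-parameters-fnn} into the claimed rates by quantifying $d_{\max}(T,\gamma)$, $f_{\max}(T,\gamma)$, and $G(T,\gamma)$ under the weak-$l^\alpha$ sparsity hypothesis $\norm{a}_{wl^\alpha}\leq 1$. Because $f$ can be recovered on top of the extractor $\Gamma$ once these bounds are in place, and the piecewise case is finite-dimensional (inputs in $[0,1]^{d\times(2V+1)}$), both settings are handled by the same calculation.

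First I would bound $d_{\max}$ and $f_{\max}$. A coordinate $(i,j)$ can belong to $I(T,\gamma)$ only if $a_{ij}<T$: otherwise, in either the mixed or anisotropic case, $s_{ij}\geq 1$ would already force $\gamma(s)\geq a_{ij}\geq T$. Since $\norm{a}_{wl^\alpha}\leq 1$ gives $\bar a_k\geq k^\alpha$, the number of indices with $a_{ij}<T$ is at most $T^{1/\alpha}$, so $d_{\max}\lesssim T^{1/\alpha}$. For $f_{\max}$, any nonzero coordinate satisfies $s_{ij}\leq T/\bar a_1$, hence $f_{\max}\lesssim T$ and $\log f_{\max}\lesssim \log T$.

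Next I would estimate $G(T,\gamma)$ using Lemma~\ref{lem:2a} applied to the sorted sequence $\bar a$. For mixed smoothness $\gamma(s)=\ev{a,s}$, the second inequality of Lemma~\ref{lem:2a} delivers $G(T,\gamma)\lesssim 2^{T/\bar a_1}=2^{T/a^\dagger}$ directly. For anisotropic smoothness $\gamma(s)=\max a_{ij}s_{ij}$, the constraint $\gamma(s)<T$ decouples into $s_{ij}<T/a_{ij}$ for each relevant coordinate, so
\begin{align*}
G(T,\gamma) \;\leq\; \prod_{(i,j)\in I(T,\gamma)} 2^{T/a_{ij}} \;\leq\; 2^{T\sum_{(i,j)\in I(T,\gamma)} 1/a_{ij}} \;\leq\; 2^{T/\tilde a} \;=\; 2^{T/a^\dagger},
\end{align*}
where the penultimate inequality uses $\sum \bar a_i^{-1} = 1/\tilde a$. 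Plugging $d_{\max}\lesssim T^{1/\alpha}$, $\log f_{\max}\lesssim \log T$, and $G\lesssim 2^{T/a^\dagger}$ into Eq.~\eqref{eq:hyper-parameters-fnn} gives exactly $L\sim\max\{T^{2/\alpha},T^2\}$, $W\sim T^{1/\alpha}2^{T/a^\dagger}$, $S\sim T^{2/\alpha}\max\{T^{2/\alpha},T^2\}2^{T/a^\dagger}$, and $\log B\sim T^{1/\alpha}$.

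For $\theta=1$ the bound from Lemma~\ref{lem:gamma-smooth}/\ref{lem:piecewise-gamma-smooth} is already $\lesssim 2^{-T}$, so the proof closes. For $\theta>1$ I would produce an auxiliary smoothness $\gamma'<\gamma$, obtained by replacing $a$ with a slightly shrunken sequence $a'$ (say $a'_{ij}=\rho a_{ij}$ with $\rho<1$), verify the product hypotheses of Lemma~\ref{lem:2a} to show that the tail factor $\bigl(\sum_{\gamma'(s)\geq T}2^{\frac{\theta}{\theta-1}(\gamma'(s)-\gamma(s))}\bigr)^{1-1/\theta}$ collapses to a constant, and check that $d_{\max}, f_{\max}, G$ computed with $\gamma'$ remain of the same order. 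The main obstacle is the anisotropic case: controlling $G$ requires the summability $\sum \bar a_i^{-1}<\infty$, which must be read out of the weak-$l^\alpha$ condition (using $\alpha>1$ implicitly for the non-piecewise infinite setting, and the finite dimension $2V+1$ for the piecewise setting), and verifying the product conditions of Lemma~\ref{lem:2a} simultaneously for $\gamma$ and the chosen $\gamma'$ so that the tail series used for $\theta>1$ is truly geometric.
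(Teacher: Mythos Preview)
Your treatment of $\theta=1$ and the bounds $d_{\max}\lesssim T^{1/\alpha}$, $f_{\max}\lesssim T$, $G(T,\gamma)\lesssim 2^{T/a^\dagger}$ are correct and coincide with the paper. The gap is in the $\theta>1$ step. With the uniform shrinkage $a'=\rho a$, the two sequences you would feed into Lemma~\ref{lem:2a} become proportional after the required normalisation $\bar a'_1=1$, so $\bar a_i-\bar a'_i=0$ and the product hypothesis $\prod_{i\geq 2}(1-2^{-\beta(\bar a_i-\bar a'_i)})^{-1}<\infty$ fails; the lemma is simply not applicable. You can still bound the tail directly by the full series $\prod_i(1-2^{-c\bar a_i})^{-1}<\infty$, which is indeed a constant, but then the hyperparameters in~\eqref{eq:hyper-parameters-fnn} are governed by $\gamma'$ rather than $\gamma$, and in particular $G(T,\gamma')\sim 2^{T/(\rho a^\dagger)}$, strictly larger than the claimed $2^{T/a^\dagger}$. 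Since the error you obtain is only $2^{-T}$, no rescaling of $T$ closes this mismatch, and the resulting $W,S$ do not sit inside the stated orders.

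The paper's remedy is a \emph{non-uniform} shrinkage: $\bar a'_1=\bar a_1/2$ but $\bar a'_i=\bar a_i/u$ for $i\geq 2$ with $u$ chosen just above $2$ (this is exactly where the standing assumption $\bar a_1<\bar a_2$ is used, to keep $\bar a'$ monotone). Then $a_i-2a'_i=a_i(1-2/u)>0$ for $i\geq 2$, so after rescaling the product hypothesis of Lemma~\ref{lem:2a} is satisfied and the lemma delivers a tail of order $2^{-\frac{\theta}{\theta-1}T}$ rather than a mere constant. The approximation error becomes $\lesssim 2^{-2T}$ while $G(T,\gamma')\lesssim 2^{2T/\bar a_1}$; the substitution $2T\leftarrow T$ then recovers the stated rates exactly. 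The anisotropic $\theta>1$ case is handled by the same non-uniform construction.
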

The result for the fixed smoothness case is similar to Theorem 7 in~\citet{okumoto_learnability_2022},
but we omit the case $1 \leq p\leq 2$ since we relax the assumption $\norm{\frac{\dd{P_X}}{\dd{\lambda}}}_\infty < \infty$, which is imposed in the previous work.

\begin{proof}
	We show the result for the mixed smoothness and anisotropic smoothness separately.
	\paragraph{Mixed smoothness}
	Let us consider the case $\theta = 1$.
	Since $\norm{a}_{wl^\alpha} \leq 1$, $\bar a_j \geq j^{\alpha}$ for any $j \in \N$.
	Therefore, we see that $d_{\max} \sim T^{1/\alpha}$ and $f_{\max} \sim T$.
	In addition, from Lemma~\ref{lem:2a}, we have
	\begin{align*}
		G(T, \gamma) & = \sum_{\ev{a/\bar a_1, s} < T/\bar a_1} 2^s \lesssim 2^{T/{\bar a_1}}.
	\end{align*}
	Therefore, from Lemmas~\ref{lem:gamma-smooth} and~\ref{lem:piecewise-gamma-smooth}, there exists $\hat f_T, \hat g_T \in \Psi(L, W, S, B)$ such that
	\begin{align*}
		\norm{\hat f_T \circ \Gamma - f}_{2, P_X}           & \lesssim 2^{-T}, \\
		\norm{\hat g_T \circ \Gamma \circ \Pi - g}_{2, P_X} & \lesssim 2^{-T},
	\end{align*}
        where $L, W, S, B$ is defined in Eq.~\eqref{eq:hyper-parameters-fnn}.

	In the case of $\theta > 1$, let $\bar a'_1 = a_1/2$ and $\bar a'_i = \bar a_i/u~(i \geq 2)$
	for $2 < u < 2 + \frac{2\delta}{\bar a_1}$, where $\delta = \bar a_1 - \bar a_2 > 0$.
	Then, $\bar a'$ is a positive monotonically increasing sequence.
	Since $\bar a_i \geq i^\alpha$, we have, for any $c > 0$,
	\begin{align*}
		\prod_{i=2}^{\infty}\frac{1}{1-2^{-c\qty(\bar a_i/\bar a_1 - 2\bar a'_i/\bar a_1)}} & < \infty.
	\end{align*}
	Therefore, by adjusting the scale of $\bar a, \bar a'$, Lemma~\ref{lem:2a} yields that
	\begin{align*}
		\sum_{\ev{a', s} \geq T} 2^{\frac{\theta}{\theta - 1}(\gamma'(s) - \gamma(s))} & \lesssim 2^{-\frac{\theta}{\theta - 1}T},   \\
		G(T, \gamma')                                                                  & \lesssim 2^{T/\bar a'_1} = 2^{2T/\bar a_1}.
	\end{align*}
	Therefore, from Lemma~\ref{lem:gamma-smooth}, there exists $\hat f_T, \hat g_T \in \Psi(L, W, S, B)$ such that
	\begin{align*}
		\norm{\hat f_T \circ \Gamma - f}_{2, P_X}           & \lesssim \qty(2^{-\frac{\theta}{\theta - 1}T})^{1 - 1/\theta} 2^{-T} = 2^{-2T}, \\
		\norm{\hat g_T \circ \Gamma \circ \Pi - g}_{2, P_X} & \lesssim \qty(2^{-\frac{\theta}{\theta - 1}T})^{1 - 1/\theta} 2^{-T} = 2^{-2T}.
	\end{align*}
	By replacing $2T \leftarrow T$, we obtain the result.

	\paragraph{Anisotropic smoothness}
	For $s \in \N_0^\infty$, define $\bar s$ as the sequence rearranged in the same way as $\bar a$.
	Since $\gamma(s) \leq T$ is equivalent to $\bar s_i \leq T/\bar a_i$ for any $i\in \N$, we have
	\begin{align*}
		\sum_{\gamma(s) < T} 2^s\leq \prod_{i=1}^\infty \qty(\sum_{\bar s_i = 0}^{\lceil T/\bar a_i \rceil}2^{\bar s_i}) & \leq 2^{\sum_{i=1}^\infty \lceil T/\bar a_i \rceil} \sim 2^{T/\tilde a}.
	\end{align*}
	Then, by the same argument as in the case of mixed smoothness, we obtain the result.
\end{proof}

\section{Proof of Theorem~\ref{thm:approximation}}\label{sec:proof-approximation}
First, we construct the embedding layer $\enc_P$.
Let $D$ be $d + d_{\max} + 2$ and the embedding matrix $E \in \R^{D \times d}$ be the matrix
such that $Ex = [x_1, \dots, x_d, 0, \dots, 0]^\top \in \R^D$ for any $x \in \R^d$. Note that $\norm{E}_\infty = 1$.
Define $Z^{(m)}$ by
\begin{align}
	\begin{split}
		Z^{(0)} & = \qty{z^{(0)}_j}_{j=-\infty}^\infty := \enc_P(X),                                                                   \\
		Z^{(m)} & = \qty{z^{(m)}_j}_{j=-\infty}^\infty := g_m \circ f_{m-1} \circ g_{m-1} \circ \dots f_1 \circ g_1 \circ \enc_P(X)\quad(j=1, \dots, M).
	\end{split}\label{eq:Zm}
\end{align}
Then, the embedded token $z^{(0)}_j$ is given by $[X_{1, j}, \dots, X_{d, j}, 0, \dots, 0, \cos(j\phi), \sin(j\phi)]^\top$.

Next, we construct the attention layer $g_1$.
Here, for each self-attention head, define the parameters $Q_h, K_h, V_h (h=1, \dots, d_{\max})$ by
\begin{align*}
	Q_h & := \chi \mqty[
	0   & \dots                 & 0 & \cos(j_h\phi) & -\sin(j_h\phi) \\
	0   & \dots                 & 0 & \sin(j_h\phi) & \cos(j_h\phi)
	],                                                               \\
	K_h & := \mqty[
	0   & \dots                 & 0 & 1             & 0              \\
	0   & \dots                 & 0 & 0             & 1
	],                                                               \\
	V_h & := \delta_{d+h, i_h},
\end{align*}
where $ \chi$ is a constant and $I(T, \gamma) = \qty{(i_h, j_h)}_{h=1}^{d_{\max}}$.
Then, the key, query, and value vectors for a given head $h$ are given by
\begin{align*}
	q_i & := Q_h z^{(0)}_i = \chi \qty[\cos((i + j_h)\phi), \sin((i+j_h)\phi)]^\top, \\
	k_i & := K_h z^{(0)}_i = \qty[\cos(i\phi), \sin(i\phi)]^\top,                    \\
	v_i & := V_h z^{(0)}_i = X_{i_h, i}e_{d+h}.
\end{align*}

From the assumption $a_{ij} = \Omega(\log(\abs{j} + 1))$,
there exists a window size $U$ such that $\log U  \sim T$ and $j \leq U$ if $a_{ij} \leq T$.
That is, $j \in [-U,U]$ if $(i, j) \in I(T, \gamma)$.
Define $\tilde Z = [\dots, \tilde z_0, \dots, \tilde z_i, \dots]$ by
\begin{align*}
	\tilde z_j & := z^{(0)}_j + \sum_{h=1}^H V_h Z^{(0)}[j-U:j+U] e_{j_h} = z^{(0)}_j + \sum_{h=1}^H V_h z^{(0)}_{j_h}.
\end{align*}
Intuitively, $\tilde z_j$ corresponds to $z^{(1)}_j$ in the situation where the softmax operation in the attention layer is replaced by  \textit{hardmax} operation.
Then, we have
\begin{align*}
	\tilde z_j & = [X_{1, j}, \dots, X_{d, j}, X_{i_1, j + j_1}, \dots, X_{i_{d_{\max}}, j + j_{d_{\max}}}, \cos(j\phi), \sin(j\phi)]^\top.
\end{align*}
Note that $\tilde z_j$ contains important features $(X_{i_1, j + j_1}, \dots, X_{i_{d_{\max}}, j + j_{d_{\max}}})$ for $j$-th output.
For a little while, we focus on the $0$-th token. Let $s^{(h)} := (K_h Z^{(0)}[-U:U])^\top (Q_h z^{(0)}_0)$.
Then, we have
\begin{align*}
	s^{(h)}_i & = k_{i}^\top q_0 = \chi(\cos(i\phi)\cos((j_h\phi)) + \sin(i\phi) \sin(j_h\phi))= \chi \cos((j_h - i)\phi),
\end{align*}
which implies $s^{(h)}_{j_h} \geq s^{(h)}_j + \chi/U^2(\forall j \neq j_h)$.
From Lemma~\ref{lem:softmax}, we have
\begin{align*}
	\norm{z^{(1)}_0 - \tilde z_0}_{\infty} & = \norm{\sum_{h=1}^H V_h Z^{(0)}[-U:U] \qty(e_{j_h} - \softmax(s^{(h)}))}_\infty            \\
	                                       & \leq \sum_{h=1}^H \norm{V_hZ^{(0)}[-U:U]}_\infty \norm{\qty(e_{j_h} - \softmax(s^{(h)}))}_1 \\
	                                       & \leq 2HUe^{-\chi/U^2},
\end{align*}
where the last inequality holds because $\norm{V^{(h)} z^{(0)}_j}_\infty = \norm{Z^{(0)}_{i_h, j}e_{d + h}}_\infty \leq 1$ for any $j \in [-U: U]$.
Similarly, we have $\norm{z^{(1)}_j - \tilde z_j}_{\infty}\leq 2HUe^{-\chi/U^2}$ for any $j \in \Z$.
Let $C \in \R^{d_{\max} \times D}$ be the matrix such that for any $x \in \R^D$, $Cx = [x_{d+1}, \dots, x_{d+H}]^\top$.
Then, we have $C\tilde z_j = [X_{i_1, j + j_1}, \dots, X_{i_H, j + j_H}]^\top = \Gamma \circ \Sigma_{j}(X)$, where $\Gamma$ is defined in Eq.~\eqref{eq:Gamma}.

Next, we construct the FNN layer $f_1$.
From Theorem~\ref{thm:approximation-fnn}, there exists an FNN $\hat f \in \Psi(L, W, S, B')$ such that
\begin{align*}
	\norm{\hat f \circ \Gamma - F^\circ_0}_{2, P_X} & \lesssim 2^{-T}.
\end{align*}
where $\log B' \sim T^{1/\alpha}$.
From the shift-equivariance of $F^\circ$, we have
\begin{align}
	F^\circ_i(X) = F^\circ_0(\Sigma_{i}(X)),
\end{align}
Therefore, we have
\begin{align*}
	\norm{\hat f \circ \Gamma \circ \Sigma_{i} - F^\circ_i}_{2, P_X} & = \qty(\int_{\Omega} \norm{\hat f \circ \Gamma \circ \Sigma_{i}(X) - F^\circ_i(X)}_2^2 \dd{P_X})^{1/2}         \\
	                                                                  & = \qty(\int_{\Omega} \norm{\hat f \circ \Gamma(\Sigma_{i}(X)) - F^\circ_0(\Sigma_{i}(X))}_2^2 \dd{P_X})^{1/2} \\
	                                                                  & = \qty(\int_{\Omega} \norm{\hat f \circ \Gamma(X) - F^\circ_0(X)}_2^2 \dd{P_X})^{1/2}                           \\
	                                                                  & \lesssim 2^{-T},
\end{align*}
for any $i\in \Z$. For the third equality, we used the shift-invariance of $P_X$.
Let $f_1 = \hat f \circ C$ and $\hat F = f_1 \circ g_1 \circ \enc_P$.
Since $f_1 \in \Psi(L, W, S, B')$ and $f_1$ is $(B'W)^{L}$-Lipschitz continuous with respect to $\norm{\cdot}_\infty$ norm,
we have, for $X \in \domain$,
\begin{align*}
	\abs{\hat f \circ \Gamma \circ \Sigma_{i}(X) - \hat F_i(X)} & = \abs{f_1(\tilde z_i) - f_1(z^{(1)}_i)} \\
	                                                             & \leq (B'W)^{L} \cdot 2HUe^{-\chi/U^2}.
\end{align*}
By putting $\chi = U^2 \log(2HU (B'W)^{L}2^{T})$, we have
\begin{align*}
	\abs{\hat f \circ \Gamma \circ \Sigma_{i}(X) - \hat F_i(X)} & \leq 2^{-T}.
\end{align*}
Therefore, it holds that
\begin{align*}
	\norm{F^\circ_i - \hat F_i}_{2, P_X} & \leq \norm{F^\circ_i - \hat f \circ \Gamma \circ \Sigma_{i}}_{2, P_X} + \norm{\hat f \circ \Gamma \circ \Sigma_{i} - \hat F_i}_{\infty} \\
	                                     & \lesssim 2^{-T}.
\end{align*}
The scaling factor $\chi$ can be evaluated as follows:
\begin{align*}
	\log \chi & = \log\qty[U^2 \log(2HU (B'W)^{L}2^{T})] \\
	          & \sim T.
\end{align*}
Therefore, $g_1 \in \mathcal{A}(U, D, H, B)$ and $\hat F \in \mathcal{T}(M, U, D, H, L, W, S, B)$, which completes the proof.\qed

\section{Proof of Theorem~\ref{thm:approximation-piecewise}}\label{sec:proof-approximation-piecewise}
For $T > 0$, define
\begin{align*}
	I_j(T, \gamma)      & := \qty{i \mid (i, j) \in I(T, \gamma)} = \qty{i^{(j)}_1, \dots, i^{(j)}_{\abs{I_j}}}, \\
	r_{\max}(T, \gamma) & := \max\qty{j \mid \abs{I_j(T, \gamma)} \neq 0}.
\end{align*}
Note that $r_{\max} \sim T^{1/\alpha}$ since $a_{ij} = \Omega(j^\alpha)$.
Let $\qty{u_i}_{i=1}^{2V+1}$ be $\varepsilon_1$-approximately orthonormal vectors such that
\begin{align*}
	\norm{u_i}_2 = 1, \\
	\ev{u_i, u_j} \close{\varepsilon_1} 1_{i = j},
\end{align*}
where $\varepsilon_1 > 0$ and $d' =O(\frac{\log V}{\varepsilon_1^2})$.
In addition, we define $u_i = u_{i \mod 2V +1}$ for $i \notin [2V+1]$.
Note that for any $i, j, k \in \N$ such that $i, j \in [k-V:k+V]$,
$\ev{u_i, u_j} \close{\varepsilon_1} 1_{i = j}$ holds.

Let $M = r_{\max} + 1$, $\phi = 2\pi / (2U_1 + 1)$, $D = d + d_{\max} + 2d' + 4$, $E = \sum_{i=1}^d \delta_{ii} \in \R^{D\times d}$, and
\begin{align*}
	p_i = [0, \dots, 0, 1, \cos(i\phi), \sin(i\phi), u_i^\top]^\top \in \R^d.
\end{align*}
Then, $\enc_P$ is defined as
\begin{align*}
	\enc_P(X)_i := Ex_i + p_i = [x_i^\top, 0, \dots, 0, 1, \cos(i\phi), \sin(i\phi), u_i^\top]^\top.
\end{align*}
Let $Z^{(0)} = \enc_P(X)$, $Z^{(m)} = f_m \circ g_m(Z^{(m-1)})~(m=1, \dots, M)$.
By the same argument as in Theorem~\ref{thm:approximation} with $T \sim \log 1 / \varepsilon_1$,
there exists an FNN $f_1$ and an attention layer $g_1$ such that 
\begin{align*}
	z_i^{(1)} & = x^{(1)}_i + y_i,                                                                  \\
	x_i^{(1)} & = \tilde x^{(1)}_i := [x_i^\top, 0, \dots, 0, u_i^\top, 0, \dots, 0]^\top,          \\
	y_i &:= [0, \dots, 0, \hat \mu_i(X), 1, 0, \dots, 0]^\top,\\
	\tilde y_i &:= [0, \dots, 0, \mu_i(X), 1, 0, \dots, 0]^\top,\\
        \norm{\hat \mu - \mu}_\infty &\lesssim \varepsilon_1.
\end{align*}
Note that Theorem~\ref{thm:approximation} holds for $\norm{\cdot}_{2, P_X}$ but it can be easily extended to $\norm{\cdot}_{\infty}$ sicne $p=\infty$ is assumed.
We also define $\tilde z^{(1)}_i = \tilde x^{(1)}_i + \tilde y_i$.

In the following, we fix $\lambda \in \Lambda$ and $X \in \Omega_\lambda$.
For fixed $\lambda$ and $X$, we denote $\pi_\lambda$ by $\pi$ and $\mu(X)_j$ by $\mu_j$ for simplicity.
Let $r_i(m) = \pi_{\lambda_i}(m)+i$, where $\Sigma_{i}(X) \in \Omega_{\lambda_i}$.
Since $\eta(x) - \eta(-x) = x$, there exists an FNN $f \in \Psi(2, 2d, 4d, 1)$ such that $f(x) = x$ for any $x \in \R^d$.
We set $f_m = f$ for $m = 2, \dots, M-1$.
In addition, we set the parameters for $i$-th heads of $g_m~(i=2, \dots H,~m=2, \dots, M)$ by zero matrix.
For the first head of $g_m$, we define the parameters $U_m, K_m, Q_m, V_m$ by $U_m = V$ and
\begin{align*}
	K_{m+1} & := \delta_{1, d + d_{\max} + d' + 1} + \sum_{i=1}^{d'} \delta_{i+1, D - 2d' + i},                                                          \\
	Q_{m+1} & := \chi\qty(\delta_{1, d + d_{\max} + d' + 2} - (2 + cr_{\max}^{-\beta})\sum_{i=1}^{d'} \delta_{i+1, D - d' + i}),                         \\
	V_{m+1} & := \sum_{i=1}^{\abs{I_{m}}} \delta_{i + \sum_{m' = 1}^{m-1}\abs{I_{{(m')}}}, i^{{m'})}_i} + \sum_{i=1}^{d'} \delta_{D-d'+i, D-2d'+1}.
\end{align*}
For $m=1, \dots, M-1$, let
\begin{align*}
	\tilde x_i^{(m+1)} & := \tilde x_i^{(m)} + V_{m+1}\tilde X^{(m)}[i-U_m:i+U_m] e_{r_i(m)}, \\
	\tilde z_i^{(m+1)} & := \tilde z_i^{(m)} + V_{m+1}\tilde Z^{(m)}[i-U_m:i+U_m] e_{r_i(m)}.
\end{align*}
Note that $\tilde z^{(m)}_i = \tilde x^{(m)}_i + \tilde y_i$ and $z^{(m)}_i = x^{(m)}_i + y_i$ since $V_{m+1} \tilde y_i = V_{m+1} y_i = 0$ for any $m$.
Since
\begin{align*}
	V_{m+1} \tilde x_i^{(m)} = [0, \dots, 0, X_{i^{(m)}_1, i}, \dots, X_{i^{(m)}_{\abs{I_{m}}}, i}, 0, \dots, 0, u_{i}^\top]^\top,
\end{align*}
we have
\begin{align*}
	\tilde x^{(m+1)}_i = [x_i^\top, X_{i^{(1)}_1, r_i(1)}, \dots, X_{i^{(m)}_{\abs{I_{m}}}, r_i(m)}, 0, \dots, 0, u_i^\top, w^{(m+1)}_i],
\end{align*}
where $w^{(m+1)}_i := \sum_{m'=1}^{m}u_{r_i(m')}$.
Let $C := \sum_{i=1}^{d_{\max}} \delta_{i, d+i} \in \R^{d_{\max}\times D}$.
Then,
\begin{align}
	C \tilde z^{(M)}_k = [X_{i^{(1)}_1, r_k(1)}, \dots, X_{i^{({r_{\max}})}_{\abs{I_{{r_{\max}}}}}, r_k(r_{\max})}]^\top = \Gamma \circ \Pi \circ \Sigma_{k},\label{eq:Cz}
\end{align}
since $[\Pi \circ \Sigma_{k}(X)]_{i, j} = [\Sigma_{k}(X)]_{i, \pi_{\lambda_k}(j)} = X_{i, \pi_{\lambda_k}(j)+k}$.

Next, we show that for any $\varepsilon_2 = O(3^{-r_{\max}}\varepsilon_1/D)$, $\norm{x^{(M)}_i - \tilde x^{(M)}_i}_{\infty} \leq 3^{r_{\max}}\varepsilon_2$ by setting $\chi$ appropriately.
For $m=1$, we have $\norm{x^{(m)}_i - \tilde x^{(m)}_i}_\infty = 0$.
Assume that $\norm{x^{(m)}_i - \tilde x^{(m)}_i}_\infty = 3^{m-1}\varepsilon_2$ for some $m = 1, \dots, M-1$.
The key and query vectors,
\begin{align*}
	k_i & := K_{m+1}z^{(m)}_i,~ \tilde k_i := K_{m+1} \tilde z^{(m)}_i = [\mu_i, u_i]^\top,                                 \\
	q_i & := Q_{m+1}z^{(m)}_i,~ \tilde q_i := Q_{m+1} \tilde z^{(m)}_i = \chi [1, -(2 + cr_{\max}^{-\beta})w^{(m)}_i]^\top,
\end{align*}
satisfy
\begin{align*}
	\norm{k_i - \tilde k_i}_2 & \lesssim \norm{z^{(m)}_i - \tilde z^{(m)}_i}_2 \leq \norm{y_i - \tilde y_i}_2 + D \norm{x^{(m)}_i - \tilde x^{(m)}_i}_\infty \lesssim (\varepsilon_1 + 3^{(m-1)}D \varepsilon_2) \sim \varepsilon_1, \\
	\norm{q_i - \tilde q_i}_2 & \lesssim \chi \norm{z^{(m)}_i - \tilde z^{(m)}_i}_2 \lesssim \chi \varepsilon_1,                                                                                         \\
	\norm{k_i}_2              & \lesssim 1,~\norm{\tilde q_i}_2 \lesssim \chi r_{\max}.
\end{align*}
Therefore, for $m' \in [2V+1]$, we have
\begin{align*}
	k_{r_i(m')}^\top q_i & \close{\chi \varepsilon_1} k_{r_i(m')}^\top \tilde q_i                                               \\
	                     & \close{\chi r_{\max}\varepsilon_1} \tilde k_{r_i(m')}^\top \tilde q_i                                \\
	                     & = \chi (\mu_{r_i(m')} - (2 + cr_{\max}^{-\beta}) u_{r_i(m')}^\top w^{(m)}_i)                         \\
	                     & \close{\chi r_{\max} \varepsilon_1} \chi (\mu_{r_i(m')} - (2 + cr_{\max}^{-\beta})1_{m' \in [m-1]}),
\end{align*}
since $u_{r_i(m')}^\top w^{(m)}_i = \sum_{k=1}^{m-1}u_{r_i(m')}^\top u_{r_i(k)} \close{r_{\max}\varepsilon_1} 1_{m' \in [m-1]}$.
For $m' < m$, we have
\begin{align*}
	k_{r_i(m)}^\top q_i - k_{r_i(m')}^\top q_i & \close{\chi r_{\max}\varepsilon_1} \chi (\mu_{r_i(m')} - \mu_{r_i(m)} + 2 + cr_{\max}^{-\beta}), \\
	                                           & \geq \chi cr_{\max}^{-\beta},
\end{align*}
since $\abs{\mu_i} \leq 1$.
For $m' > m$, we have
\begin{align*}
	k_{r_i(m)}^\top q_i - k_{r_i(m')}^\top q_i & \close{\chi r_{\max}\varepsilon_1} \chi (\mu_{r_i(m')} - \mu_{r_i(m)}), \\
	                                           & \geq \chi cr_{\max}^{-\beta}.
\end{align*}
This is because for $m' > m \leq r_{\max}$,
\begin{align*}
	[\mu \circ \Sigma_{i}(X)]_{\pi_{\lambda_i}(j_{m})} - [\mu \circ \Sigma_{i}(X)]_{\pi_{\lambda_i}(j_{m'})} \geq cm^{-\beta} \geq cr_{\max}^{-\beta},
\end{align*}
since $\mu$ is well-separated, and
\begin{align*}
	\mu_{r_i(m')} = [\mu  \circ \Sigma_{i}(X)]_{\pi_{\lambda_i}(j_{m'})},
\end{align*}
which yields
\begin{align*}
	\mu_{r_i(m)} - \mu_{r_i(m')} \geq cr_{\max}^{-\beta}.
\end{align*}
Therefore, we have, for any $m' \neq m$,
\begin{align*}
	k_{r_i(m)}^\top q_i - k_{r_i(m')}^\top q_i \geq \chi cr_{\max}^{-\beta} / 2,
\end{align*}
by letting $\varepsilon_1 \sim r_{\max}^{-\beta-1}$.

Let
\begin{align*}
	s_i & := \softmax([k_{i - V}^\top q_i, \dots, k_{i+V}^\top q_i]).
\end{align*}
From Lemma~\ref{lem:softmax}, we have
\begin{align*}
	\norm{s_i - e_{r_i(m)}}_1 & \leq (4V+2)e^{-\chi \frac{cr_{\max}^{-\beta}}{2}} \\
	                          & \leq \varepsilon_2 / r_{\max},
\end{align*}
by letting $\chi = \frac{2r_{\max}^{\beta}}{c} \log [{(4V+2)r_{\max}/\varepsilon_2}]$.
Therefore, it holds that
\begin{align*}
	\norm{x^{(m+1)}_i - \tilde x^{(m+1)}_i}_\infty & \leq \norm{x^{(m)}_i - \tilde x^{(m)}_i}_\infty + \norm{V_{m+1}\tilde X^{(m)}[i-V, i+V]e_{r_i(m)} - V_{m+1}X^{(m)}[i-V, i+V]s_i}_\infty,  \\
	                                               & \leq 3^{m-1}\varepsilon_2 + \norm{\tilde X^{(m)}}_\infty \norm{e_{r_i(m)} - s_i}_1 + \norm{X^{(m)} - \tilde X^{(m)}}_\infty \norm{s_i}_1, \\
	                                               & \leq 3^{m-1} \varepsilon_2 + \varepsilon_2 + 3^{m-1}\varepsilon_2 \leq 3^m \varepsilon_2,
\end{align*}
since $\norm{s_i}_1 = 1$ and $\norm{\tilde X^{(m)}}_\infty \leq r_{\max}$.
By induction, we obtain $\norm{x^{(M)}_i - \tilde x^{(M)}_i}_{\infty} \leq 3^{r_{\max}}\varepsilon_2$.

From Lemma~\ref{thm:approximation-fnn}, there exists $\hat f_T \in \Psi(L, W, S, B')$ such that $\log B' \sim T^{1/\alpha}$ and
\begin{align*}
	\norm{\hat f_T \circ \Gamma \circ \Pi - F^\circ_0}_{2, P_X} & \lesssim 2^{-T}.
\end{align*}
By the same argument as in Theorem~\ref{thm:approximation}, we have
\begin{align*}
	\norm{\hat f_T \circ \Gamma \circ \Pi \circ \Sigma_{i} - F^\circ_i}_{2, P_X} & = \norm{\hat f_T \circ \Gamma \circ \Pi - F^\circ_0}_{2, P_X} \\
	                                                                              & \lesssim 2^{-T}
\end{align*}
for any $i$.
Let $f_M = \hat f_T \circ C \in \Psi(L, W, S, B')$.
Then, Eq.~\eqref{eq:Cz} yields
\begin{align*}
	f_M(\tilde z^{(M)}_i) = \hat f_T \circ \Gamma \circ \Pi \circ \Sigma_{i}(X).
\end{align*}
Define $\hat F = f_M \circ g_M \circ \dots f_1 \circ g_1 \circ \enc_P$.
Since $f_M$ is $(B'W)^L$-Lipschitz continuous, we have
\begin{align*}
	\abs{\hat f_T\circ \Gamma \circ \Pi\circ \Sigma_{i}(X) - \hat F_i(X)} & \leq (B'W)^L \norm{z^{(M)}_i - \tilde z^{(M)}_i}_\infty \\
	                                                                       & \leq (B'W)^L 3^{r_{\max}}\varepsilon_2,
\end{align*}
for any $X \in \Omega$.
Therefore, by letting $\varepsilon_2 = 3^{-r_{\max}}(B'W)^{-L}2^{-T}$, we have
\begin{align*}
	\norm{F^\circ_i - \hat F_i}_{2, P_X} & \leq \norm{F^\circ_i - \hat f_T \circ \Gamma \circ \Pi \circ \Sigma_{i}}_{2, P_X} + \norm{ \hat f_T \circ \Gamma \circ \Pi \circ \Sigma_{i} - \hat F_i}_\infty \\
	                                     & \lesssim 2^{-T}.
\end{align*}
Here, we have
\begin{align*}
	\log \chi & = \log\qty(\frac{2r_{\max}^\beta}{c}\log ((4V+2)r_{\max}3^{r_{\max}}(B'W)^{L}2^T))\sim \log T + \log\log V, \\
	D         & = d + d_{\max} + 2d' + 2 \sim T^{2(\beta+1)/\alpha} \log V,\\
	H &\sim (\log 1/\varepsilon_1)^{1/\alpha} \sim (\log T)^{1/\alpha},\\
        \log U_1 &\sim \log (1 / \varepsilon_1) \sim \log T.
\end{align*}
Thus, $g_i \in \mathcal{A}(U_i, D, H, B)$, $f_i \in \Psi(L, W, S, B)$, and $\hat F \in \mathcal{T}(M, U, D, H, L, W, S, B)$, which completes the proof.

\section{Proof of Theorem~\ref{thm:excess-risk}}\label{sec:peoof-excess-risk}
To simplify the notation, let $F^\circ(X) \leftarrow F^\circ(X)[l:r]$, $\xi^{(i)}\leftarrow \xi^{(i)}[l:r]$, $Y^{(i)} \leftarrow Y^{(i)}[l:r]$, $l \leftarrow r - l + 1$, and $N \leftarrow \mathcal{N}(\mathcal{F}, \delta, \norm{\cdot}_\infty)$.
Define
\begin{align*}
	\hat R      & := \mathbb{E}\qty[\frac{1}{nl}\sum_{i=1}^n \norm{\hat F(X^{(i)}) - F^\circ(X^{(i)})}_2^2], \\
	\mathcal{D} & := \abs{\hat R - R(\hat F, F^\circ)}.
\end{align*}
Then, we have
\begin{align*}
	R(\hat F, F^\circ) & \leq \hat R + \mathcal{D}.
\end{align*}

First, we evaluate $\mathcal{D}$.
Let $G_\delta$ be a minimal $\delta$-covering of $\mathcal{F}$ in $L^\infty$ norm such that $\abs{G_\delta} = N$.
Then, there exists a random variable $J \in [N]$ such that $\norm{\hat F - F_J}_\infty \leq \delta$.
Define
\begin{align*}
	g_j(X, X') & = \frac{1}{l}\qty{\norm{F_j(X) - F^\circ (X)}_2^2 - \norm{F_j(X') - F^\circ(X')}_2^2},
\end{align*}
and we have
\begin{align}
	\abs{\norm{\hat F(X) - F^\circ (X)}_2^2 - \norm{F_J(X) - F^\circ (X)}_2^2} & = \ev{\hat F(X) - F_J(X), \hat F(X) + F_J(X) - 2 F^\circ(X)}             \notag  \\
	                                                                           & \leq \norm{\hat F(X) - F_J(X)}_2\norm{\hat F(X) + F_J(X) - 2F^\circ(X)}_2 \notag \\
	                                                                           & \leq 4lR\delta.\label{eq:4rd}
\end{align}
For the last inequality, we use $\norm{\hat F(X) - F_J(X)}_2 \leq \sqrt{l}\norm{\hat F(X) - F_J(X)}_\infty \leq \sqrt{l}\delta$ and
$\norm{F^\circ}_\infty \leq R, \norm{F}_\infty \leq R$ for any $F \in \mathcal{F}$.

Let $\tilde X^{(1)}, \dots, \tilde X^{(n)}$ be i.i.d. random variables independent of $(X^{(i)}, Y^{(i)})$.
Then,
\begin{align*}
	R(\hat F, F^\circ) & = \frac{1}{nl}\sum_{i=1}^n \mathbb{E}\qty[\norm{\hat F(\tilde X^{(i)}) - F^\circ(\tilde X^{(i)})}_2^2]
\end{align*}
holds and we have
\begin{align*}
	\mathcal{D} & = \abs{\frac{1}{nl}\sum_{i=1}^n \qty(\mathbb{E}\qty[\norm{\hat F(X^{(i)}) - F^\circ(X^{(i)})}_2^2] - \mathbb{E}\qty[\norm{\hat F(\tilde X^{(i)}) - F^\circ(\tilde X^{(i)})}_2^2])} \\
	            & \leq \frac{1}{nl} \mathbb{E}\qty[\abs{\sum_{i=1}^n\norm{\hat F(X^{(i)}) - F^\circ(X^{(i)})}_2^2 - \norm{\hat F(\tilde X^{(i)}) - F^\circ(\tilde X^{(i)})}_2^2}]                    \\
	            & \leq \frac{1}{n} \mathbb{E}\qty[\abs{\sum_{i=1}^ng_J(X^{(i)}, \tilde X^{(i)})}] + 8R\delta.
\end{align*}
Here, we used Eq.~\eqref{eq:4rd}.
Let $r_i = \max\qty{A, l^{-1/2}\norm{F_j - F^\circ}_2}$ and
\begin{align*}
	T & := \max_j{\sum_{i=1}^n \frac{g_j(X^{(i)}, \tilde X^{(i)})}{r_j}}
\end{align*}
for $A > 0$, which is determined later.
Then, we have
\begin{align}
	\mathcal{D} & \leq \frac{1}{n} \mathbb{E}[r_J T] + 8R\delta,                                        \notag              \\
	            & \leq \frac{1}{n} \sqrt{\mathbb{E}[r_J^2]\mathbb{E}[T^2]} + 8R\delta,                               \notag \\
	            & \leq \frac{1}{2}\mathbb{E}[r_J^2] + \frac{1}{2n^2}\mathbb{E}[T^2] + 8R\delta.\label{eq:mathcal-D}
\end{align}
Here we use Cauchy-Schwarz inequality and the AM-GM inequality.
From the definition of $r_J$ and Eq.~\eqref{eq:4rd}, it holds that
\begin{align}
	\mathbb{E}[r_J^2] & \leq A^2 + \frac{1}{l}\mathbb{E}\qty[\norm{F_J - F^\circ}_2^2]                            \notag \\
	                  & \leq A^2 + \frac{1}{l}\mathbb{E}\qty[\norm{\hat F - F^\circ}_2^2] + 4R\delta.\label{eq:rJ2}
\end{align}
Since $X^{(1)}, \dots, X^{(n)}, \tilde X^{(1)}, \dots, \tilde X^{(n)}$ are independent of each other, we have
\begin{align*}
	\mathbb{V}\qty[\sum_{i=1}^n \frac{g_j(X^{(i)}, \tilde X^{(i)})}{r_j}] & = \sum_{i=1}^n \mathbb{V}\qty[\frac{\norm{F_j(X^{(i)}) - F^\circ( X^{(i)})}_2^2 - \norm{F_j(\tilde X^{(i)}) - F^\circ(\tilde X^{(i)})}_2^2}{r_j}] \\
	                                                                      & \leq \frac{2}{l^2r_j^2}\sum_{i=1}^n \mathbb{E}\qty[\norm{F_j(X^{(i)}) - F^\circ( X^{(i)})}_2^4]                                                   \\
	                                                                      & \leq \frac{8R^2}{lr_j^2}\sum_{i=1}^n \mathbb{E}\qty[\norm{F_j(X^{(i)}) - F^\circ(X^{(i)})}_2^2]                                                   \\
	                                                                      & \leq 8nR^2,                                                                                                                                       \\
	\abs{\frac{g_j(X^{(i)}, \tilde X^{(i)})}{r_j}}                        & \leq \abs{\frac{g_j(X^{(i)}, \tilde X^{(i)})}{r_j}}                                                                                               \\
	                                                                      & \leq \frac{4R^2}{r_j},
\end{align*}
where $\mathbb{V}[\cdot]$ denotes the variance of a random variable.
Using Bernstein's ineqautlity and the union bound, we have, for any $t > 0$,
\begin{align*}
	\Probability(T^2 \geq t) & = \Probability(T\geq \sqrt{t})                                      \\
	                         & \leq 2N\exp{-\frac{t}{2R^2(8n + \frac{4\sqrt{t}}{3r})}}             \\
	                         & \leq 2N\exp{-\frac{t}{32nR^2}} + 2N\exp{-\frac{3r\sqrt{t}}{16R^2}},
\end{align*}
where $r = \min_{j \in [N]}\qty{r_j}$.
Then, for any $t_0 > 0$, we have
\begin{align}
	\mathbb{E}[T^2] & = \int_0^\infty \Probability(T^2 \geq t) \dd{t}                                                                        \notag                    \\
	                & \leq t_0 + \int_{t_0}^\infty \Probability(T^2 \geq t) \dd{t}                                                                \notag               \\
	                & \leq t_0 + 2 N \int_{t_0}^\infty \exp[-\frac{t}{32nR^2}] \dd{t} + 2N \int_{t_0}^\infty \exp[\frac{3r\sqrt{t}}{4R^2}] \dd{t}.\label{eq:integrals}
\end{align}
The integrals in Eq.~\eqref{eq:integrals} can be evaluated as follows:
\begin{align*}
	\int_{t_0}^{\infty}\exp(-\frac{t}{32nR^2})\dd{t} & = \qty[-32R^2 \exp(-\frac{t}{32nR^2})]_{t_0}^\infty                                           \\
	                                                 & = 32nR^2 \exp(-\frac{t_0}{32nR^2}),                                                           \\
	\int_{t_0}^{\infty}\exp(-\frac{3r}{16R^2})\dd{t} & = \qty[-\frac{2(a\sqrt{t} + 1)}{a^2}\exp(-a\sqrt{t})]_{t_0}^\infty\quad(a = \frac{3r}{16R^2}) \\
	                                                 & = \qty(\frac{512R^4}{9r^2} + \frac{32R^2\sqrt{t_0}}{3r})\exp(-\frac{3r\sqrt{t_0}}{16R^2}).
\end{align*}
Let $A = \frac{\sqrt{t_0}}{6n}$. Then, we have $r\geq A = \frac{\sqrt{t_0}}{6n}$ and
\begin{align*}
	\mathbb{E}[T^2] & \leq t_0 + 2N\qty(32nR^2 + 64nR^2 + \frac{2048n^2R^4}{t_0})\exp(-\frac{t_0}{32nR^2}).
\end{align*}
Here, we determine $t_0 = 32nR^2 \log N$. Then, we have
\begin{align}
	\mathbb{E}[T^2] & \leq 32nR^2\qty(\log N + 6 + \frac{4}{\log N}).\label{eq:T2}
\end{align}
Combining~\eqref{eq:mathcal-D},~\eqref{eq:rJ2}, \eqref{eq:T2}, $A^2 = \frac{8R^2\log N}{9n}$, and $\log N \geq 1$, $\mathcal{D}$ can be evaluated as follows:
\begin{align}
	\mathcal{D} & \leq \frac{1}{2}\mathbb{E}[r_J^2] + \frac{1}{2n^2}\mathbb{E}[T^2] + 8R\delta                                                                                               \notag     \\
	            & \leq \frac{1}{2} A^2 + \frac{1}{2} \mathbb{E}\qty[\frac{1}{l}\norm{\hat F - F^\circ}_2^2] + \frac{1}{2n^2}\mathbb{E}[T^2] + 10R\delta                                          \notag \\
	            & \leq \frac{1}{2} A^2 + \frac{1}{2} \mathbb{E}\qty[\frac{1}{l}\norm{\hat F - F^\circ}_2^2] + \frac{16R^2}{n}\qty(\log N + 6 + \frac{4}{\log N}) + 10R\delta            \notag          \\
	            & \leq \frac{1}{2} R(\hat F, F^\circ) + \frac{4R^2}{n}\qty(\frac{37}{9}\log N + 40) + 10R\delta.                                                 \label{eq:mathcal-D-final}
\end{align}

Next, we evaluate $\hat R$. Since $\hat F$ is an empirical risk minimizer, it holds that
\begin{align*}
	\mathbb{E}\qty[\frac{1}{nl} \sum_{i=1}^{n} \norm{\hat F(X^{(i)}) - Y^{(i)}}_2^2] & \leq \mathbb{E}\qty[\frac{1}{nl}\sum_{i=1}^{n}\norm{F(X^{(i)}) - Y^{(i)}}_2^2],
\end{align*}
for any $F \in \mathcal{F}$.
Substituting $Y^{(i)} = F^\circ(X^{(i)}) + \xi^{(i)}$, we have
\begin{align*}
	0 & \leq \mathbb{E}\qty[\frac{1}{nl} \sum_{i=1}^{n} \norm{F(X^{(i)}) - Y^{(i)}}_2^2] - \mathbb{E}\qty[\frac{1}{nl}\sum_{i=1}^{n}\norm{\hat F(X^{(i)}) - Y^{(i)}}_2^2]                                                            \\
	  & = \mathbb{E}\qty[\frac{1}{l}\norm{F(X^{(i)}) - F^\circ(X^{(i)})}_2^2] - \mathbb{E}\qty[\frac{1}{l}\norm{\hat F(X^{(i)}) - F^\circ(X^{(i)})}_2^2] + \frac{2}{nl} \sum_{i=1}^n \mathbb{E}\qty[\ev{\xi^{(i)}, \hat F(X^{(i)})}] \\
	  & = \frac{1}{l}\norm{F - F^\circ}_2^2 + \frac{2}{nl} \sum_{i=1}^n \mathbb{E}\qty[\ev{\xi^{(i)}, \hat F(X^{(i)})}] - \hat R.
\end{align*}
Therefore, we have
\begin{align*}
	\hat R & \leq \frac{1}{l}\norm{F-F^\circ}_2^2 + \frac{2}{nl}\sum_{i=1}^{n} \mathbb{E}\qty[\ev{\xi{(i)}, \hat F(X^{(i)})}].
\end{align*}
For the second term, we have
\begin{align*}
	\frac{2}{nl}\mathbb{E}\qty[\sum_{i=1}^{n}\ev{\xi^{(i)}, \hat F(X^{(i)})}] & = \frac{2}{nl}\mathbb{E}\qty[\sum_{i=1}^{n}\ev{\xi^{(i)}, \hat F(X^{(i)}) - F^\circ(X^{(i)})}]                                                                                          \\
	                                                                          & = \frac{2}{nl}\mathbb{E}\qty[\sum_{i=1}^{n}\ev{\xi^{(i)}, \hat F(X^{(i)}) - F_J(X^{(i)})}] + \frac{2}{nl}\mathbb{E}\qty[\sum_{i=1}^{n}\ev{\xi^{(i)}, F_J(X^{(i)}) - F^\circ(X^{(i)})}].
\end{align*}
By the Cauchy-Schwartz ineqaulity, we have
\begin{align*}
	\frac{2}{nl}\mathbb{E}\qty[\sum_{i=1}^{n}\ev{\xi^{(i)}, \hat F(X^{(i)}) - F_J(X^{(i)})}] & \leq \frac{2}{nl}\mathbb{E}\qty[\qty(\sum_{i=1}^{n}\norm{\xi^{(i)}}_2^2)^{1/2} \qty(\sum_{i=1}^{n} \norm{\hat F(X^{(i)}) - F_J(X^{(i)})}_2^2)^{1/2}] \\
	                                                                                         & \leq \frac{2\delta}{\qty(nl)^{1/2}}\mathbb{E}\qty[\qty(\sum_{i=1}^{n}\norm{\xi^{(i)}}_2^2)^{1/2}]                                                    \\
	                                                                                         & \leq \frac{2\delta}{\qty(nl)^{1/2}}\mathbb{E}\qty[\sum_{i=1}^{n}\norm{\xi^{(i)}}_2^2] ^{1/2}                                                         \\
	                                                                                         & = 2\delta \sigma.
\end{align*}
Define random variables $\varepsilon_1, \dots, \varepsilon_N$ as
\begin{align*}
	\varepsilon_j & := \frac{\sum_{i=1}^{n}\ev{\xi^{(i)}, F_j(X^{(i)}) - F^\circ(X^{(i)})}}{\qty(\sum_{i=1}^{n} \norm{F_j(X^{(i)} - F^\circ(X^{(i)}))}_2^2)^{1/2}}.
\end{align*}
If the denominator is zero, we define $\varepsilon_j = 0$.
Then, we have
\begin{align*}
	\frac{2}{nl}\abs{\mathbb{E}\qty[\sum_{i=1}^{n}\ev{\xi^{(i)}, F_J(X^{(i)}) - F^\circ(X^{(i)})}]} & = \frac{2}{nl}\abs{\mathbb{E}\qty[\qty(\sum_{i=1}^{n} \norm{F_J(X_i) - F^\circ(X_i)}_2^2)^{1/2} \varepsilon_J]}                                   \\
	                                                                                                & \leq \frac{2}{\sqrt{n}l}\mathbb{E}\qty[\frac{1}{n} \sum_{i=1}^{n} \norm{F_J(X_i) - F^\circ(X_i)}_2^2]^{1/2} \mathbb{E}\qty[\varepsilon_J^2]^{1/2} \\
	                                                                                                & \leq \frac{2}{\sqrt{n}}\sqrt{\hat R + 4R\delta} \mathbb{E}\qty[\max_j \varepsilon_j^2]^{1/2}                                                      \\
	                                                                                                & \leq \frac{1}{2}(\hat R + 4R\delta) + \frac{2}{n} \mathbb{E}\qty[\max_j \varepsilon_j^2].
\end{align*}
Since each $\varepsilon_j$ follows $N(0, \sigma^2)$ for given $X^{i}$,
by the same argument as Theorem 7.47 in~\citet{lafferty_concentration_2008}, we have
\begin{align*}
	\mathbb{E}[\max_j \varepsilon_j^2] & \leq 4\sigma^2 \log(\sqrt{2}N)\leq 4\sigma^2 (\log N + 1).
\end{align*}
Therefore, it holds that
\begin{align*}
	\hat R & \leq \frac{1}{l} \norm{F - F^\circ}_2^2 + 2\delta \sigma + \frac{1}{2} (4F\delta + \hat R) + \frac{8}{n}\sigma^2 (\log N + 1)
\end{align*}
and then,
\begin{align}
	\hat R & \leq 2\frac{1}{l} \norm{F - F^\circ}_2^2 + 4(R+\sigma)\delta + \frac{16}{n}\sigma^2 (\log N + 1)\label{eq:hat-R}
\end{align}
holds.

Combining Eq.~\eqref{eq:mathcal-D-final} and~\eqref{eq:hat-R}, we have
\begin{align*}
	R(\hat F, F) & \leq \hat R + \mathcal{D}                                                                                                                                                              \\
	             & \leq \frac{2}{l} \norm{F - F^\circ}_2^2 + 4(R + \sigma)\delta + \frac{16}{n} \sigma^2(\log N + 1) + \frac{1}{2}R(\hat F, F) + \frac{4R^2}{n}\qty(\frac{37}{9}\log N + 40) + 10R\delta,
\end{align*}
and thus,
\begin{align*}
	R(\hat F, F) & \leq \frac{4}{l} \norm{F - F^\circ}_2^2 + 8(R + \sigma)\delta + \frac{32}{n} \sigma^2(\log N + 1) + \frac{8R^2}{n}\qty(\frac{37}{9}\log N + 40) + 20R\delta.
\end{align*}
Since $F$ is arbitrary, it holds that
\begin{align*}
	R(\hat F, F) & \leq 4\inf_{F \in \mathcal{F}}\frac{1}{l} \norm{F - F^\circ}_2^2 + 8(R + \sigma)\delta + \frac{32}{n} \sigma^2(\log N + 1) + \frac{8R^2}{n}\qty(\frac{37}{9}\log N + 40) + 20R\delta,
\end{align*}
which completes the proof.\qed

\section{Proof of Theorem~\ref{thm:covering-number}}\label{sec:proof-covering-number}
For a Transformer $F \in \mathcal{T}(M, U, D, H, L, W, B, S, P)$, let $\theta_F$ be a vector of all the parameters of $F$.
Suppose that $F, \tilde F \in \mathcal{T}(M, U, D, H, L, W, B, S, P)$ satisfies $\norm{\theta_F - \theta_{\tilde F}}_{\infty} \leq \delta$ for $\delta > 0$. That is, for any parameter $\theta$ in $F$, the corresponding parameter $\tilde \theta$ in $\tilde F$ satisfies $\abs{\theta - \tilde \theta} \leq \delta$.
Transformer networks $F$ and $\tilde F$ can be expressed in the form:
\begin{align*}
	F(X) = h_{2M} \circ \dots \circ h_1 \circ (EX + P), \\
	\tilde F(X) = \tilde h_{2M} \circ \cdots \circ \tilde h_1 \circ (\tilde EX + P),
\end{align*}
where $h_i, \tilde h_i \in \Psi(L, W, B, S)$ if $i$ is even, and $h_i, \tilde h_i \in \mathcal{A}(U_{(i+1)/2}, D, H, B)$ if $i$ is odd.
For fixed $X \in \domain$, it holds that
\begin{align}
	& \norm{F(X) - \tilde F(X)}_\infty \leq \norm{h_{2M} \circ \dots h_1(EX + P) - h_{2M} \circ \dots h_1(\tilde EX + P)}_\infty   \notag \\
	                                 & \quad + \sum_{m=1}^{2M}\norm{h_{2M} \circ \dots \circ h_m \circ \tilde h_{m-1} \circ \dots \circ \tilde h_1(\tilde EX + P)
		- h_{2M} \circ \dots \circ h_{m+1} \circ \tilde h_m \circ \dots \circ \tilde h_1(\tilde EX + P)}_\infty.\label{eq:divide}
\end{align}

Since $\norm{P}_{\infty}$ is assumed to be less than $B$, we have $\norm{EX + P}_\infty, \norm{\tilde EX + P}_\infty \leq 2BD$.
By applying Lemma~\ref{lem:norm} repeatedly, we have
\begin{align*}
	\norm{h_m \circ \tilde h_{m-1} \circ \dots \circ \tilde h_1 \circ \enc_P(X)}_\infty & \leq (6HDBW)^{2LM} \cdot 2BD \leq (6HDBW)^{3LM}, \\
	\norm{\tilde h_m \circ \dots \circ \tilde h_1 \circ \enc_P(X)}_\infty               & \leq (6HDBW)^{2LM} \cdot 2BD \leq (6HDBW)^{3LM}.
\end{align*}
In addition, Lemma~\ref{lem:lipschitz} yields that
\begin{align}
	\norm{h_m(X) - h_m(X')}_\infty \leq (6HDMW)^{4L + 6LM}  \norm{X-X'}_\infty \leq (6HDMW)^{10LM}  \norm{X-X'}_\infty \label{eq:lipschitz}
\end{align}
for $\norm{X}_\infty, \norm{X'}_\infty \leq (6HDMW)^{3LM}$.
Therefore, for the first term in Eq.~\eqref{eq:divide}, we have
\begin{align*}
	\norm{h_{2M} \circ \dots h_1(EX + P) - h_{2M} \circ \dots h_1(\tilde EX + P)}_\infty & \leq (6HDBW)^{20LM^2} \norm{EX + P - (\tilde EX + P)}_\infty \\
	                                                                                     & \leq (6HDBW)^{20LM^2} D\delta \leq (6HDBW)^{21LM^2}\delta.
\end{align*}
For any $1 \leq m \leq 2M$, we have
\begin{align*}
	\norm{h_{2M} \circ \dots \circ h_m \circ \dots \tilde h_1 \circ \tilde \enc_P - h_{2M} \circ \dots \circ \tilde h_m \circ \dots \tilde h_1 \circ \tilde \enc_P}_\infty & \leq (6HDMW)^{20M^2L}\norm{h_m(Z) - \tilde h_m(Z)}_\infty,
\end{align*}
where $Z = \tilde h_{m-1} \circ \tilde h_1 \circ \tilde \enc_P(X)$.
Since $\norm{Z} \leq (6HDBW)^{3LM}$, Lemma~\ref{lem:diff} implies that
\begin{align*}
	\norm{h_m(Z) - \tilde h_m(Z)} & \leq (6HDBW)^{4L+9LM} \delta \leq (6HDBW)^{13LM}\delta.
\end{align*}
Thus, we have
\begin{align*}
	\norm{h_{2M} \circ \dots \circ h_m \circ \dots \tilde h_1 \circ \tilde \enc_P - h_{2M} \circ
	\dots \circ \tilde h_m \circ \dots \tilde h_1 \circ \tilde \enc_P}_\infty & \leq (6HDMW)^{20M^2L}(6HDBW)^{13LM}\delta \\
	                                                                          & \leq (6HDMW)^{33M^2L}\delta.
\end{align*}
Then, we have
\begin{align*}
	\norm{F(X) - \hat F(X)}_\infty & \leq (6HDBW)^{21LM^2}\delta + 2M(6HDMW)^{33M^2L}\delta \\
	                               & \leq (6HDMW)^{34M^2L}\delta                            \\
\end{align*}
Here, the number of non-zero components in $\theta_F$ is bounded by $M(S + 3HD^2) + D^2$, where $MS$ for FNN layers, $3MHD^2$ for attention layers, and $Dd \leq D^2$ for an embedding layer.
Therefore, if we fix the sparsity pattern, the covering number is bounded by
\begin{align*}
	\qty(\frac{(6HDMW)^{38M^2L}}{\delta})^{M(S + 3HD^2) + D^2}.
\end{align*}
Since the total number of parameters is bounded by $M(L(W^2+W) + 3HD^2)+D^2 \leq 4M(LW^2 + HD^2)$,
the number of configurations of the sparsity pattern is bounded by
\begin{align*}
	\binom{4M(LW^2 + HD^2)}{M(S + 3HD^2) + D^2} & \leq \qty(4M(LW^2 + HD^2))^{M(S + 3HD^2) + D^2}.
\end{align*}
Therefore, the covering number of Transformer networks is bounded by
\begin{align*}
	\qty(4M(LW^2 + HD^2))^{M(S + 3HD^2) + D^2}	\qty(\frac{(6HDMW)^{34M^2L}}{\delta})^{M(S + 3HD^2) + D^2} & \leq \qty(\frac{(6HDMWL)^{36M^2L}}{\delta})^{M(S + 3HD^2) + D^2},
\end{align*}
which completes the proof.\qed

\section{Proof of Theorem~\ref{thm:estimation-error}}\label{sec:proof-estimation}
From Theorem~\ref{thm:approximation}, there exists a Transformer $\mathcal{T}_R(M, U, D, H, L, W, S, B)$
such that $\norm{\hat F_i - F^\circ_i}_{2, P_X} \lesssim 2^{-T}$ for any $i \in \Z$, where
\begin{align*}
	M      & = 1,                                                          \\
	\log U_1 & \sim T,                                                       \\
	D      & \sim T^{1/\alpha},                                            \\
	H      & \sim T^{1/\alpha},                                            \\
	L      & \sim \max\qty{T^{2/\alpha}, T^2},                             \\
	W      & \sim T^{1/\alpha}2^{T/a^\dagger},                             \\
	S      & \sim T^{2/\alpha} \max\qty{T^{2/\alpha}, T^2}2^{T/a^\dagger}, \\
	\log B & \sim \max\qty{T^{1/\alpha}, T},
\end{align*}
because $\norm{F}_\infty \leq R$.
Therefore, the bias of the estimator $\hat F \in \mathcal{T}_R$ can be evaluated as follows:
\begin{align*}
	\inf_{F' \in \mathcal{T}_R} \frac{1}{r-l+1}\sum_{i=l}^r \norm{F'_i - F^\circ_i}_{2, P_X}^2 & \leq 2^{-2T}.
\end{align*}
From Lemma~\ref{thm:covering-number}, the log covering number $\log \mathcal{N}(\mathcal{T}_R, \delta, \norm{\cdot}_{\infty})$ is evaluated as follows:
\begin{align*}
	\log \mathcal{N}(\mathcal{T}_R, \delta, \norm{\cdot}_{\infty}) & \leq \log \mathcal{N}(\mathcal{T}, \delta, \norm{\cdot}_{\infty}) \lesssim 2^{T/a^\dagger} T^{2/\alpha + 1}\max\qty{T^{4/\alpha}, T^4} \log \frac{T}{\delta}.
\end{align*}
Therefore, from Lemma~\ref{thm:estimation-error}, the ERM estimator $\hat F$ satisfies
\begin{align*}
	R_{l, r}(\hat F, F^\circ) & \lesssim 2^{-2T} + \frac{2^{T/a^\dagger}T^{2/\alpha + 1} \max\qty{T^{4/\alpha}, T^4} \log(T/\delta)}{n} + \delta.
\end{align*}
By letting $T = \frac{a^\dagger}{2a^\dagger + 1} \log n$ and $\delta = 1/n$, we have
\begin{align*}
	R_{l, r}(\hat F, F^\circ) & \lesssim n^{-\frac{2a^\dagger}{2a^\dagger + 1}} (\log n)^{2/\alpha + 2} \max\qty{(\log n)^{4/\alpha}, (\log n)^4}.
\end{align*}
\qed

\section{Proof of Theorem~\ref{thm:estimation-piecewise}}\label{sec:proof-estimation-piecewise}
Let 
\begin{align*}
	M      & = T^{1/\alpha},                                                          \\
	\log U_i & \lesssim \max\qty{\log T, \log V},                                                       \\
	D      & \sim T^{2(\beta + 1)/\alpha}\log V,                                            \\
	H      & \sim (\log T)^{1/\alpha},                                            \\
	L      & \sim \max\qty{T^{2/\alpha}, T^2},                             \\
	W      & \sim T^{1/\alpha}2^{T/a^\dagger},                             \\
	S      & \sim T^{2/\alpha} \max\qty{T^{2/\alpha}, T^2}2^{T/a^\dagger}, \\
	\log B & \sim \max\qty{T^{1/\alpha}, T, \log \log V}.
\end{align*}
Then, by the same argument as in Theorem~\ref{thm:estimation-error}, it holds that
\begin{align*}
	\log \mathcal{N}(\mathcal{T}_R, \delta, \norm{\cdot}_\infty) & \lesssim T^{5/\alpha + 1}\max\qty{T^{4/\alpha}, T^4}2^{T/a^\dagger} (\log V)^2 \log\qty(\frac{T\log V}{\delta}).
\end{align*}
By letting $T = \frac{a^\dagger}{2 a^\dagger + 1} \log n$ and $\delta = 1/n$, we have
\begin{align*}
	R_{l, r}(\hat F, F^\circ) & \lesssim n^{-\frac{2a^\dagger}{2a^\dagger + 1}}(\log n)^{5/\alpha + 2}\max\qty{(\log n)^{4/\alpha}, (\log n)^4} (\log V)^3 .
\end{align*}
\qed
\end{document}